\documentclass{article}

\PassOptionsToPackage{numbers, compress}{natbib}

\usepackage[preprint]{neurips_2026}

\usepackage[utf8]{inputenc} 
\usepackage[T1]{fontenc}    
\usepackage{hyperref}       
\usepackage{url}            
\usepackage{booktabs}       
\usepackage{amsfonts}       
\usepackage{nicefrac}       
\usepackage{microtype}      
\usepackage{xcolor}         

\usepackage{algorithm}
\usepackage{algorithmic}

\usepackage{graphicx}
\usepackage{subcaption}

\usepackage{amsmath}
\usepackage{amssymb}
\usepackage{mathtools}
\usepackage{amsthm}

\usepackage{thmtools}
\definecolor{darkgreen}{rgb}{0.0, 0.5, 0.0}
\usepackage{adjustbox}
\usepackage{wrapfig}
\usepackage{float}
\usepackage{makecell}

\usepackage{thmtools}

\theoremstyle{plain}
\newtheorem{theorem}{Theorem}[section]

\theoremstyle{definition}

\theoremstyle{remark}

\title{Missing Pattern Recognized Diffusion Imputation Model for Missing Not At Random}

\author{
\textbf{Gyuwon Sim}$^{1}$ \quad
\textbf{Sumin Lee}$^{1}$ \quad
\textbf{Heesun Bae}$^{1}$ \quad
\textbf{Byeonghu Na}$^{1}$ \\
\textbf{Doyun Kwon}$^{1}$ \quad
\textbf{Ju-Hee Hwang}$^{2}$ \quad
\textbf{Jae-Young Lim}$^{2}$ \quad
\textbf{Il-Chul Moon}$^{1}$ \\
$^{1}$KAIST \qquad
$^{2}$Seoul National University \\
\texttt{
\{gkwlaks4886,
sumlee,
cat2507,
byeonghu.na,
dy.kwon,
icmoon\}@kaist.ac.kr
}\\
\texttt{
\{wngml30777,
drlim1\}@snu.ac.kr
}
}

\begin{document}

\maketitle
\vspace{-0.4cm}

\begin{abstract}
Missing data frequently arises across diverse domains, including time-series and image domains. In the real world, missing occurrences often depend on the unobservable values themselves, which are referred to as Missing Not at Random (MNAR).
In this work, we introduce the Missing \textbf{P}attern \textbf{R}ecognized \textbf{D}iffusion \textbf{I}mputation \textbf{M}odel (\textbf{PRDIM}), a novel framework that explicitly captures the missing pattern and precisely imputes unobserved values. PRDIM iteratively maximizes the likelihood of the joint distribution for observed values and missing mask under an Expectation-Maximization (EM) algorithm. In this sense, we first employ a pattern recognizer, which approximates the underlying missing pattern and provides guidance during every inference toward more plausible imputations with respect to the missing information. Through extensive experiments, we demonstrate that PRDIM consistently achieves strong imputation performance under MNAR settings across multiple data modalities.
\end{abstract}

\addtocontents{toc}{\protect\setcounter{tocdepth}{0}}

\section{Introduction}
\label{main:intro}

\begin{wrapfigure}{r}{0.45\linewidth}
    \centering
    \vspace{-2.0cm}
    \includegraphics[width=1.0\linewidth]{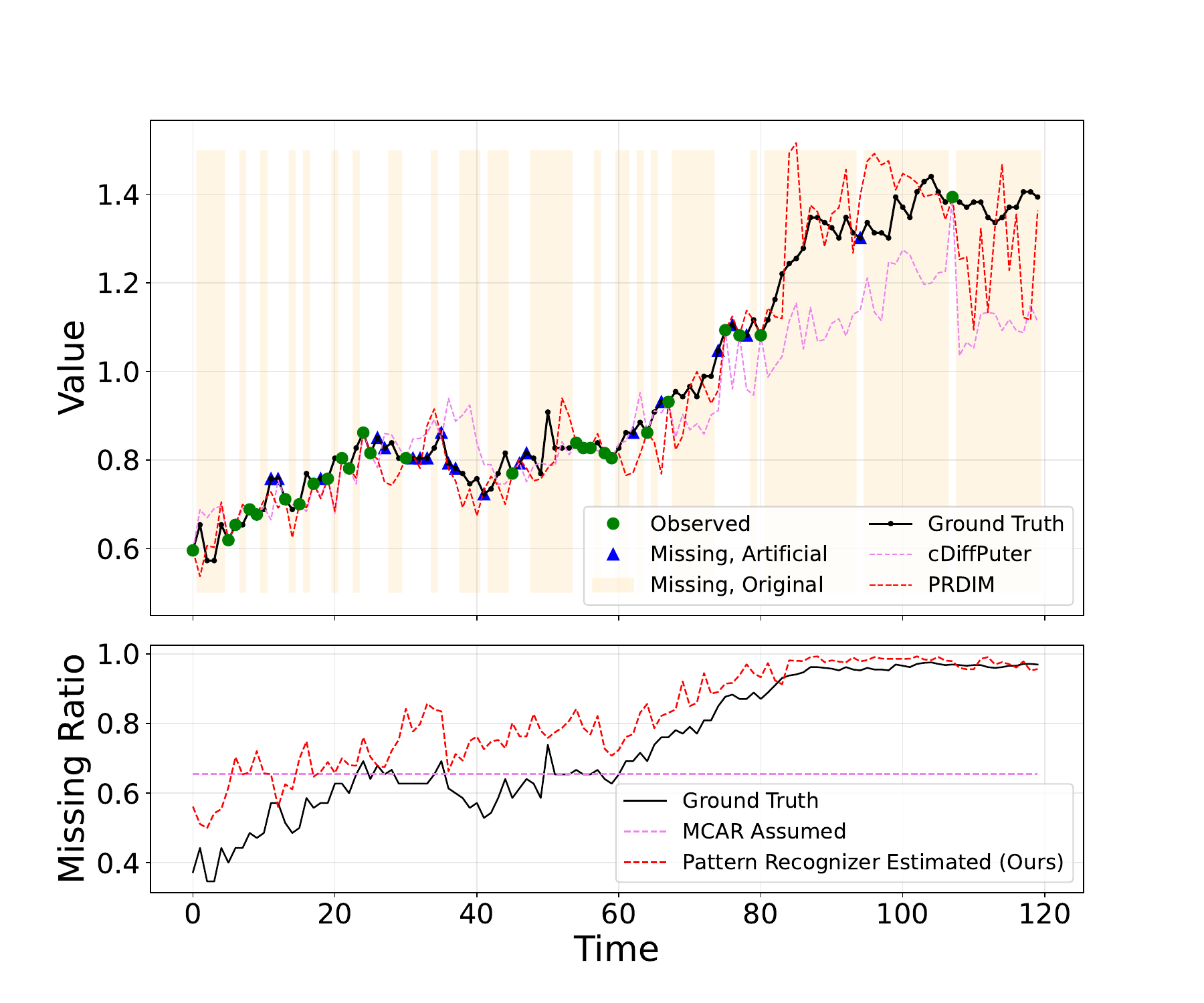}
    \vspace{-0.8cm}
    \caption{(Top) Comparison of imputation performance on original missing (\textcolor{orange}{orange region}) versus artificial missing entries (\textcolor{blue}{blue dots}) under observed values (\textcolor{darkgreen}{green dots}). (Bottom) Estimated missing ratio by pattern recognizer (\textcolor{red}{red curve}) regard to true missing ratio.}
    \label{fig:artf_orig}
    \captionof{table}{Numerical results; imputing original missing presents a more challenging task.}
    \resizebox{1.0\linewidth}{!}{
    \begin{tabular}{l|ccc|ccc}
    \toprule
     & \multicolumn{3}{c|}{Artificial missing entries} 
     & \multicolumn{3}{c}{Original missing entries} \\
    \cmidrule(lr){2-4}\cmidrule(lr){5-7}
    Method & RMSE($\downarrow$) & MAE($\downarrow$) & MRE($\downarrow$) & RMSE($\downarrow$) & MAE($\downarrow$) & MRE($\downarrow$) \\
    \midrule
    cDiffPuter~\citep{zhang2025diffputer} & 0.230 & 0.146 & 21.572 & 1.209 & 0.782 & 46.188 \\
    PRDIM       & \textbf{0.201} & \textbf{0.124} & \textbf{18.310} & \textbf{1.057} & \textbf{0.663} & \textbf{39.156} \\
    \bottomrule
    \end{tabular}}
    \label{tab:intro_tab}
    \vspace{-1.0cm}
\end{wrapfigure}

Missing data imputation aims to recover missing values from partially observed incomplete datasets, and the imputation algorithms serve as a fundamental component in many domains, including healthcare~\citep{goldberger2000physiobank}, traffic~\citep{li2017diffusion}, and image domain~\citep{xiao2017fashion}. Formally, the imputation goal is to accurately estimate missing values conditioned on observed values. Many recent imputation models assume the missing to be random or to depend on observed values, which are referred to as Missing Completely at Random (MCAR) and Missing at Random (MAR) respectively~\citep{little1987statistical,schafer1997analysis}. However, in real-world scenarios, we have missing values because of some underlying causes such as health deterioration or mortality~\citep{carreras2021missing}; so the missing tends to have patterns in its occurrences. Therefore, such patterned missing cases are referred to as \textit{Missing Not at Random} (MNAR), which is considered to be a more realistic situation. Therefore, developing imputation methods that can reliably handle MNAR mechanisms is crucial for achieving practical performance in real-world applications.

To impute high-dimensional data, recent studies have increasingly employed diffusion models~\citep{ho2020denoising, song2021scorebased}, which provide a powerful framework for capturing complex data distributions under incomplete settings. Existing methods are generally built upon conditional diffusion frameworks~\citep{tashiro2021csdi, zhou2024mtsci}, in which target entries are artificially masked and treated as missing values for training and evaluation. In such standard frameworks, we recognize potential weaknesses in the model objective, the training algorithm, and the evaluation procedure. Consequently, we first propose a diffusion-based imputation model with \textit{missing pattern} estimation to better recover missing values under the MNAR setting.

\section{Preliminaries}
\label{main:prelim}

\subsection{Missing Data Imputation for MNAR}
\label{prelim:imputaion}

As discussed earlier, accurate estimation of the missing values requires conditioning on both the observed variables $X^{\text{obs}}$ and the missing mask $M$. 
Let $X=[X_{d}] \in \mathbb{R}^{D}$ be a complete instance with $D$ dimensions, and $M \in \{0,1\}^{D}$ the missing indicator where $M_{d}=1$ if $X_{d}$ is observed, otherwise 0. 
We denote the observed and missing subsets by $X^{\text{obs}} = X \odot M$ and $X^{\text{mis}} = X \odot (\mathbf{1}-M)$, respectively.
Given access to the underlying complete data $X$ and its mask $M$, the ultimate goal is to recover the joint distribution of the observed data, missing mask, and missing data:
\begin{align}
    \max_{\theta,\phi}\mathbb{E}_{p_{\text{data}}({X},{M})}\big[\log p_{\theta,\phi}(X^{\text{obs}}, X^{\text{mis}}, M)\big]
\end{align}
Here, $\theta$ and $\phi$ are the parameters of the conditional distribution, which will be discussed in their individual roles on describing distributions of ${X}$ and ${M}|{X}$, respectively.

In the scenario of missing value imputation on incomplete data, the inference becomes maximizing the joint distribution of only two random variables ${X}^\text{obs}$ and ${M}$ because of an unobservable property of ${X}^\text{mis}$. This likelihood maximization is formulated with the expectation on ${X}^\text{mis}$, which eventually turns the problem into the Expectation-Maximization (EM) framework.
\begin{align}
    p_{\theta,\phi}({X}^{\text{obs}},{M})&=\int_{{X}^{\text{mis}}} p_{\theta,\phi}({X}^{\text{obs}},{X}^{\text{mis}},{M})d{X}^{\text{mis}}=\int_{{X}^{\text{mis}}} p_\theta(X)\,p_\phi({M}|X)d{X}^{\text{mis}}
\end{align}
Consequently, principled inference requires joint modeling of $p_{\theta}(X)$ and $p_{\phi}(M|X)$ and optimizing a suitable lower bound of the EM algorithm. Now, the focus becomes how to infer the two distributions: $p_{\theta}({X})$ and $p_{\phi}({M}|{X})$. Particularly, the inference requirement on $p_{\phi}({M}|{X})$ becomes different depending on the assumed missing mechanism across different scenarios.

\paragraph{Missing Mechanisms}

Since the distribution $p_{\theta,\phi}(X, M)$ can be decomposed as $p_{\theta}(X)p_{\phi}(M|X)$, it becomes necessary to explicitly model the generation process of the mask variable ${M}$. Under the standard taxonomy of missing data mechanisms~\citep{little1987statistical}, the missing process is characterized by the conditional distribution $p_\phi(M|X)$:
\begin{align}
\label{eq:taxonomy}
p_\phi(M|X) = 
\begin{cases} 
p_\phi(M) & \text{(MCAR)} \\
p_\phi(M|X^\text{obs}) & \text{(MAR)} \\
p_\phi(M|X^\text{obs}, X^\text{mis}) & \text{(MNAR)}
\end{cases}
\end{align}
Under MCAR/MAR, the likelihood $p_{\theta,\phi}({X}^{\text{obs}},{M})$ which is proportional to $p_\theta({X}^\text{obs})$ can be learned while ignoring missing process \citep{mattei2019miwae} with respect to the missing variable $X^{\text{mis}}$. In contrast, the mask also depends on unobserved values under MNAR; makes the missing process non-ignorable \citep{ipsen2020not}. While more realistic scenario comes from MNAR, this new requirement of inferring $p_\phi({M}|{X})$ renders the imputation models under MAR and MCAR to be ineffective and needs to be overhauled significantly.

\paragraph{Missing Model}

Some previous works have incorporated the missing mask $M$ as a supervised learning target. 
Originated from Generative Adversarial Network~\citep{goodfellow2014generative}, GAIN \citep{yoon2018gain} first proposed that a discriminator $D_\phi(X)$ can be trained to approximate $p(M|X)$, with the objective of optimal discriminator $D_{\phi^*}$ towards the specific missing ratio. 
This design ensures that, when the missing mechanism is independent of the data, the discriminator converges to a uniform prediction over missing and observed variables. 

Modified from Variational Autoencoder~\citep{kingma2013auto}, not-MIWAE \citep{ipsen2020not} extended the discriminator framework to MNAR by directly modeling $p(M|X^{\text{obs}}, X^{\text{mis}})$. Their approach demonstrated that the discriminator loss can be integrated into a variational objective, allowing optimization via minimization of an additional term in the ELBO.

Under the shared assumption adopted by GAIN and not-MIWAE, each missing value indicator follows Bernoulli distribution conditioned on the entire data. The loss of the missing model can be formulated as a binary cross-entropy (BCE) objective:
\vspace{0.1cm}
\begin{equation}
\small
\mathcal{L}(M,X,D_\phi) 
=-M^{\top}\log D_{\phi}(X) - (\mathbf{1}-M)^{\top}\log \big(\mathbf{1} - D_{\phi}(X)\big)
\end{equation}
\vspace{0.1cm}\noindent
where optimal $D_{\phi^*}(X)$ predicts the probability whether each entry would be observed or not (\textit{i.e.} $D_{\phi^*}(X)=[p(M_d=1|X)]\in\mathbb{R}^D$). 
This formulation provides a flexible way to incorporate the missing mechanism into generative imputation models. In this sense, we hereafter refer to the discriminator $D_{\phi}$ as the \textit{pattern recognizer}. Correspondingly, the loss $\mathcal{L}$ will be denoted as $\mathcal{L}_\text{PR}$.

\vspace{-0.2cm}
\subsection{Diffusion Models}
\label{prelim:diffusion}
\vspace{-0.1cm}

Diffusion models have achieved state-of-the-art generation performances in multiple domains, including vision \citep{esser2024scaling}, audio \citep{kong2020diffwave}, graphs \citep{jo2022score}, and time-series \citep{coletta2023constrained}.
They learn a data distribution by inverting a Markovian noising process. The forward process gradually corrupts a clean sample $X_0 \sim q(X_0)$ with Gaussian noise according to a prescribed variance schedule $\{\beta_t\}_{t=1}^T$:
\begin{align}
    q(X_{1:T}|X_0)\coloneqq \Pi_{t=1}^{T}q(X_t|X_{t-1})\text{ where }q(X_t|X_{t-1}) = \mathcal{N} (\sqrt{\alpha_t}X_{t-1}, (1-\alpha_t)\mathbf{I})
\end{align}
Here, $\alpha_t \coloneqq 1-\beta_t$ and $\bar{\alpha}_t \coloneqq \Pi_{s=1}^t \alpha_s$. This yields the closed form to sample $X_t$ at time $t$ given $X_0$:
\vspace{0.1cm}
\begin{align}
    q(X_t|X_0) = \mathcal{N} (\sqrt{\bar{\alpha}_t} X_0, (1-\bar{\alpha}_t) \mathbf{I})
\label{eq:gaussian_trans}    
\end{align}
\vspace{0.1cm}\noindent
The reverse process is modeled as a learned Markov chain that gradually removes noise:
\begin{align}
    p_\theta(X_{0:T})\coloneqq \Pi_{t=1}^{T} p(X_T)p_\theta(X_{t-1}|X_{t}) 
\end{align}
where $p(X_T)$ is a standard Gaussian prior, $\theta$ is the learnable parameter of a diffusion model.

Diffusion models are trained by maximizing a variational lower bound on the log-likelihood:
\vspace{0.1cm}
\begin{align}
    \mathbb{E}_{X_0 \sim q(X_0)} \left[\log p_\theta (X_0)\right]\geq \mathbb{E}_{X_0 \sim q(X_0),{X}_{1:T}\sim q({X}_{1:T}|{X_0})} \left [ \log\frac{p_\theta({{X}_{0:T}})}{q({{X}_{1:T}}|{X_0})}\right]
\end{align}
\vspace{0.1cm}\noindent
This objective could be reduced as the data reconstruction~\citep{karras2022elucidating}, the noise prediction~\citep{ho2020denoising}, or the score matching objective~\citep{song2021scorebased}.
At inference time, sampling proceeds by drawing $X_T \sim p(X_T)$ and iteratively applying the learned reverse transitions $p_\theta (X_{t-1} | X_t)$ to obtain a clean sample $X_0$.

We present the entire methodology of this work from the perspective of data reconstruction (\textit{i.e.} $X_0$ prediction). Under this view, the diffusion model is trained to reconstruct the clean data $X_0$ from noisy samples $X_t$ across timesteps. Accordingly, the diffusion loss function for a single sample $X_0$ is defined as (Refer to Equation 2 of EDM~\cite{karras2022elucidating}):
\vspace{0.1cm}
\begin{equation}
\mathcal{L}_\text{diff}(X_0, X_t, t, f_\theta) 
= \lambda(t)\| f_\theta(X_t, t) - X_0 \|_2^2
\end{equation}
\vspace{0.1cm}\noindent
where $t\sim\mathcal{U}(0,T),\,\,\lambda(t)=1$ in our experiment, $f_\theta$ is the $X_0$ prediction network parameterized by $\theta$, 
and $X_t$ follows Equation~\ref{eq:gaussian_trans}.

\vspace{-0.2cm}
\section{Methodology}
\label{main:method}
\vspace{-0.2cm}

We decompose PRDIM into two major components: (i) diffusion backbone pre-training and (ii) missing model training with joint distribution fine-tuning. Each component contains practical implementation details that stabilize training and enhance imputation performance. We first present an overview of PRDIM in Section~\ref{mtd:overview}, followed by theoretical analyses in subsequent sections.

\vspace{-0.2cm}
\subsection{Pre-imputation and EM algorithm}
\label{mtd:overview}
\vspace{-0.1cm}

\begin{figure*}[t]
\centering
    \centering
    \vspace{0.2cm}
    \includegraphics[width=0.9\linewidth]{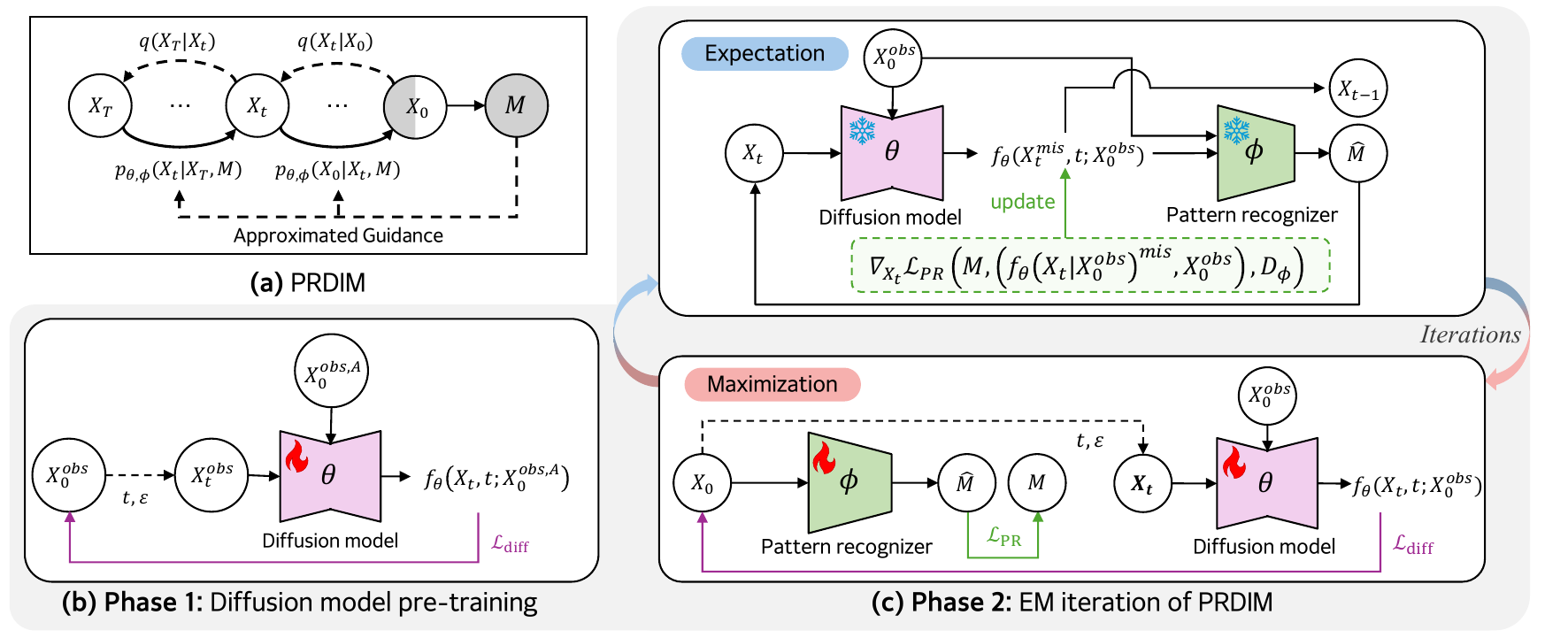}
    \caption{Overall training procedure of PRDIM; (a) Graphical model of PRDIM, (b) Diffusion model pre-training, (c) EM iteration for diffusion model fine-tuning and pattern recognizer training. In expectation step, PRDIM progressively denoising the sample with additional guidance of pattern recognizer (Top). In maximization step, diffusion model and pattern recognizer are trained independently based on $X_0^\text{obs}, X_0^\text{mis}$ which generated in the previous expectation step (Bottom).}
    \label{fig:PRDIM_main_fig}
    \vspace{-0.4cm}
\end{figure*}

Figure~\ref{fig:PRDIM_main_fig} illustrates the graphical model and overall training procedure of PRDIM. The framework consists of two complementary phases: a diffusion-based pre-imputation stage (Phase 1) and an EM iteration stage (Phase 2). The combination of these two phases enables PRDIM to learn both the data distribution and the missing pattern in a principled manner.

\vspace{-0.2cm}
\paragraph{Phase 1: Diffusion model Pre-training and Pre-imputation.}
Direct optimization of the joint distribution $p_\theta(X_0^{\text{obs}}, X_0^{\text{mis}})$ often suffers from instability and overfitting when missing entries are simply set to zero. Phase 1 of PRDIM pre-trains a diffusion backbone under a conditional diffusion framework~\cite{tashiro2021csdi} enabling the model to capture plausible data distributions from observed variables. For generic imputation tasks, we extend the diffusion target up to conditional input which is proposed at \cite{du2023saits} as Observed Reconstruction Task, which demonstrates joint distribution learning under a conditional modeling. Therefore, to train the diffusion model for pre-imputation, we need to artificially select missing entries.
Unlike CSDI~\cite{tashiro2021csdi}, which adopts artificial masking under the MCAR assumption, we introduce an adjacent target masking scheme, where artificial missing entries located near original missing values are randomly selected to exploit potential correlations.
In multivariate time-series data, the artificial missing values are chosen only across the temporal axis. In image data, they are selected as the top, bottom, left, or right neighboring missing pixels.
All subsequent experimental results are conducted under this adjacent target masking setup. (except for tabular experiment, Table~\ref{tab:tabular_imputation}.)
We denote this artificial missing mask as $A$ and define the corresponding subset $X^{\text{obs},A}\coloneqq X^\text{obs}\odot A$. Through this strategy, the diffusion backbone learns to reconstruct plausible imputations while being robust to any missing pattern. Then, the diffusion loss objective can be rewritten as follows:
\begin{equation}
\begin{aligned}
\mathcal{L}_{\text{diff}}(X_0, X_t, t, f_\theta)=\Vert \big( f_\theta(X_t,t;X_0^{\text{obs},A}) - X_0\big)\odot M \Vert_2^2
\end{aligned}
\end{equation}

where $f_\theta(X_0, X_t, t;X_0^{\text{obs},A})\odot M$ is the subset of $X_0$ prediction corresponds to observed entries at timestep $t$ and $\odot$ is element-wise product. Recent studies have shown that plausible data distributions can be effectively estimated from artificially masked data, as evidenced by \cite{he2022masked, peebles2023scalable}. Furthermore, we demonstrate the performance gain across adjacent target masking and different ratio of MCAR masking schemes in Table~\ref{tab:stock_ablation}. 

\paragraph{Phase 2: EM Iteration of PRDIM.}
After diffusion pre-training, PRDIM proceeds with the EM algorithm phase. Whereas DiffPuter~\cite{zhang2025diffputer} trains only a joint diffusion model, PRDIM trains both diffusion model $\theta$ and pattern recognizer $\phi$ simultaneously to enhance the imputation performance. 

In the maximization step, the diffusion model $\theta$ is updated to capture the full joint distribution $p_\theta(X_0^{\text{obs}}, X_0^{\text{mis}})$, while the pattern recognizer $D_\phi(X_0)$ is trained to discriminate the mask variable $M$ as a supervised learning target. 
\begin{align}
    &\mathcal{L}_\text{diff}(X_0, X_t, t, f_\theta) =\big\| f_\theta(X_t, t;X_0^{\text{obs}}) - X_0 \big\|_2^2 \\
    &\mathcal{L}_\text{PR}(M,X_0, D_\phi)=-M^{\top}\log D_{\phi}(X_0) - (\mathbf{1}-M)^{\top}\log \big(\mathbf{1} - D_{\phi}(X_0)\big) 
\end{align}
This enables the model to explicitly incorporate the missing pattern. Section \ref{sec:elbo} provides the details of maximization objective.

In the expectation step, the diffusion model generates $X^{\text{mis}}$ conditioned on $M$ and $X^{\text{obs}}$, while the pattern recognizer provides an additional approximated guidance signal that biases the generation toward imputations consistent with the estimated missing patterns. Importantly, during the early iterations, it is acceptable to use a randomly initialized pattern recognizer as guidance. Since such a recognizer has no discriminative ability, the guidance provides a degenerate signal toward a near-zero vector, yielding a neutral effect on the generation process~\citep{kim2022refining}. Corresponding experiment results are reported in Appendix~\ref{pr_analysis:intermediate_pr}.

Furthermore, while DiffPuter adopts a soft EM strategy, our framework employs a hard EM variant to enhance the exploration ability to generate $X_0^\text{mis}$ distribution, which has been theoretically justified within the prior EM literature~\cite{samdani2012unified}. (See Appendix~\ref{app:soft_hard} for a detailed description.) The EM steps under the $X_0$ prediction parameterization are summarized in Algorithm~\ref{alg:exp} and Algorithm~\ref{alg:max}.

\subsection{ELBO objective of MNAR on Diffusion Model Framework}
\label{sec:elbo}
To address the MNAR problem, we formulate the evidence lower bound (ELBO) of the log-likelihood $\log{p_{\theta,\phi}(X_0^{\text{obs}},M)}$ under a diffusion framework. 
For the true data ${X}_0 = (X_0^{\text{obs}}, X_0^{\text{mis}})$, we derive the ELBO of $\log{p_{\theta,\phi}(X_0^{\text{obs}},{M})}$ as follows:

\begin{restatable}{proposition}{propelbo}
\label{prop:elbo}
Given the graphical model in Figure~\ref{fig:PRDIM_main_fig}, assume that the forward process and missing mechanism satisfy the following conditional independencies: $q(X_{1:T}|X_0,M)=q(X_{1:T}|X_0)$ and $p_\phi(M|X_0,X_{1:T})=p_\phi(M|X_0)$.

Then, the ELBO of joint log-likelihood objective of the observed data and mask can be expressed as
\begin{align}
\log p_{\theta,\phi}(X_0^{\text{obs}}, M) &\geq\mathbb{E}_{X_{1:T}} \Big[\log \frac{p_\theta(X_T)\prod_{t=1}^{T} p_{\theta}(X_{t-1}|X_t)}{\prod_{t=1}^{T} q(X_t | X_{t-1})} \Big] \label{eq:max}+\mathbb{E}_{X_0^{\text{mis}}}\Big[\log p_{\phi}(M | X_0)\Big]\\
&+\mathbb{H}(q(X_0^\text{mis}|X_0^\text{obs},M)) \nonumber
\end{align}
\end{restatable}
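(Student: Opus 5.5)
We would prove the bound by applying Jensen's inequality once, jointly over the missing coordinates and the whole forward trajectory, with a factorized variational posterior. Write the marginal as
\begin{align*}
p_{\theta,\phi}(X_0^{\text{obs}},M)=\int p_\theta(X_{0:T})\,p_\phi(M|X_0,X_{1:T})\,dX_0^{\text{mis}}\,dX_{1:T},
\end{align*}
where $p_\theta(X_{0:T})=p_\theta(X_T)\prod_{t=1}^T p_\theta(X_{t-1}|X_t)$ is the reverse chain from the graphical model in Figure~\ref{fig:PRDIM_main_fig}. By the second assumption, $p_\phi(M|X_0,X_{1:T})=p_\phi(M|X_0)$.

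We then introduce the variational distribution
\begin{align*}
Q(X_0^{\text{mis}},X_{1:T})=q(X_0^{\text{mis}}|X_0^{\text{obs}},M)\,q(X_{1:T}|X_0,M).
\end{align*}
By the first assumption, the second factor equals $q(X_{1:T}|X_0)=\prod_{t=1}^T q(X_t|X_{t-1})$, which is the Markov forward process. Multiplying and dividing the integrand by $Q$ and applying Jensen's inequality to $\log$ gives
\begin{align*}
\log p_{\theta,\phi}(X_0^{\text{obs}},M)\ \ge\ \mathbb{E}_{Q}\Big[\log\frac{p_\theta(X_{0:T})}{\prod_{t=1}^T q(X_t|X_{t-1})}+\log p_\phi(M|X_0)-\log q(X_0^{\text{mis}}|X_0^{\text{obs}},M)\Big].
\end{align*}

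Next we split the expectation into three terms:
\begin{itemize}
\item The first term is the standard diffusion ELBO ratio, with expectation over $X_{1:T}$ under the forward process (and over $X_0^{\text{mis}}$ under $q$, which the statement's compact notation $\mathbb{E}_{X_{1:T}}$ absorbs).
\item The second term depends only on $X_0$, so the expectation over $X_{1:T}$ integrates out to $1$. This leaves $\mathbb{E}_{X_0^{\text{mis}}\sim q}[\log p_\phi(M|X_0)]$.
\item The third term is $-\mathbb{E}_q[\log q(X_0^{\text{mis}}|X_0^{\text{obs}},M)]$, which by definition is the entropy $\mathbb{H}(q(X_0^{\text{mis}}|X_0^{\text{obs}},M))$.
\end{itemize}
The numerator $p_\theta(X_{0:T})$ agrees with the expression in the statement, with $X_0$ assembled from $X_0^{\text{obs}}$ and the sampled $X_0^{\text{mis}}$.

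The main obstacle is bookkeeping rather than analysis. We must check that the two conditional independences are exactly what make $Q$ factor into a tractable posterior over $X_0^{\text{mis}}$ times the fixed forward kernel, and that $p_\phi(M|\cdot)$ loses its dependence on the noisy latents. We must also state clearly that in the first term the expectation is taken jointly over $X_0^{\text{mis}}\sim q$ and $X_{1:T}\sim q(\cdot|X_0)$. Equality holds when $Q$ equals the true posterior, which justifies the E-step and M-step interpretation that follows.
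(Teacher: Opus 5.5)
Your proof is correct and follows essentially the same route as the paper: a single Jensen step with the factorized variational distribution $q(X_0^{\text{mis}}|X_0^{\text{obs}},M)\,q(X_{1:T}|X_0)$, which splits into the diffusion ELBO, the expected mask log-likelihood, and the entropy term. The only cosmetic difference is that the paper first derives the bound for a generic latent $Z$ and then substitutes $Z=X_{1:T}$, whereas you work with the diffusion trajectory directly and state explicitly where the two conditional-independence assumptions are used.
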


where $\mathbb{H}(q(x)):=-\int_{x\sim q} q\log{q}dx$. The proof is given in Appendix~\ref{app:prf1}. This formulation explicitly incorporates the pattern recognizer into the diffusion-based imputation objective under MNAR. Unlike not-MIWAE~\cite{ipsen2020not}, the iterative forward and reverse processes inherent to diffusion models render direct ELBO optimization intractable.

Thus, we utilize the above ELBO in the EM framework, so that the generation of $X^\text{mis}$ becomes an expectation step. In the maximization step, we simultaneously train the missing model $\phi$ for the missing mask ${M}$ and the diffusion model $\theta$ that estimates the joint probability of ${X}^{\text{obs}}, {X}^{\text{mis}}$ by the Equation \ref{eq:max}.

Based on this EM formulation, alternating between the E step and M step guarantees a monotonic increase in the log-likelihood of the observed data and the missing mask, as formally stated in the following Corollary:

\begin{restatable}[EM Monotonicity]{corollary}{cormonotonic}
\label{cor:monotonic}
Under the same assumptions in Proposition~\ref{prop:elbo} with the idealized EM setting, the sequence of parameters $(\theta^{(k)}, \phi^{(k)})$ generated by the proposed EM algorithm monotonically increases the joint log-likelihood of the observed variables at each iteration $k$:
\begin{align}
    \log p_{\theta^{(k+1)}, \phi^{(k+1)}}(X_0^{\text{obs}}, M) \geq \log p_{\theta^{(k)}, \phi^{(k)}}(X_0^{\text{obs}}, M)
\end{align}
\end{restatable}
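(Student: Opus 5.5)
The plan is to run the standard EM monotonicity argument, using the ELBO of Proposition~\ref{prop:elbo} with a free variational distribution $q$ over the latents $(X_0^{\text{mis}}, X_{1:T})$. Write $\mathcal{F}(q;\theta,\phi)$ for the right-hand side of the bound, with the expectation over $X_0^{\text{mis}}$ taken under $q(X_0^{\text{mis}}\mid X_0^{\text{obs}},M)$ and $X_{1:T}\sim q(X_{1:T}\mid X_0)$. Re-assembling the terms in the proof of Proposition~\ref{prop:elbo}, this is the usual decomposition
\begin{align}
\log p_{\theta,\phi}(X_0^{\text{obs}},M) = \mathcal{F}(q;\theta,\phi) + \mathrm{KL}\big(q(X_0^{\text{mis}},X_{1:T}\mid X_0^{\text{obs}},M)\,\|\,p_{\theta,\phi}(X_0^{\text{mis}},X_{1:T}\mid X_0^{\text{obs}},M)\big). \nonumber
\end{align}
The conditional independencies $q(X_{1:T}\mid X_0,M)=q(X_{1:T}\mid X_0)$ and $p_\phi(M\mid X_0,X_{1:T})=p_\phi(M\mid X_0)$ are exactly what let the joint factor as $p_\theta(X_{0:T})\,p_\phi(M\mid X_0)$. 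That factorization is what makes the log-ratio split into the diffusion term, the pattern-recognizer term and the entropy term.

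The \emph{idealized EM setting} will be made precise with two assumptions.
\begin{itemize}
\item[(i)] The E-step chooses $q^{(k)}$ equal to the exact posterior under $(\theta^{(k)},\phi^{(k)})$. The KL term then vanishes, so $\mathcal{F}(q^{(k)};\theta^{(k)},\phi^{(k)})=\log p_{\theta^{(k)},\phi^{(k)}}(X_0^{\text{obs}},M)$.
\item[(ii)] The M-step returns $(\theta^{(k+1)},\phi^{(k+1)})$ with $\mathcal{F}(q^{(k)};\theta^{(k+1)},\phi^{(k+1)})\ge \mathcal{F}(q^{(k)};\theta^{(k)},\phi^{(k)})$, for instance by exact maximization.
\end{itemize}
Assumption (ii) is natural here. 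The entropy term does not depend on the parameters, and the remaining objective separates into a $\theta$-part (the diffusion ELBO, trained by $\mathcal{L}_\text{diff}$) and a $\phi$-part (the expected $\log p_\phi(M\mid X_0)$, trained by $\mathcal{L}_\text{PR}$). Improving each part independently therefore improves their sum, which matches the decoupled M-step of Algorithm~\ref{alg:max}.

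With these in place the chain is
\begin{align}
\log p_{\theta^{(k+1)},\phi^{(k+1)}}(X_0^{\text{obs}},M) \ge \mathcal{F}(q^{(k)};\theta^{(k+1)},\phi^{(k+1)}) \ge \mathcal{F}(q^{(k)};\theta^{(k)},\phi^{(k)}) = \log p_{\theta^{(k)},\phi^{(k)}}(X_0^{\text{obs}},M). \nonumber
\end{align}
The first inequality is Proposition~\ref{prop:elbo}, which holds for any $q$. The second is the M-step, and the equality is the tightness from the E-step.

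The main obstacle is the tightness in step (i). In the diffusion bound the forward kernel $q(X_{1:T}\mid X_0)$ is fixed, not free. The bound therefore equals the log-likelihood only if the reverse model's posterior over $X_{1:T}$ coincides with the forward process, which holds only when $p_\theta$ is the exact time-reversal.

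I would handle this in one of two ways. The first is to absorb the mismatch into the definition of the idealized setting, by assuming the diffusion model is expressive enough that the gap is zero. The second is to treat the fixed diffusion gap as a constant that cancels between consecutive iterations. I will present the first option, stating explicitly that the idealized setting means the E-step draws $X_0^{\text{mis}}$ from the exact posterior $p_{\theta^{(k)},\phi^{(k)}}(X_0^{\text{mis}}\mid X_0^{\text{obs}},M)$ and that the variational bound is tight.

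I will also note that the hard-EM variant used in practice is covered only through this idealization, not literally.
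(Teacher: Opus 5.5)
Your proposal is correct and follows the paper's proof essentially step for step: take $Z=(X_0^{\text{mis}},X_{1:T})$ as latent, set $q$ to the exact posterior under $(\theta^{(k)},\phi^{(k)})$ so the bound is tight, let the M-step not decrease the bound, and chain $\log p^{(k+1)}\ge\mathcal{F}(q;\theta^{(k+1)},\phi^{(k+1)})\ge\mathcal{F}(q;\theta^{(k)},\phi^{(k)})=\log p^{(k)}$. Your observation that tightness conflicts with the fixed forward kernel is a genuine point the paper glosses over (it writes $q$ with $\prod_t q(X_t\mid X_{t-1})$ and then sets $q$ to the full posterior over $X_{1:T}$), and your chosen resolution is the right one; the alternative you mention would fail, because the gap $\mathbb{E}\,\mathrm{KL}\big(q(X_{1:T}\mid X_0)\,\|\,p_\theta(X_{1:T}\mid X_0)\big)$ depends on $\theta$ and does not cancel across iterations.
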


The theoretical justification follows directly from the monotonicity theorem of EM~\cite{dempster1977maximum} (see Theorem 1 of Section 3). We provide detailed proof in Appendix~\ref{app:cor1}.

\subsection{Find Best $X_0^{\text{mis}}$ with Approximated Guidance of Pattern Recognizer}
\label{sec:estep}

After training both the missing model and the diffusion model, the expectation step allows us to replace $q(X_0^{\text{mis}} | X_0^{\text{obs}}, M)$ with the parameters $\theta,\phi$ that were optimized in the preceding maximization step. Since the underlying generative process is score-based, the gradient term $\nabla_{X_t}\log p_{\theta,\phi}(X_t|X_0^{\text{obs}},M)$ can be decomposed into two components: the score function term $\nabla_{X_t} \log p_\theta(X_t | X_0^{\text{obs}})$ corresponding to the joint distribution, and the pattern recognizer guidance term $\nabla_{X_t} \log p_\phi(M|X_0^{\text{obs}}, X_t^{\text{mis}})$ reflecting the missing pattern. We further show that the pattern recognizer guidance term can be approximated with the pattern recognizer $D_\phi$ according to Proposition \ref{prop:appx_guide}.

\begin{figure}[t]
    \centering
    \begin{minipage}{0.48\linewidth}
        \vspace{-1.0cm}
        \begin{algorithm}[H]
            \caption{PRDIM E step}
            \label{alg:exp}
            \begin{algorithmic}[1]
                \STATE Diffusion model $\theta$, pattern recognizer $\phi$, observed data $X_0^\text{obs}$, and mask data $M$.
                \STATE Sample $X_T\sim\mathcal{N}(0,\mathbf{I})$
                \FOR{$t = T, \dots, 1$}
                    \STATE Get $\hat{X}_0 =f_\theta(X_t,t;X_0^\text{obs})$ 
                    \STATE $\hat{X}_0\leftarrow\hat{X}_0\odot (\mathbf{1}-M)+X_0^{\text{obs}}\odot M$
                    \STATE $\hat{X}_0\leftarrow\hat{X}_0-\frac{1-\bar{\alpha}_t}{\sqrt{\bar{\alpha}_t}}\nabla_{X_t}\mathcal{L}_\text{PR}(M,\hat{X}_0, D_\phi)$
                    \STATE Sample $\varepsilon\sim\mathcal{N}(0,\mathbf{I})$
                    \STATE $X_{t-1}=\sqrt{\bar\alpha_{t-1}}\hat{X}_0+\sqrt{1-\bar\alpha_{t-1}}\varepsilon$
                \ENDFOR
                \STATE \textbf{return} $X_0$
            \end{algorithmic}
        \end{algorithm}
        \vspace{-1.0cm}
    \end{minipage}
    \hfill
    \begin{minipage}{0.48\linewidth}
        \vspace{-1.0cm}
        \begin{algorithm}[H]
            \caption{PRDIM M step}
            \label{alg:max}
            \begin{algorithmic}[1]
                \STATE Diffusion model $\theta$, pattern recognizer $\phi$, corresponding learning rate $\eta_\theta, \eta_\phi$, imputed data $X_0$, mask data $M$, and maximization epoch $N_m$.
                \FOR{$i = 1, \dots, N_m$}
                    \STATE Sample $t,\varepsilon\sim \mathcal{U}(0,T),\, \mathcal{N}(0,\mathbf{I})$
                    \STATE $X_t=\sqrt{\bar\alpha_t}X+\sqrt{1-\bar\alpha}_t\varepsilon$
                    \STATE $\theta\leftarrow\theta-\eta_\theta\nabla_\theta \mathcal{L}_\text{diff}(X_0, X_t,t,f_\theta)$
                    \STATE $\phi\leftarrow\phi-\eta_\phi\nabla_\phi\mathcal{L}_\text{PR}(M,X_0,D_\phi)$
                \ENDFOR
                \STATE \textbf{return} $\theta,\,\phi$
            \end{algorithmic}
        \end{algorithm}
        \vspace{-1.0cm}
    \end{minipage}
\end{figure}

\begin{restatable}{proposition}{propguide}
\label{prop:appx_guide}
Suppose that the pattern recognizer $D_\phi$ is optimal, satisfying $D_{\phi^*}(X_0)=[p_\phi(M_d|X_0)]\in\mathbb{R}^D$, the score function of the joint log-likelihood with respect to the missing mask can be approximated as
\begin{align}
\nabla_{X_t} \log p_{\theta,\phi}(X_t | X_0^{\text{obs}}, M) \simeq \nabla_{X_t} \log p_{\theta}(X_t | X_0^{\text{obs}}) - \nabla_{X_t}  \mathcal{L}_\text{PR}\Big(M, \hat{X}_0, D_{\phi^*}\Big)
\end{align}

where $\hat{X}_0\coloneqq f_\theta(X_t,t;X_0^{\text{obs}})\odot(\mathbf{1}-M)+ X_0^{\text{obs}}\odot M$.
\end{restatable}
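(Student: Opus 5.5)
The plan is to start from Bayes' rule applied to the conditional law of the noisy state given the observed data and the mask. We have
\begin{align*}
p_{\theta,\phi}(X_t\mid X_0^{\text{obs}},M)=\frac{p_\theta(X_t\mid X_0^{\text{obs}})\,p_{\theta,\phi}(M\mid X_t,X_0^{\text{obs}})}{p_{\theta,\phi}(M\mid X_0^{\text{obs}})},
\end{align*}
and the denominator does not depend on $X_t$. Taking $\nabla_{X_t}\log$ therefore splits the score exactly into the unconditional-mask score $\nabla_{X_t}\log p_\theta(X_t\mid X_0^{\text{obs}})$ plus a guidance term $\nabla_{X_t}\log p_{\theta,\phi}(M\mid X_t,X_0^{\text{obs}})$. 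This step uses the graphical model of Figure~\ref{fig:PRDIM_main_fig}: $X_t$ is generated from $X_0$ independently of $M$, which is the assumption $q(X_{1:T}\mid X_0,M)=q(X_{1:T}\mid X_0)$ from Proposition~\ref{prop:elbo}. It is what lets us write the first factor without $M$.

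Next we express the guidance term through the missing model. By the second independence assumption, $p_\phi(M\mid X_0,X_{1:T})=p_\phi(M\mid X_0)$, so
\begin{align*}
p_{\theta,\phi}(M\mid X_t,X_0^{\text{obs}})=\mathbb{E}_{p_\theta(X_0^{\text{mis}}\mid X_t,X_0^{\text{obs}})}\big[p_\phi(M\mid X_0^{\text{obs}},X_0^{\text{mis}})\big].
\end{align*}
This expectation is intractable. Following the standard DPS-style approximation, we replace the posterior over $X_0^{\text{mis}}$ by a point mass at its posterior mean. Under the $X_0$-prediction parameterization, that mean is estimated by $f_\theta(X_t,t;X_0^{\text{obs}})$ (Tweedie's formula). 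On observed coordinates we substitute the known values, giving exactly $\hat X_0=f_\theta\odot(\mathbf 1-M)+X_0^{\text{obs}}\odot M$. Hence
\begin{align*}
\log p_{\theta,\phi}(M\mid X_t,X_0^{\text{obs}})\simeq\log p_\phi(M\mid \hat X_0).
\end{align*}
This interchange of $\log$ and expectation (a Jensen gap / point-estimate approximation) is the only non-exact step, and it is the source of the $\simeq$. I expect justifying it to be the main obstacle. I would not attempt a quantitative bound; I would argue it is accurate when the posterior $p_\theta(X_0^{\text{mis}}\mid X_t,X_0^{\text{obs}})$ is concentrated, which holds for small $t$.

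Finally, we use the Bernoulli factorization of the missing model shared with GAIN and not-MIWAE: $\log p_\phi(M\mid X_0)=\sum_d\big[M_d\log p_\phi(M_d{=}1\mid X_0)+(1-M_d)\log(1-p_\phi(M_d{=}1\mid X_0))\big]$. By optimality, $D_{\phi^*}(X_0)=[p_\phi(M_d{=}1\mid X_0)]$, so this sum equals $-\mathcal{L}_\text{PR}(M,X_0,D_{\phi^*})$. Evaluating at $\hat X_0$ and differentiating through $\hat X_0$ with respect to $X_t$ (via the chain rule through $f_\theta$) gives $\nabla_{X_t}\log p_{\theta,\phi}(M\mid X_t,X_0^{\text{obs}})\simeq-\nabla_{X_t}\mathcal{L}_\text{PR}(M,\hat X_0,D_{\phi^*})$. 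Combining this with the first step yields the claimed decomposition.
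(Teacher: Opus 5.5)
Your proposal is correct and follows the paper's proof step for step: the Bayes split of the conditional score, then rewriting the mask likelihood as a posterior expectation via $p_\phi(M\mid X_0,X_t)=p_\phi(M\mid X_0)$, then the DPS point-estimate approximation (Chung et al., Thm.~1) at $\hat X_0$, and finally the per-entry Bernoulli factorization that identifies $\log p_\phi(M\mid\hat X_0)$ with $-\mathcal{L}_\text{PR}$. Two small precision fixes: the Bayes split itself needs no independence assumption (the forward-process assumption only guarantees that the first factor is the pure diffusion conditional $p_\theta$), and the DPS step exchanges the expectation with $p_\phi(M\mid\cdot)$, i.e.\ $\mathbb{E}[p_\phi(M\mid X_0)]\approx p_\phi(M\mid\mathbb{E}[X_0])$, rather than with the logarithm.
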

\vspace{-0.2cm}
Here, $f_\theta(X_t,t;X_0^\text{obs})$ is $X_0$ prediction at timestep $t$, and $\hat{X}_0$ represents the estimated $X_0$ in the intermediate timestep $t$. The detailed proof is given in Appendix~\ref{app:prf2}. Proposition~\ref{prop:appx_guide} is noteworthy because it steers the more informative gradient of the intermediate sample ${X}_{t}$ with respect to the estimated missing pattern ${D}_{\phi}$; it provides a meaningful signal according to the negative missing probability $\mathcal{L}_\text{PR}$. Further analysis of the practical behavior of the pattern recognizer is provided in Appendix~\ref{app:pr_analysis}.

To summarize, within the DDPM framework, given $X_t$ at time step $t$, the variable $X_{t-1}$ can be obtained through the following three steps.
First, using the diffusion model together with Tweedie’s formula \citep{carlin2000bayes, efron2011tweedie}, we can compute the posterior mean $\hat{X}_0$ \cite{ho2020denoising} only with the diffusion parameter $\theta$:
\begin{align}
    \hat{X}_0=f_\theta(X_t,t;X_0^\text{obs})\odot(\mathbf{1}-M)+X_0^{\text{obs}}\odot M \label{eq:posterior_mean}
\end{align}
Second, the pattern recognizer evaluates the missing probability for each entry based on the estimated missing values and the observed variables. Finally, according to Proposition~\ref{prop:appx_guide}, the denoised sample $X_{t-1}=(X_{t-1}^\text{obs},X_{t-1}^\text{mis})$ at time $t-1$ is updated by incorporating the approximated guidance.
\begin{equation}
    X_{t-1}=\sqrt{\bar\alpha_{t-1}}(\hat{X}_0-\frac{1-\bar{\alpha}_t}{\sqrt{\bar{\alpha}_t}}\nabla_{X_t}\mathcal{L}_\text{PR}(M,\hat{X}_0, D_\phi))+\sqrt{1-\bar\alpha_{t-1}}\varepsilon \,\,\text{where}\,\,\varepsilon\sim\mathcal{N}(0,\mathbf{I})
\end{equation}

\vspace{-0.2cm}
\section{Experiments}
\label{main:experiments}
\vspace{-0.2cm}

\begin{table*}[t]
\centering
\caption{Overall \textbf{MAE ($\downarrow$)} performance on three benchmark datasets. We report mean $\pm$ std over 5 runs according to each methodology. Best results are in \textbf{bold}, and second best results are in \underline{underline}.}
\label{tab:main_tab}
\resizebox{0.85\linewidth}{!}{
\begin{tabular}{lccc|ccc}
\toprule
Method 
& \multicolumn{3}{c|}{Original / Out-of-Sample} 
& \multicolumn{3}{c}{Original / In-Sample} \\
\cmidrule(lr){2-4} \cmidrule(lr){5-7}
& ETT & STOCK & PEMS-Bay & ETT & STOCK & PEMS-Bay \\
\midrule
Mean         & 2.034{\scriptsize$\pm$0.000} & 1.949{\scriptsize$\pm$0.000} & 0.813{\scriptsize$\pm$0.000} & 1.486{\scriptsize$\pm$0.000} & 2.039{\scriptsize$\pm$0.000} & 0.789{\scriptsize$\pm$0.000} \\
\midrule
\multicolumn{7}{l}{\textit{Discriminative models}} \\
TimesNet      & 1.044{\scriptsize$\pm$0.065} & 1.111{\scriptsize$\pm$0.073} & 0.291{\scriptsize$\pm$0.007} & 1.154{\scriptsize$\pm$0.068} & 1.221{\scriptsize$\pm$0.077} & 0.225{\scriptsize$\pm$0.001} \\
TimeMixer++   & 1.642{\scriptsize$\pm$0.025} & 1.287{\scriptsize$\pm$0.239} & 0.579{\scriptsize$\pm$0.018} & 1.100{\scriptsize$\pm$0.032} & 1.369{\scriptsize$\pm$0.260} & 0.557{\scriptsize$\pm$0.020} \\
BRITS         & 0.992{\scriptsize$\pm$0.037} & 0.627{\scriptsize$\pm$0.010} & 0.278{\scriptsize$\pm$0.006} & 0.491{\scriptsize$\pm$0.008} & 0.701{\scriptsize$\pm$0.010} & 0.182{\scriptsize$\pm$0.003} \\
SAITS         & 0.814{\scriptsize$\pm$0.046} & 0.442{\scriptsize$\pm$0.022} & 0.302{\scriptsize$\pm$0.009} & 0.366{\scriptsize$\pm$0.014} & 0.498{\scriptsize$\pm$0.025} & 0.212{\scriptsize$\pm$0.003} \\
\midrule
\multicolumn{7}{l}{\textit{Generative models}} \\
GP-VAE         & 1.511{\scriptsize$\pm$0.011} & 0.902{\scriptsize$\pm$0.109} & 0.345{\scriptsize$\pm$0.001} & 0.896{\scriptsize$\pm$0.018} & 1.010{\scriptsize$\pm$0.118} & 0.292{\scriptsize$\pm$0.002} \\
not-MIWAE      & 1.311{\scriptsize$\pm$0.016} & 0.681{\scriptsize$\pm$0.045} & 0.396{\scriptsize$\pm$0.005} & 0.637{\scriptsize$\pm$0.011} & 0.759{\scriptsize$\pm$0.039} & 0.352{\scriptsize$\pm$0.005} \\
\midrule
\multicolumn{7}{l}{\textit{Diffusion-based models}} \\
CSDI          & 1.071{\scriptsize$\pm$0.001} & 0.641{\scriptsize$\pm$0.000} & \underline{0.177}{\scriptsize$\pm$0.000} & 0.522{\scriptsize$\pm$0.001} & 0.710{\scriptsize$\pm$0.000} & \underline{0.158}{\scriptsize$\pm$0.000} \\
MTSCI         & 0.957{\scriptsize$\pm$0.001} & 0.736{\scriptsize$\pm$0.001} & 0.193{\scriptsize$\pm$0.000} & 0.500{\scriptsize$\pm$0.000} & 0.809{\scriptsize$\pm$0.001} & 0.179{\scriptsize$\pm$0.000} \\
cDiffPuter    & \underline{0.782}{\scriptsize$\pm$0.000} & \underline{0.406}{\scriptsize$\pm$0.000} & 0.182{\scriptsize$\pm$0.000} & \underline{0.362}{\scriptsize$\pm$0.000} & \underline{0.450}{\scriptsize$\pm$0.000} & 0.168{\scriptsize$\pm$0.000} \\
\midrule
\textbf{PRDIM} 
            & \textbf{0.663}{\scriptsize$\pm$0.000} & \textbf{0.254}{\scriptsize$\pm$0.000} & \textbf{0.170}{\scriptsize$\pm$0.000}
            & \textbf{0.303}{\scriptsize$\pm$0.000} & \textbf{0.275}{\scriptsize$\pm$0.000} & \textbf{0.154}{\scriptsize$\pm$0.000} \\
\bottomrule
\end{tabular}}
\vspace{-0.3cm}
\end{table*}

In this section, we evaluate PRDIM on various datasets under the MNAR setting. The results show that PRDIM achieves consistent improvements over prior methods, confirming the effectiveness of incorporating the missing model into the diffusion framework. The detailed experimental settings are provided in the Appendix~\ref{app:exp_details}.

\vspace{-0.25cm}
\subsection{Revisiting the Objective of Diffusion Imputation Models under MNAR}
\label{subsec:revisiting}
\vspace{-0.1cm}

Primarily, we clarify how the target objective adopted in our work differs in perspective from those used in existing diffusion imputation models, and we justify the validity of our proposed objective.
In this work, we refer to the missing entries inherent to the dataset, which constitute the target for imputation as \textit{original} missing values, where $M$ denotes the corresponding indicator mask variable. In some cases, the underlying values of $M$ are often inaccessible such as PhysioNet~\cite{goldberger2000physiobank} and AirQuality~\cite{zhang2017cautionary}. Therefore, for quantitative evaluation, we simulate this setting by applying specific missing mechanisms (MCAR, MAR, and MNAR) to real-world complete datasets, thereby generating incomplete distributions ($X_0^{\text{obs}},M$).

In many prior works~\citep{tashiro2021csdi, alcaraz2022diffusion, zhou2024mtsci, liu2024self}, it is common practice to impose an additional mask on $X_0^{\text{obs}}$ during training. Let $M-A$ denotes indicator mask variable of \textit{artificially} missing entries, and $X_0^{\text{obs},A}$ indicates the subset of observed entries remaining after the additional \textit{artificial} masking is applied (\textit{i.e.}, $X_0^{\text{obs},A}\coloneqq X_0^{\text{obs}} \odot A$). These artificially masked entries supply the supervision required for both training and evaluation. We can formulate these works aim to obtain $p_\theta(X_0 \mid X_0^{\text{obs},A}, A)$ and evaluation follows the same protocol by inserting an artificial mask into the test data. 

Considering the above current practice of imputation studies, our research question emerges from a fundamentally different viewpoint. If the distribution of the original missing mask $M$ differs substantially from that of the artificial mask $A$, we conjecture that a model trained under $p_\theta(X_0 \mid X_0^{\text{obs},A}, A)$ is not expected to perform well under $p_\theta(X_0 \mid X_0^{\text{obs}}, M)$. We provide a more detailed description in Appendix~\ref{app:detail_data}.

This distinction is crucial to the motivation of our work. PRDIM introduces a pattern recognizer, which explicitly learns the missing pattern, and it enables us to model $p_{\theta,\phi}(X_0 \mid X_0^{\text{obs}}, M)$ with regard to underlying missing pattern. We hypothesize that this distributional discrepancy is most pronounced under MNAR settings. We empirically validate this hypothesis in the subsequent experiments; and we also provide the performance by following the practices from the past work, as well.

\vspace{-0.2cm}
\subsection{Experimental Setting}
\vspace{-0.2cm}

\paragraph{Datasets}
We evaluate our method on widely used multivariate time-series datasets, image datasets, and tabular datasets:  
(1) \textbf{ETT} \citep{zhou2021informer}, which records load and temperature of electricity transformers and has been a standard benchmark in time-series imputation task; (2) \textbf{STOCK}\footnote{\url{http://github.com/Y-debug-sys/Diffusion-TS}}, which contains historical daily stock prices and reflects complex temporal dynamics; (3) \textbf{PEMS-Bay} \citep{li2017diffusion}, which consists of road occupancy rates collected from highway sensors, exhibiting spatial-temporal patterns; (4) \textbf{Fashion-MNIST} (FMNIST) \citep{xiao2017fashion}, which consists of gray-scale images of clothing items across 10 categories, (5) \textbf{CelebA-HQ} \citep{CelebAMask-HQ}, which represents RGB-based human face images, and (6) five different tabular datasets are from UCI machine Learning Repository.\footnote{\url{archive.ics.uci.edu/}} We provide a detailed description of all of these datasets in Appendix~\ref{exp_details:dataset}.

\vspace{-0.2cm}
\paragraph{MNAR mechanism} Since the datasets are complete, we arbitrarily generate missing under MNAR mechanisms. As discussed in the section~\ref{subsec:revisiting}, all baseline models are trained on the resulting incomplete data, while evaluation is performed on the imputation of unobserved ground-truth values. For example, we adopt the missing mechanism from not-MIWAE~\cite{ipsen2020not} to validate the imputation performance of PRDIM in Table~\ref{tab:main_tab} and Figure~\ref{fig:app_fmnist_mnar}. In addition, detailed description of the employed missing mechanisms on time-series, image, and tabular datasets are provided in the Appendix~\ref{exp_details:dataset}, \ref{app:add_exp-celeba}, and \ref{app:exp_tabular} respectively.

\vspace{-0.2cm}
\paragraph{Baselines}
We compare PRDIM against 10 representative imputation methods on time-series datasets. \textbf{Mean} serves as a traditional statistical baseline. \textbf{TimesNet} \citep{wu2022timesnet}, \textbf{TimeMixer++} \citep{wang2024timemixer++}, \textbf{BRITS} \citep{cao2018brits}, and \textbf{SAITS} \citep{du2023saits} are discriminative models, known to achieve strong performance for time-series imputations. \textbf{GP-VAE}~\citep{fortuin2020gp} and \textbf{not-MIWAE}~\citep{ipsen2020not} are VAE-based imputation model that could be implemented on incomplete data.  

Among diffusion-based methods, we reproduce \textbf{CSDI}~\citep{tashiro2021csdi}, \textbf{MTSCI}~\citep{zhou2024mtsci}, and \textbf{DiffPuter}~\citep{zhang2025diffputer} which shows robust performance on various datasets. Specifically, We modified DiffPuter into conditional diffusion framework, which denote as \textbf{cDiffPuter}. In image domain, we additionally reproduce \textbf{misGAN}~\citep{li2019misgan} and \textbf{MCFlow}~\citep{richardson2020mcflow} which are GAN, Flow based generative imputation model respectively. Further explanations of baselines can be found in Appendix~\ref{app:rel_works} and hyperparameter settings are reported in Appendix~\ref{exp_details:baselines}.

\vspace{-0.2cm}
\paragraph{Evaluation Metrics}
We consider two types of evaluation: (i) \textit{in-sample} imputation and \textit{out-of-sample} imputation, both targeting the recovery of original missing values (Detailed description in Appendix~\ref{app:detail_data}). As highlighted in the introduction, our main goal is to impute original missing entries, as opposed to artificially masked ones. We report the three error-based metrics (i) RMSE, (ii) MAE, and (iii) MRE which are defined in Appendix~\ref{app:eval_metric}, throughout this section.

\vspace{-0.2cm}
\subsection{Overall Performance}
\label{overall_performance}

\begin{wraptable}{R}{0.4\linewidth}
    \centering
    \vspace{-0.4cm}
    \caption{MAE ($\downarrow$) performance for artificial missing entries ($M-A$) across diffusion imputation models on 5 runs.}
    \label{tab:artf_mae}
    \scriptsize 
    \setlength{\tabcolsep}{3pt}
    \begin{tabular}{l|ccc}
    \toprule
    Method & \multicolumn{3}{c}{Artificial / Out-of-Sample} \\ \cmidrule(lr){2-4} 
     & ETT & STOCK & PEMS-Bay \\ \cmidrule(lr){1-4} 
    CSDI & 0.243{\scriptsize$\pm$0.001} & 0.117{\scriptsize$\pm$0.000} & 0.115{\scriptsize$\pm$0.000} \\ 
    MTSCI & 0.208{\scriptsize$\pm$0.001} & 0.165{\scriptsize$\pm$0.002} & 0.123{\scriptsize$\pm$0.000} \\ 
    cDiffPuter & 0.146{\scriptsize$\pm$0.001} & 0.101{\scriptsize$\pm$0.000} & 0.117{\scriptsize$\pm$0.000} \\ 
    PRDIM & \textbf{0.124}{\scriptsize$\pm$0.001} & \textbf{0.092}{\scriptsize$\pm$0.002} & \textbf{0.114}{\scriptsize$\pm$0.001} \\
    \bottomrule
    \end{tabular}
    \vspace{-0.4cm}
\end{wraptable}

\paragraph{Time-Series Dataset} We first evaluate PRDIM against representative discriminative, generative, and diffusion-based baselines on multivariate time-series datasets. As reported in Table~\ref{tab:main_tab} and detailed in the Appendix~\ref{app:add_exp}, PRDIM consistently achieves the best performance across all metrics and datasets. In particular, the gains are most pronounced on out-of-sample imputation tasks, while in-sample results remain competitive, suggesting that PRDIM generalizes well to unseen missing values. We further evaluate the performance on artificial missing entries to align with the metrics used in prior studies, as shown in Table~\ref{tab:artf_mae}. A comparison with Table~\ref{tab:main_tab} confirms a notable performance gap between imputing original and artificial missing entries, supporting our claim that artificial masking may not fully reflect the challenges of MNAR scenarios. Nevertheless, PRDIM consistently achieves strong results even on artificial entries, ensuring its effectiveness regardless of the masking scheme.

\vspace{-0.3cm}
\paragraph{Image Dataset} Due to space constraints, experiment results on FMNIST are provided in Figures~\ref{fig:app_fmnist_mar} and~\ref{fig:app_fmnist_mnar} of Appendix~\ref{app:fmnist}. These results illustrate that PRDIM generates semantically more consistent reconstructions than other approaches. They also highlight the importance of explicitly modeling the missing mechanism in the diffusion process.

\begin{figure}[t]
    \centering
    \begin{minipage}{0.42\linewidth}
        \centering
        \includegraphics[width=\linewidth]{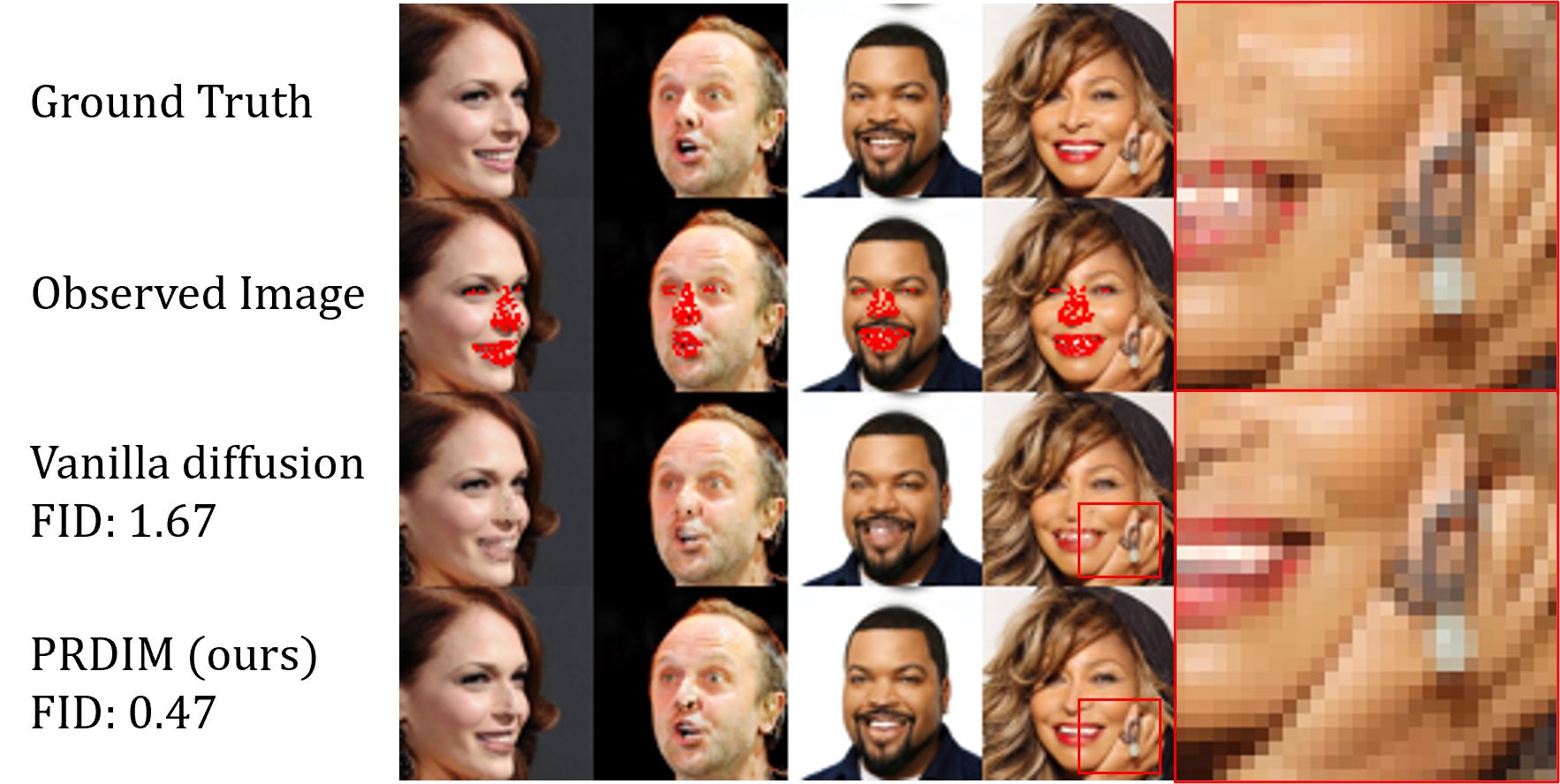}
        \caption{Qualitative imputation results on CelebA-HQ 64. Second row shows observed inputs, where red pixels indicate missing. The detailed description of data processing is written in Appendix~\ref{app:add_exp-celeba}.}
        \label{fig:main-celeba}
        \vspace{-0.5cm}
    \end{minipage}
    \hfill
    \begin{minipage}{0.54\linewidth}
        \centering
        \captionof{table}{Quantitative results on tabular datasets. We report the average MAE($\downarrow$) on 5 runs and total ranking across the datasets for in-sample \textbf{MNAR} imputation. Baseline details are in Appendix~\ref{exp_details:baselines}. \textbf{Bold} denotes the best result.}
        \resizebox{1.0\linewidth}{!}{
        \begin{tabular}{l|ccccc|c}
        \toprule
        Data & \textbf{Adult} & \textbf{Bean} & \textbf{Default} & \textbf{Gesture} & \textbf{Magic} & \textbf{Avg Rank} \\
        \midrule
        MissForest~\cite{stekhoven2012missforest} & 0.609 & 0.264 & 0.370 & 0.389 & 0.518 & 4.4 \\
        MICE~\cite{van2011mice} & 0.969 & 0.203 & 0.580 & 0.662 & 0.766 & 6.6 \\
        MOT~\cite{muzellec2020missing} & 0.506 & 0.255 & 0.354 & 0.438 & 0.494 & 4.0 \\
        TDM~\cite{zhao2023transformed} & 0.523 & 0.216 & 0.374 & 0.409 & \textbf{0.485} & 3.7\\
        TabCSDI~\cite{zheng2022diffusion} & 0.631 & 0.764 & 0.572 & 0.562 & 0.770 & 7.4\\
        Hyperimpute~\cite{jarrett2022hyperimpute} & 0.533 & 0.284 & 0.345 & \textbf{0.386} & 0.588 & 4.2 \\
        DiffPuter~\cite{zhang2025diffputer} & 0.497 & 0.240 & 0.374 & 0.391 & 0.539 & 3.9 \\
        \textbf{PRDIM} & \textbf{0.474} & \textbf{0.199} & \textbf{0.336} & 0.394 & 0.490 & \textbf{1.8}\\
        \bottomrule
        \end{tabular}}
        \label{tab:tabular_imputation}
        \vspace{-0.5cm}
    \end{minipage}
\end{figure}

\vspace{-0.1cm}
We extend this analysis to the more complicate RGB image space, CelebA-HQ in Figure~\ref{fig:main-celeba}.\footnote{Our intention is only to demonstrate the feasibility of applying PRDIM to general datasets in diverse modalities; we do not intend to compare our method to the \textit{inpainting} methods which utilize the pre-trained generative models since we use diffusion models trained with incomplete data.} While the vanilla model struggles to infer missing semantic components such as eyes, noses, and mouths and tends to merely fill the regions with averaged color derived from the global context, PRDIM effectively leverages local contextual cues. We reproduce additional quantitative metrics in Appendix~\ref{app:add_exp-celeba}. 


\vspace{-0.3cm}
\paragraph{Tabular Dataset} To verify the effectiveness of PRDIM's pattern recognizer on tabular data, we evaluate our model using the identical experimental setup as DiffPuter, incorporating the proposed pattern recognizer. The results on Table~\ref{tab:tabular_imputation} confirm that PRDIM outperforms DiffPuter on tabular datasets as well. Comprehensive results on out-of-sample are provided in Appendix~\ref{app:exp_tabular}.

\vspace{-0.2cm}
\paragraph{Computational Overhead}
PRDIM incurs additional inference cost due to gradient-based pattern guidance during inference, but the overhead remains moderate relative to the diffusion baselines. As further analyzed in Appendix~\ref{app:cost-perform}, the pattern recognizer is lightweight across modalities, and this additional cost is justified by consistent performance gains under MNAR settings.

\vspace{-0.3cm}
\subsection{Ablation Studies}
\label{abl_exp}
\vspace{-0.2cm}

We further conduct ablation experiments to quantify the contribution of each component in PRDIM. Table \ref{tab:stock_ablation} reports results when either the pattern recognizer is removed or hard EM is replaced with soft EM. Both modifications lead to a significant performance drop, indicating that explicit missing modeling and iterative EM updates are indispensable for exploring the missing data distribution.
Furthermore, we investigate the impact of the artificial missing mask $A$. While previous works typically generate artificial missing entries under the MCAR ($M-A$ is the corresponding indicator mask of artificial missing entries), we report performances across different missing rates of $10\%$, $50\%$, and $90\%$ for $M-A$. Overall, the missing rate of $M-A$ has limited influence on imputation.

Table~\ref{tab:add_missing} investigate robustness under different missing mechanisms by applying MNAR and MCAR masks to the ETT dataset. Implementation details for the corresponding mechanisms are described in Appendix~\ref{app:exp_details}. PRDIM consistently outperforms baselines under MNAR, whereas under MCAR the advantage diminishes, as the pattern recognizer learns randomness in this scenario. Together, these ablation studies confirm the necessity of PRDIM’s design choices and its robustness across varying missing conditions.

\begin{table}[h]
    \centering
    \begin{minipage}[t]{0.51\linewidth}
    \vspace{-0.4cm}
        \centering
        \caption{Ablation study results on the \textbf{STOCK} dataset. X$\%$ denotes the missing rate of $M-A$ in Phase 1 pre-training, and the last line is same with DiffPuter.}
        \label{tab:stock_ablation}
        \resizebox{1.0\linewidth}{!}{
        \begin{tabular}{lccc|ccc}
        \toprule
        Method & \multicolumn{3}{c|}{Out-of-Sample} & \multicolumn{3}{c}{In-Sample} \\
        \cmidrule(lr){2-4} \cmidrule(lr){5-7}
        & RMSE($\downarrow$) & MAE($\downarrow$) & MRE($\downarrow$) & RMSE($\downarrow$) & MAE($\downarrow$) & MRE($\downarrow$) \\
        \midrule
        PRDIM & 0.599 & \textbf{0.254} & \textbf{16.79} & 0.633 & \textbf{0.275} & \textbf{17.15} \\
        \midrule
        10\% & \textbf{0.590} & 0.258 & 17.06 & \textbf{0.630} & 0.282 & 17.63 \\
        50\% & 0.624 & 0.259 & 17.12 & 0.667 & 0.281 & 17.56 \\
        90\% & 0.631 & 0.293 & 19.40 & 0.677 & 0.323 & 20.15 \\
        w/o PR & 0.650 & 0.306 & 20.23 & 0.691 & 0.339 & 21.17 \\
        \makecell[l]{w/o PR + \\ w/o hard EM}& 0.734 & 0.406 & 26.88 & 0.778 & 0.450 & 28.06 \\
        \bottomrule
        \end{tabular}}
        \vspace{-0.3cm}
    \end{minipage}
    \hfill
    \begin{minipage}[t]{0.45\linewidth}
        \vspace{-0.4cm}
        \centering
        \caption{Imputation results on the \textbf{ETT} dataset in (top) different \textbf{MNAR} and (bottom) \textbf{MCAR} pattern. We report the average RMSE / MAE / MRE over 5 runs.}
        \label{tab:add_missing}
        \resizebox{1.0\columnwidth}{!}{
        \begin{tabular}{l|c|c}
        \toprule
        Method & Out-of-Sample & In-Sample \\
        \midrule
        CSDI       & 0.345 / 0.218 / 13.825 & 0.233 / 0.159 / 13.259 \\
        cDiffPuter & \textbf{0.264} / 0.177 / 11.168 & 0.281 / 0.178 / 14.836 \\
        PRDIM      & 0.282 / \textbf{0.171} / \textbf{10.837} & \textbf{0.197} / \textbf{0.130} / \textbf{10.816} \\
        \bottomrule
        \end{tabular}}
        \vspace{0.05cm}

        \resizebox{1.0\columnwidth}{!}{
        \begin{tabular}{l|c|c}
        \toprule
        CSDI       & 0.237 / 0.160 / 15.550 & 0.182 / 0.128 / 16.752 \\
        cDiffPuter & 0.251 / 0.162 / 15.737 & 0.202 / 0.136 / 17.795 \\
        PRDIM      & \textbf{0.225} / \textbf{0.147} / \textbf{14.267} & \textbf{0.172} / \textbf{0.118} / \textbf{15.485} \\
        \bottomrule
        \end{tabular}}
        \vspace{-0.3cm}
    \end{minipage}
\end{table}

\vspace{-0.3cm}
\subsection{Applications of PRDIM}
\vspace{-0.2cm}

\begin{wraptable}{R}{0.35\linewidth}
    \centering
    \vspace{-0.5cm}
    \begin{minipage}[t]{1.0\linewidth}
        \centering
        \caption{Out-of-Sample MAE ($\downarrow$) under different Phase 1 methods.}
        \label{tab:different_phase1}
        \resizebox{1.0\linewidth}{!}{
        \begin{tabular}{l|ccc}
        \toprule
        Initialization Method 
        & ETT & STOCK & PEMS-Bay \\
        \midrule
        MEAN & 0.766 & 0.326 & 0.207 \\
        BRITS & 0.824 & 0.300 & 0.188 \\
        SAITS & 0.774 & 0.307 & 0.180 \\
        \textbf{CSDI (PRDIM)} 
        & \textbf{0.663} & \textbf{0.254} & \textbf{0.170} \\
        \bottomrule
        \end{tabular}}    
    \end{minipage}

    \vspace{0.2cm}
    \begin{minipage}[t]{1.0\linewidth}
        \centering
        \caption{Post-imputation classification results on FMNIST.}
        \label{tab:post_imputation}
        \resizebox{1.0\linewidth}{!}{
        \begin{tabular}{lc}
        \toprule
        Method & Clean test data accuracy (\%) \\
        \midrule
        Clean Data & 92.59 \small{$\pm$0.09} \\
        \midrule
        PRDIM & \textbf{91.14} \small{$\pm$0.33} \\
        DiffPuter & 91.09 \small{$\pm$0.27} \\
        MCFlow & 90.49 \small{$\pm$0.25} \\
        CSDI & 87.41 \small{$\pm$0.87} \\
        misGAN & 84.34 \small{$\pm$1.48} \\
        \bottomrule
        \end{tabular}}
        \vspace{-0.5cm}
    \end{minipage}
\end{wraptable}

We conducted two additional experiments to investigate the importance of Phase 1 initialization and the generalization capability of PRDIM.
First, we replaces Phase 1 with several different strategies (MEAN, BRITS, SAITS) under identical conditions to examine how different pre-imputation methods influence PRDIM’s performance. As shown in Table~\ref{tab:different_phase1}, initialization with CSDI yields the best results.

Second, we assessed post-imputation classification accuracy on the FMNIST dataset to evaluate how well the imputed data support downstream tasks. This experiment examines whether higher-quality imputations translate into improved task performance. As reported in Table~\ref{tab:post_imputation}, PRDIM surpasses Flow-based, GAN-based, and prior diffusion-based approaches, indicating that PRDIM is capable of restoring meaningful semantic information even in the image domain.


\vspace{-0.3cm}
\section{Conclusion}
\label{main:conclusion}
\vspace{-0.3cm}

We presented PRDIM, a diffusion-based imputation framework that incorporates an additional discriminator denoted \textit{pattern recognizer} under an EM algorithm to explicitly estimate missing patterns. We show that the guidance which understands missing pattern can be helpful for generating missing values precisely in diffusion imputation model. Our theoretical derivation and extensive experiments demonstrate that PRDIM consistently improves imputation compared to existing methods.

\bibliography{references}

@article{cao2018brits,
  title={Brits: Bidirectional recurrent imputation for time series},
  author={Cao, Wei and Wang, Dong and Li, Jian and Zhou, Hao and Li, Lei and Li, Yitan},
  journal={Advances in neural information processing systems},
  volume={31},
  year={2018}
}

@article{du2023saits,
  title={Saits: Self-attention-based imputation for time series},
  author={Du, Wenjie and C{\^o}t{\'e}, David and Liu, Yan},
  journal={Expert Systems with Applications},
  volume={219},
  pages={119619},
  year={2023},
  publisher={Elsevier}
}

@article{liu2019naomi,
  title={Naomi: Non-autoregressive multiresolution sequence imputation},
  author={Liu, Yukai and Yu, Rose and Zheng, Stephan and Zhan, Eric and Yue, Yisong},
  journal={Advances in neural information processing systems},
  volume={32},
  year={2019}
}

@inproceedings{fortuin2020gp,
  title={Gp-vae: Deep probabilistic time series imputation},
  author={Fortuin, Vincent and Baranchuk, Dmitry and R{\"a}tsch, Gunnar and Mandt, Stephan},
  booktitle={International conference on artificial intelligence and statistics},
  pages={1651--1661},
  year={2020},
  organization={PMLR}
}

@inproceedings{yoon2018gain,
  title={Gain: Missing data imputation using generative adversarial nets},
  author={Yoon, Jinsung and Jordon, James and Schaar, Mihaela},
  booktitle={International conference on machine learning},
  pages={5689--5698},
  year={2018},
  organization={PMLR}
}

@article{ho2020denoising,
  title={Denoising diffusion probabilistic models},
  author={Ho, Jonathan and Jain, Ajay and Abbeel, Pieter},
  journal={Advances in neural information processing systems},
  volume={33},
  pages={6840--6851},
  year={2020}
}

@article{kong2020diffwave,
  title={Diffwave: A versatile diffusion model for audio synthesis},
  author={Kong, Zhifeng and Ping, Wei and Huang, Jiaji and Zhao, Kexin and Catanzaro, Bryan},
  journal={arXiv preprint arXiv:2009.09761},
  year={2020}
}

@inproceedings{jo2022score,
  title={Score-based generative modeling of graphs via the system of stochastic differential equations},
  author={Jo, Jaehyeong and Lee, Seul and Hwang, Sung Ju},
  booktitle={International conference on machine learning},
  pages={10362--10383},
  year={2022},
  organization={PMLR}
}

@inproceedings{rasul2021autoregressive,
  title={Autoregressive denoising diffusion models for multivariate probabilistic time series forecasting},
  author={Rasul, Kashif and Seward, Calvin and Schuster, Ingmar and Vollgraf, Roland},
  booktitle={International conference on machine learning},
  pages={8857--8868},
  year={2021},
  organization={PMLR}
}

@article{tashiro2021csdi,
  title={Csdi: Conditional score-based diffusion models for probabilistic time series imputation},
  author={Tashiro, Yusuke and Song, Jiaming and Song, Yang and Ermon, Stefano},
  journal={Advances in neural information processing systems},
  volume={34},
  pages={24804--24816},
  year={2021}
}

@inproceedings{zhou2024mtsci,
  title={Mtsci: A conditional diffusion model for multivariate time series consistent imputation},
  author={Zhou, Jianping and Li, Junhao and Zheng, Guanjie and Wang, Xinbing and Zhou, Chenghu},
  booktitle={Proceedings of the 33rd ACM International Conference on Information and Knowledge Management},
  pages={3474--3483},
  year={2024}
}

@inproceedings{zhang2025diffputer,
  title={Diffputer: Empowering diffusion models for missing data imputation},
  author={Zhang, Hengrui and Fang, Liancheng and Wu, Qitian and Yu, Philip S},
  booktitle={The Thirteenth International Conference on Learning Representations},
  year={2025}
}

@inproceedings{zhou2021informer,
  title={Informer: Beyond efficient transformer for long sequence time-series forecasting},
  author={Zhou, Haoyi and Zhang, Shanghang and Peng, Jieqi and Zhang, Shuai and Li, Jianxin and Xiong, Hui and Zhang, Wancai},
  booktitle={Proceedings of the AAAI conference on artificial intelligence},
  volume={35},
  number={12},
  pages={11106--11115},
  year={2021}
}

@article{li2017diffusion,
  title={Diffusion convolutional recurrent neural network: Data-driven traffic forecasting},
  author={Li, Yaguang and Yu, Rose and Shahabi, Cyrus and Liu, Yan},
  journal={arXiv preprint arXiv:1707.01926},
  year={2017}
}

@article{ipsen2020not,
  title={not-MIWAE: Deep generative modelling with missing not at random data},
  author={Ipsen, Niels Bruun and Mattei, Pierre-Alexandre and Frellsen, Jes},
  journal={arXiv preprint arXiv:2006.12871},
  year={2020}
}

@article{wu2022timesnet,
  title={Timesnet: Temporal 2d-variation modeling for general time series analysis},
  author={Wu, Haixu and Hu, Tengge and Liu, Yong and Zhou, Hang and Wang, Jianmin and Long, Mingsheng},
  journal={arXiv preprint arXiv:2210.02186},
  year={2022}
}

@article{wang2024timemixer++,
  title={Timemixer++: A general time series pattern machine for universal predictive analysis},
  author={Wang, Shiyu and Li, Jiawei and Shi, Xiaoming and Ye, Zhou and Mo, Baichuan and Lin, Wenze and Ju, Shengtong and Chu, Zhixuan and Jin, Ming},
  journal={arXiv preprint arXiv:2410.16032},
  year={2024}
}

@article{little1987statistical,
  title={Statistical analysis with missing data},
  author={Little, Roderick JA and Rubin, Donald B},
  journal={New York: Wiley},
  year={1987}
}

@inproceedings{mattei2019miwae,
  title={MIWAE: Deep generative modelling and imputation of incomplete data sets},
  author={Mattei, Pierre-Alexandre and Frellsen, Jes},
  booktitle={International conference on machine learning},
  pages={4413--4423},
  year={2019},
  organization={PMLR}
}

@inproceedings{chung2023diffusion,
  title={DIFFUSION POSTERIOR SAMPLING FOR GENERAL NOISY INVERSE PROBLEMS},
  author={Chung, Hyungjin and Kim, Jeongsol and McCann, Michael T and Klasky, Marc L and Ye, Jong Chul},
  booktitle={11th International Conference on Learning Representations, ICLR 2023},
  year={2023}
}

@article{van2011mice,
  title={mice: Multivariate imputation by chained equations in R},
  author={Van Buuren, Stef and Groothuis-Oudshoorn, Karin},
  journal={Journal of statistical software},
  volume={45},
  pages={1--67},
  year={2011}
}

@article{stekhoven2012missforest,
  title={MissForest—non-parametric missing value imputation for mixed-type data},
  author={Stekhoven, Daniel J and B{\"u}hlmann, Peter},
  journal={Bioinformatics},
  volume={28},
  number={1},
  pages={112--118},
  year={2012},
  publisher={Oxford University Press}
}

@book{kantardzic2011data,
  title={Data mining: concepts, models, methods, and algorithms},
  author={Kantardzic, Mehmed},
  year={2011},
  publisher={John Wiley \& Sons}
}

@article{yoon2018estimating,
  title={Estimating missing data in temporal data streams using multi-directional recurrent neural networks},
  author={Yoon, Jinsung and Zame, William R and Van Der Schaar, Mihaela},
  journal={IEEE Transactions on Biomedical Engineering},
  volume={66},
  number={5},
  pages={1477--1490},
  year={2018},
  publisher={IEEE}
}

@inproceedings{
song2021scorebased,
title={Score-Based Generative Modeling through Stochastic Differential Equations},
author={Yang Song and Jascha Sohl-Dickstein and Diederik P Kingma and Abhishek Kumar and Stefano Ermon and Ben Poole},
booktitle={International Conference on Learning Representations},
year={2021},
url={https://openreview.net/forum?id=PxTIG12RRHS}
}

@article{alcaraz2022diffusion,
  title={Diffusion-based time series imputation and forecasting with structured state space models},
  author={Alcaraz, Juan Miguel Lopez and Strodthoff, Nils},
  journal={arXiv preprint arXiv:2208.09399},
  year={2022}
}

@article{yuan2024diffusion,
  title={Diffusion-TS: Interpretable Diffusion for General Time Series Generation},
  author={Yuan, Xinyu and Qiao, Yan},
  journal={CoRR},
  year={2024}
}

@inproceedings{richardson2020mcflow,
  title={Mcflow: Monte carlo flow models for data imputation},
  author={Richardson, Trevor W and Wu, Wencheng and Lin, Lei and Xu, Beilei and Bernal, Edgar A},
  booktitle={Proceedings of the IEEE/CVF conference on computer vision and pattern recognition},
  pages={14205--14214},
  year={2020}
}

@article{xiao2017fashion,
  title={Fashion-mnist: a novel image dataset for benchmarking machine learning algorithms},
  author={Xiao, Han and Rasul, Kashif and Vollgraf, Roland},
  journal={arXiv preprint arXiv:1708.07747},
  year={2017}
}

@book{carlin2000bayes,
  title={Bayes and empirical Bayes methods for data analysis},
  author={Carlin, Bradley P and Louis, Thomas A and others},
  year={2000},
  publisher={Chapman \& Hall/CRC}
}

@article{efron2011tweedie,
  title={Tweedie’s formula and selection bias},
  author={Efron, Bradley},
  journal={Journal of the American Statistical Association},
  volume={106},
  number={496},
  pages={1602--1614},
  year={2011},
  publisher={Taylor \& Francis}
}

@inproceedings{peebles2023scalable,
  title={Scalable diffusion models with transformers},
  author={Peebles, William and Xie, Saining},
  booktitle={Proceedings of the IEEE/CVF international conference on computer vision},
  pages={4195--4205},
  year={2023}
}

@inproceedings{he2022masked,
  title={Masked autoencoders are scalable vision learners},
  author={He, Kaiming and Chen, Xinlei and Xie, Saining and Li, Yanghao and Doll{\'a}r, Piotr and Girshick, Ross},
  booktitle={Proceedings of the IEEE/CVF conference on computer vision and pattern recognition},
  pages={16000--16009},
  year={2022}
}

@article{goldberger2000physiobank,
  title={PhysioBank, PhysioToolkit, and PhysioNet: components of a new research resource for complex physiologic signals},
  author={Goldberger, Ary L and Amaral, Luis AN and Glass, Leon and Hausdorff, Jeffrey M and Ivanov, Plamen Ch and Mark, Roger G and Mietus, Joseph E and Moody, George B and Peng, Chung-Kang and Stanley, H Eugene},
  journal={circulation},
  volume={101},
  number={23},
  pages={e215--e220},
  year={2000},
  publisher={Lippincott Williams \& Wilkins}
}

@article{li2019misgan,
  title={Misgan: Learning from incomplete data with generative adversarial networks},
  author={Li, Steven Cheng-Xian and Jiang, Bo and Marlin, Benjamin},
  journal={arXiv preprint arXiv:1902.09599},
  year={2019}
}

@article{zhang2017cautionary,
  title={Cautionary tales on air-quality improvement in Beijing},
  author={Zhang, Shuyi and Guo, Bin and Dong, Anlan and He, Jing and Xu, Ziping and Chen, Song Xi},
  journal={Proceedings of the Royal Society A: Mathematical, Physical and Engineering Sciences},
  volume={473},
  number={2205},
  pages={20170457},
  year={2017},
  publisher={The Royal Society Publishing}
}

@article{coletta2023constrained,
  title={On the constrained time-series generation problem},
  author={Coletta, Andrea and Gopalakrishnan, Sriram and Borrajo, Daniel and Vyetrenko, Svitlana},
  journal={Advances in Neural Information Processing Systems},
  volume={36},
  pages={61048--61059},
  year={2023}
}

@inproceedings{esser2024scaling,
  title={Scaling rectified flow transformers for high-resolution image synthesis},
  author={Esser, Patrick and Kulal, Sumith and Blattmann, Andreas and Entezari, Rahim and M{\"u}ller, Jonas and Saini, Harry and Levi, Yam and Lorenz, Dominik and Sauer, Axel and Boesel, Frederic and others},
  booktitle={Forty-first international conference on machine learning},
  year={2024}
}

@article{ho2022classifier,
  title={Classifier-free diffusion guidance},
  author={Ho, Jonathan and Salimans, Tim},
  journal={arXiv preprint arXiv:2207.12598},
  year={2022}
}

@article{ma2021identifiable,
  title={Identifiable generative models for missing not at random data imputation},
  author={Ma, Chao and Zhang, Cheng},
  journal={Advances in Neural Information Processing Systems},
  volume={34},
  pages={27645--27658},
  year={2021}
}

@article{kim2022refining,
  title={Refining generative process with discriminator guidance in score-based diffusion models},
  author={Kim, Dongjun and Kim, Yeongmin and Kwon, Se Jung and Kang, Wanmo and Moon, Il-Chul},
  journal={arXiv preprint arXiv:2211.17091},
  year={2022}
}

@inproceedings{samdani2012unified,
  title={Unified expectation maximization},
  author={Samdani, Rajhans and Chang, Ming-Wei and Roth, Dan},
  booktitle={Proceedings of the 2012 Conference of the North American Chapter of the Association for Computational Linguistics: Human Language Technologies},
  pages={688--698},
  year={2012}
}

@book{schafer1997analysis,
  title={Analysis of incomplete multivariate data},
  author={Schafer, Joseph L},
  year={1997},
  publisher={CRC press}
}

@article{goodfellow2014generative,
  title={Generative adversarial nets},
  author={Goodfellow, Ian J and Pouget-Abadie, Jean and Mirza, Mehdi and Xu, Bing and Warde-Farley, David and Ozair, Sherjil and Courville, Aaron and Bengio, Yoshua},
  journal={Advances in neural information processing systems},
  volume={27},
  year={2014}
}

@article{kingma2013auto,
  title={Auto-encoding variational bayes},
  author={Kingma, Diederik P and Welling, Max},
  journal={arXiv preprint arXiv:1312.6114},
  year={2013}
}

@article{du2023pypots,
    title = {{PyPOTS: A Python Toolkit for Machine Learning on Partially-Observed Time Series}},
    author = {Wenjie Du and Yiyuan Yang and Linglong Qian and Jun Wang and Qingsong Wen},
    journal = {arXiv preprint arXiv:2305.18811},
    year = {2023}
}

@article{heusel2017gans,
  title={Gans trained by a two time-scale update rule converge to a local nash equilibrium},
  author={Heusel, Martin and Ramsauer, Hubert and Unterthiner, Thomas and Nessler, Bernhard and Hochreiter, Sepp},
  journal={Advances in neural information processing systems},
  volume={30},
  year={2017}
}

@misc{Seitzer2020FID,
  author={Maximilian Seitzer},
  title={{pytorch-fid: FID Score for PyTorch}},
  month={August},
  year={2020},
  note={Version 0.3.0},
  howpublished={\url{https://github.com/mseitzer/pytorch-fid}},
}

@article{carreras2021missing,
  title={Missing not at random in end of life care studies: multiple imputation and sensitivity analysis on data from the ACTION study},
  author={Carreras, Giulia and Miccinesi, Guido and Wilcock, Andrew and Preston, Nancy and Nieboer, Daan and Deliens, Luc and Groenvold, Mogensm and Lunder, Urska and van der Heide, Agnes and Baccini, Michela and others},
  journal={BMC medical research methodology},
  volume={21},
  number={1},
  pages={13},
  year={2021},
  publisher={Springer}
}

@inproceedings{CelebAMask-HQ,
  title = {MaskGAN: Towards Diverse and Interactive Facial Image Manipulation},
  author = {Lee, Cheng-Han and Liu, Ziwei and Wu, Lingyun and Luo, Ping},
  booktitle = {IEEE Conference on Computer Vision and Pattern Recognition (CVPR)},
  year = {2020}
}

@inproceedings{liu2024self,
  title={Self-supervision improves diffusion models for tabular data imputation},
  author={Liu, Yixin and Ajanthan, Thalaiyasingam and Husain, Hisham and Nguyen, Vu},
  booktitle={Proceedings of the 33rd ACM International Conference on Information and Knowledge Management},
  pages={1513--1522},
  year={2024}
}

@article{karras2022elucidating,
  title={Elucidating the design space of diffusion-based generative models},
  author={Karras, Tero and Aittala, Miika and Aila, Timo and Laine, Samuli},
  journal={Advances in neural information processing systems},
  volume={35},
  pages={26565--26577},
  year={2022}
}

@inproceedings{jarrett2022hyperimpute,
  title={Hyperimpute: Generalized iterative imputation with automatic model selection},
  author={Jarrett, Daniel and Cebere, Bogdan C and Liu, Tennison and Curth, Alicia and van der Schaar, Mihaela},
  booktitle={International Conference on Machine Learning},
  pages={9916--9937},
  year={2022},
  organization={PMLR}
}

@inproceedings{muzellec2020missing,
  title={Missing data imputation using optimal transport},
  author={Muzellec, Boris and Josse, Julie and Boyer, Claire and Cuturi, Marco},
  booktitle={International Conference on Machine Learning},
  pages={7130--7140},
  year={2020},
  organization={PMLR}
}

@article{miao2015identification,
  title={Identification, doubly robust estimation, and semiparametric efficiency theory of nonignorable missing data with a shadow variable},
  author={Miao, Wang and Liu, Lan and Tchetgen, Eric Tchetgen and Geng, Zhi},
  journal={arXiv preprint arXiv:1509.02556},
  year={2015}
}

@article{cursio2019latent,
  title={Latent trait shared-parameter mixed models for missing ecological momentary assessment data},
  author={Cursio, John F and Mermelstein, Robin J and Hedeker, Donald},
  journal={Statistics in Medicine},
  volume={38},
  number={4},
  pages={660--673},
  year={2019},
  publisher={Wiley Online Library}
}

@article{shah2017distribution,
  title={Distribution based nearest neighbor imputation for truncated high dimensional data with applications to pre-clinical and clinical metabolomics studies},
  author={Shah, Jasmit S and Rai, Shesh N and DeFilippis, Andrew P and Hill, Bradford G and Bhatnagar, Aruni and Brock, Guy N},
  journal={BMC bioinformatics},
  volume={18},
  number={1},
  pages={114},
  year={2017},
  publisher={Springer}
}

@article{cazelles2020wasserstein,
  title={The Wasserstein-Fourier distance for stationary time series},
  author={Cazelles, Elsa and Robert, Arnaud and Tobar, Felipe},
  journal={IEEE Transactions on Signal Processing},
  volume={69},
  pages={709--721},
  year={2020},
  publisher={IEEE}
}

@article{dempster1977maximum,
  title={Maximum likelihood from incomplete data via the EM algorithm},
  author={Dempster, Arthur P and Laird, Nan M and Rubin, Donald B},
  journal={Journal of the royal statistical society: series B (methodological)},
  volume={39},
  number={1},
  pages={1--22},
  year={1977},
  publisher={Wiley Online Library}
}

@inproceedings{ronneberger2015u,
  title={U-net: Convolutional networks for biomedical image segmentation},
  author={Ronneberger, Olaf and Fischer, Philipp and Brox, Thomas},
  booktitle={International Conference on Medical image computing and computer-assisted intervention},
  pages={234--241},
  year={2015},
  organization={Springer}
}

@inproceedings{zhao2023transformed,
  title={Transformed distribution matching for missing value imputation},
  author={Zhao, He and Sun, Ke and Dezfouli, Amir and Bonilla, Edwin V},
  booktitle={International Conference on Machine Learning},
  pages={42159--42186},
  year={2023},
  organization={PMLR}
}

@article{zheng2022diffusion,
  title={Diffusion models for missing value imputation in tabular data},
  author={Zheng, Shuhan and Charoenphakdee, Nontawat},
  journal={arXiv preprint arXiv:2210.17128},
  year={2022}
}

@inproceedings{lugmayr2022repaint,
  title={Repaint: Inpainting using denoising diffusion probabilistic models},
  author={Lugmayr, Andreas and Danelljan, Martin and Romero, Andres and Yu, Fisher and Timofte, Radu and Van Gool, Luc},
  booktitle={Proceedings of the IEEE/CVF conference on computer vision and pattern recognition},
  pages={11461--11471},
  year={2022}
}

@article{meng2021sdedit,
  title={Sdedit: Guided image synthesis and editing with stochastic differential equations},
  author={Meng, Chenlin and He, Yutong and Song, Yang and Song, Jiaming and Wu, Jiajun and Zhu, Jun-Yan and Ermon, Stefano},
  journal={arXiv preprint arXiv:2108.01073},
  year={2021}
}

@article{austin2021structured,
  title={Structured denoising diffusion models in discrete state-spaces},
  author={Austin, Jacob and Johnson, Daniel D and Ho, Jonathan and Tarlow, Daniel and Van Den Berg, Rianne},
  journal={Advances in neural information processing systems},
  volume={34},
  pages={17981--17993},
  year={2021}
}

@article{hoogeboom2021argmax,
  title={Argmax flows and multinomial diffusion: Learning categorical distributions},
  author={Hoogeboom, Emiel and Nielsen, Didrik and Jaini, Priyank and Forr{\'e}, Patrick and Welling, Max},
  journal={Advances in neural information processing systems},
  volume={34},
  pages={12454--12465},
  year={2021}
}

@inproceedings{shi2024tabdiff,
  title={TabDiff: a unified diffusion model for multi-modal tabular data generation},
  author={Shi, Juntong and Xu, Minkai and Hua, Harper and Zhang, Hengrui and Ermon, Stefano and Leskovec, Jure},
  booktitle={NeurIPS 2024 Third Table Representation Learning Workshop},
  year={2024}
}
\bibliographystyle{unsrt}


\newpage
\appendix

\section*{Appendix}
\renewcommand{\contentsname}{\vspace{-1.0cm}}

\addtocontents{toc}{\protect\setcounter{tocdepth}{2}}

\tableofcontents
\vspace{1cm}


\section{Proofs}
\label{app:proof}

\subsection{Proof of Proposition \ref{prop:elbo}}
\label{app:prf1}

\propelbo*
\begin{proof}
    
    \begin{align}
    \log{p_{\theta,\phi}({X}^{\text{obs}},{M})} &=\log \int_{{Z}}\int_{{X}^{\text{mis}}} p_{\theta,\phi}({X}^{\text{obs}},{X}^{\text{mis}},{M},{Z})dX^{\text{mis}}d{Z} \\
    &=\log \int_{{Z}}\int_{{X}^{\text{mis}}} \frac{p_{\theta,\phi}({X}^{\text{obs}},{X}^{\text{mis}},{M},{Z})}{q({X}^{\text{mis}},{Z}|{X}^{\text{obs}},{M})}q({X}^{\text{mis}},{Z}|{X}^{\text{obs}},{M})dX^{\text{mis}}d{Z} \\
    &=\log \int_{{Z}}\int_{{X}^{\text{mis}}} \frac{p_{\phi}({M}|{X})p_\theta({X}|{Z})p_\theta({Z})}{q({Z}|{X})q({X}^{\text{mis}}|{X}^{\text{obs}},{M})}q({Z}|{X})q({X}^{\text{mis}}|{X}^{\text{obs}},{M})dX^{\text{mis}}d{Z}\\
    &\geq \mathbb{E}_{{Z}\sim q({Z}|{X}),\, {X}^{\text{mis}}\sim q({X}^{\text{mis}}|{X}^{\text{obs}},{M})} [\log p_{\phi}({M}|{X})+ \log\frac{p_\theta({X}|{Z})p_\theta({Z})}{q({Z}|{X})}] \label{eq:dlvm_elbo} \\
    &\quad + \mathbb{H}(q({X}^{\text{mis}}|{X}^{\text{obs}},{M}))
    \end{align}
    Equation \ref{eq:dlvm_elbo} defines the loss objective between the true parameters and the corresponding variational distribution. Replacing the latent variable ${Z}$ with the diffusion latents ${X}_{1:T}$, can be formulated as equation \ref{eq:diff_elbo} under the Markov property of the diffusion process.
    \begin{align}
        \mathbb{E}_{{X}_{1:T}\sim q({X}_{1:T}|{X}),\, {X}^{\text{mis}}\sim q({X}^{\text{mis}}|{X}^{\text{obs}},{M})} [\log p_{\phi}({M}|{X})+ \log\frac{p_\theta({X}|{X}_1)p_\theta({{X}_{1:T}})}{q({{X}_{1:T}}|{X})}]
    \label{eq:diff_elbo}
    \end{align}
\end{proof}

\subsection{Proof of Corollary \ref{cor:monotonic}}
\label{app:cor1}

\cormonotonic*

\begin{proof}
Regarding the missing values and diffusion trajectories as latent variables
$$
Z \coloneqq (X_0^{\mathrm{mis}}, X_{1:T})
$$
where
$$
X_0 = (X_0^{\mathrm{obs}}, X_0^{\mathrm{mis}})
$$
For notational simplicity, let
$$
Y \coloneqq (X_0^{\mathrm{obs}}, M)
$$
denote the observed variables. Then the marginal likelihood optimized by the
EM procedure is
$$
p_{\theta,\phi}(Y)=p_{\theta,\phi}(X_0^{\mathrm{obs}},M)=\int p_{\theta,\phi}(Y,Z)\,dZ
$$

For any variational distribution \(q(Z)\), the marginal log-likelihood can be
decomposed as
\begin{align}
\log p_{\theta,\phi}(Y)=\log \int p_{\theta,\phi}(Y,Z)\,dZ=\log \int q(Z)\frac{p_{\theta,\phi}(Y,Z)}{q(Z)}\,dZ\geq\mathbb{E}_{q(Z)}\left[\log \frac{p_{\theta,\phi}(Y,Z)}{q(Z)}\right]
\end{align}
We define the corresponding evidence lower bound as
\begin{align}
\mathcal{L}(q,\theta,\phi)\coloneqq\mathbb{E}_{q(Z)}\left[\log p_{\theta,\phi}(X_0^{\mathrm{obs}},X_0^{\mathrm{mis}},M,X_{1:T})-\log q(Z)\right]
\end{align}
where the variational distribution $q(Z)$ is decomposed as we mentioned in Proposition~\ref{prop:elbo},
\begin{align}
q(Z)=q(X_0^{\mathrm{mis}},X_{1:T}\mid X_0^{\mathrm{obs}},M)=q(X_0^{\mathrm{mis}}\mid X_0^{\mathrm{obs}},M)\prod_{t=1}^{T}q(X_t\mid X_{t-1})
\end{align}

We now show the monotonicity of the EM framework. At iteration \(k\), the E step sets the variational distribution to the posterior under the current parameters:
\begin{equation}
q^{(k+1)}(Z)=p_{\theta^{(k)},\phi^{(k)}}(Z\mid Y)=p_{\theta^{(k)},\phi^{(k)}}(X_0^{\mathrm{mis}},X_{1:T}\mid X_0^{\mathrm{obs}},M)
\end{equation}
For this choice of \(q^{(k+1)}\), the ELBO is tight at the current parameters.
\begin{align}
\mathcal{L}(q^{(k+1)},\theta^{(k)},\phi^{(k)})
&=
\mathbb{E}_{q^{(k+1)}(Z)}\left[\log\frac{p_{\theta^{(k)},\phi^{(k)}}(Y,Z)}{q^{(k+1)}(Z)}
\right] \\
&=\mathbb{E}_{p_{\theta^{(k)},\phi^{(k)}}(Z\mid Y)}\left[\log\frac{p_{\theta^{(k)},\phi^{(k)}}(Y,Z)}{p_{\theta^{(k)},\phi^{(k)}}(Z\mid Y)}\right] \\
&=\mathbb{E}_{p_{\theta^{(k)},\phi^{(k)}}(Z\mid Y)}\left[\log p_{\theta^{(k)},\phi^{(k)}}(Y)\right] \\
&=\log p_{\theta^{(k)},\phi^{(k)}}(Y)
\end{align}
The M step then updates the parameters such that
\begin{equation}
\mathcal{L}(q^{(k+1)},\theta^{(k+1)},\phi^{(k+1)})
\geq
\mathcal{L}(q^{(k+1)},\theta^{(k)},\phi^{(k)})
\end{equation}
Furthermore, since \(\mathcal{L}(q,\theta,\phi)\) is a lower bound on the log-likelihood, we have
\begin{equation}
\log p_{\theta^{(k+1)},\phi^{(k+1)}}(Y)
\geq
\mathcal{L}(q^{(k+1)},\theta^{(k+1)},\phi^{(k+1)})
\end{equation}
Combine the above inequalities
\begin{align}
\log p_{\theta^{(k+1)},\phi^{(k+1)}}(Y)
&\geq\mathcal{L}(q^{(k+1)},\theta^{(k+1)},\phi^{(k+1)}) \\
&\geq\mathcal{L}(q^{(k+1)},\theta^{(k)},\phi^{(k)}) \\
&=\log p_{\theta^{(k)},\phi^{(k)}}(Y)
\end{align}
Recall that \(Y=(X_0^{\mathrm{obs}},M)\), we obtain
\begin{equation}
\log p_{\theta^{(k+1)},\phi^{(k+1)}}(X_0^{\mathrm{obs}},M)
\geq
\log p_{\theta^{(k)},\phi^{(k)}}(X_0^{\mathrm{obs}},M)
\end{equation}
\end{proof}

This result should be interpreted as a justification of the underlying EM objective rather than a guarantee for the approximate neural implementation.

\subsection{Proof of Proposition \ref{prop:appx_guide}}
\label{app:prf2}

\propguide*
\begin{proof}
From the graphical model of Figure~\ref{fig:PRDIM_main_fig} (a), conditional independence for missing process satisfies $p_\phi(M|X_0,X_t)=p_\phi(M|X_0)$ at Equation~\ref{eq:ci}. The approximation~\ref{eq:dps} came from the Thm 1 of \citep{chung2023diffusion}.
\begin{align}
    &\nabla_{X_t} \log p_{\theta,\phi}(X_t | X_0^{\text{obs}}, {M}) 
    = \nabla_{X_t} \log p_\theta(X_t | X_0^{\text{obs}})
    + \nabla_{X_t} \log p_{\theta, \phi}({M} | X_t, X_0^{\text{obs}}) \\
    &= \nabla_{X_t} \log p_\theta({X}_t | X_0^{\text{obs}})
    + \nabla_{X_t} \log \int p_\phi(M |X_0)p_\theta(X_0^{\text{mis}}| X_t, X_0^{\text{obs}})dX_0^{\text{mis}} \label{eq:ci} \\
    &\simeq \nabla_{X_t} \log p_\theta(X_t | X_0^{\text{obs}}) + \nabla_{X_t}\log p_\phi(M|f_\theta(X_t,t;X_0^{\text{obs}})^\text{mis},X_0^{\text{obs}}) \label{eq:dps} \\
    &=\nabla_{X_t} \log p_\theta(X_t | X_0^{\text{obs}}) + \nabla_{X_t}M\log D_{\phi^*}\Big(f_\theta(X_t,t;X_0^{\text{obs}})^\text{mis},X_0^{\text{obs}}\Big) \\
    &+\nabla_{X_t}(1-M)\log \Big\{1-D_{\phi^*}\Big(f_\theta(X_t,t;X_0^{\text{obs}})^\text{mis},X_0^{\text{obs}}\Big)\Big\} \\
    &=\nabla_{X_t} \log p_\theta(X_t | X_0^{\text{obs}}) -\nabla_{X_t}  \mathcal{L}_\text{PR}\Big(M,(f_\theta(X_t,t;X_0^{\text{obs}})^\text{mis}, X_0^{\text{obs}}), D_{\phi^*}\Big)
\end{align}
Since $d$-th element of $D_{\phi^*}(X_0)$ converges to $p_\phi(M_d=1|X_0)$ \citep{ipsen2020not, ma2021identifiable}, entire probability of mask variable $M$ follows:
\begin{align}
    &\log p_\phi(M|f_\theta(X_t,t;X_0^{\text{obs}})^\text{mis},X_0^{\text{obs}})=\sum_{i=1}^{D}\log p_\phi(M_d|f_\theta(X_t,t;X_0^{\text{obs}})^\text{mis},X_0^{\text{obs}}) \\
    &= M\log D_{\phi^*}\Big(f_\theta(X_t,t;X_0^{\text{obs}})^\text{mis},X_0^{\text{obs}}\Big)+(1-M)\log \Big\{1-D_{\phi^*}\Big(f_\theta(X_t,t;X_0^{\text{obs}})^\text{mis},X_0^{\text{obs}}\Big)\Big\}
\end{align}
where $(f_\theta(X_t,t;X_0^{\text{obs}})^\text{mis},X_0^{\text{obs}})= f_\theta(X_t,t;X_0^{\text{obs}})\odot(\mathbf{1}-M)+ X_0^{\text{obs}}\odot M$.
\end{proof}

\subsection{Rewritten Theorem 1 of DiffPuter}
\label{app:diffputer_thm}

\begin{theorem}
    Let $X_T$ be a sample from the prior distribution $p_\theta(X_T)=\mathcal{N}(0, \mathbf{I})$, $X$ be the data to impute, and the known entries of $X$ are denoted by $X^\text{obs}=X_0^\text{obs}$. The score function $\nabla_{X_t}\log p(X_t)$ could be parameterized by neural network $f_{\theta} (X_t, t;X_0^\text{obs})$. Applying forward and reverse process of the diffusion model iteratively from $t=T \gg 0$ until $t=0$ with  $\Delta t\rightarrow 0$, then $\hat{X}_0$ is a sample from $p_\theta(X)$, under the condition that its observed entries $\hat{X}_0^\text{obs}=X_0^\text{obs}$. Formally,  
\begin{align}
\hat{X}_0 \sim p_\theta(X| X^\text{obs}=X_0^\text{obs})     
\end{align}
\end{theorem}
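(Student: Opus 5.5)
The statement is the DiffPuter result: running the conditional reverse process, with the observed entries re-imposed at every step, returns an exact sample from $p_\theta(X\mid X^{\text{obs}}=X_0^{\text{obs}})$. The plan is to treat this as conditional reverse-time SDE sampling in the continuous-time score-based framework of \citep{song2021scorebased}, and then pass from the discrete chain to the limit $\Delta t\to 0$.

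First, I will fix the forward process from Equation~\ref{eq:gaussian_trans}, written as a VP SDE $dX_t=-\tfrac12\beta(t)X_t\,dt+\sqrt{\beta(t)}\,dW_t$, started from $X_0\sim p_\theta(X)$. Because the noise is isotropic, the forward process factorizes over coordinates. The observed block therefore evolves by its own marginal law $q(X_t^{\text{obs}}\mid X_0^{\text{obs}})$, which is known in closed form.

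Second, I will condition on $X_0^{\text{obs}}$. The process $(X_t)_t$ given $X_0^{\text{obs}}$ is again a diffusion, and its time reversal (Anderson's theorem) is
\begin{align}
dX_t=\big[-\tfrac12\beta(t)X_t-\beta(t)\nabla_{X_t}\log p_\theta(X_t\mid X_0^{\text{obs}})\big]dt+\sqrt{\beta(t)}\,d\bar W_t .
\end{align}
This is started at $t=T$ from $p(X_T\mid X_0^{\text{obs}})$. For $T\gg0$ that start is approximately $\mathcal{N}(0,\mathbf I)$, since $\bar\alpha_T\to0$, and the error vanishes in total variation as $T\to\infty$.

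Third, I will assume the network is perfect, so $f_\theta(X_t,t;X_0^{\text{obs}})$ gives $\mathbb{E}[X_0\mid X_t,X_0^{\text{obs}}]$. Tweedie's formula then converts it into the conditional score $\nabla\log p_\theta(X_t\mid X_0^{\text{obs}})$. With that identification, the reverse SDE's law at $t=0$ is exactly $p_\theta(X\mid X_0^{\text{obs}})$.

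Fourth, I will justify the ``iterative forward and reverse'' implementation, in which the observed block is reset to a forward-noised copy of $X_0^{\text{obs}}$ at each step. This replacement draws $X_t^{\text{obs}}$ from its exact conditional marginal given $X_0^{\text{obs}}$. The missing block is then moved by the reverse drift, which is computed from the joint state. I will show that one step of this scheme and one Euler–Maruyama step of the conditional reverse SDE agree to $O(\Delta t)$ in distribution. Standard weak convergence of Euler–Maruyama for Lipschitz drifts then gives the $\Delta t\to0$ limit.

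The main obstacle is this fourth step. The replaced $X_t^{\text{obs}}$ is drawn independently of the current $X_t^{\text{mis}}$, so the pair is not obviously a draw from the joint conditional $p(X_t\mid X_0^{\text{obs}})$. My first attempt would use the coordinate-wise independence of the forward noise. Given $X_0^{\text{obs}}$, the observed noise path is independent of $X_0^{\text{mis}}$, so it is also independent of $X_t^{\text{mis}}$. Hence the product of the replaced observed marginal and the current missing marginal is the correct joint law, by induction over the steps.

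It is also a valid fallback to cite Theorem~1 of \citep{zhang2025diffputer} directly, noting that the statement here is only a notational rewrite of it.
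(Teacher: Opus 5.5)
Your proof is correct under the reading you adopt, and it takes a genuinely different route from the paper. The paper gives no argument for this statement and simply defers to Theorem~1 of DiffPuter, which is your fallback. Your version does three things:
\begin{itemize}
\item it identifies the ideal $f_\theta(X_t,t;X_0^{\text{obs}})$ with $\mathbb{E}[X_0\mid X_t,X_0^{\text{obs}}]$, which is the minimizer of the paper's M-step loss $\mathcal{L}_\text{diff}$;
\item it converts this via Tweedie into the score of $p_\theta(X_t\mid X_0^{\text{obs}})$;
\item it time-reverses the forward process started from $p_\theta(X_0\mid X_0^{\text{obs}})$.
\end{itemize}
This actually establishes the claim under explicit idealizations (perfect network, $T\to\infty$, $\Delta t\to0$). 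The citation only transfers whatever DiffPuter's own setting supports.

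Your fourth step can be made both simpler and tighter. The factorization $p_t(X_t\mid X_0^{\text{obs}})=q(X_t^{\text{obs}}\mid X_0^{\text{obs}})\,p_t(X_t^{\text{mis}}\mid X_0^{\text{obs}})$ that you invoke implies that the missing component of the conditional score, $\nabla_{X_t^{\text{mis}}}\log p_t(X_t^{\text{mis}}\mid X_0^{\text{obs}})$, does not depend on $X_t^{\text{obs}}$. So resetting the observed block leaves the missing-block update \emph{exactly} unchanged.

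This matters. A one-step agreement of only $O(\Delta t)$, as you propose to show, would accumulate to an $O(1)$ error over $T/\Delta t$ steps. With exact agreement, the theorem reduces to the standard reverse-SDE result for the autonomous OU process $X_t^{\text{mis}}$ given $X_0^{\text{obs}}$. Run it to $t=\delta$ and let $\delta\to0$, since the drift need not be Lipschitz up to $t=0$.

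You must state explicitly that you have replaced the statement's $\nabla_{X_t}\log p(X_t)$ by the conditional score, because everything rests on it. For the same reason, the statement is not ``only a notational rewrite'' of DiffPuter's theorem. In DiffPuter the network is unconditional, so the missing block is driven by $\nabla_{X_t^{\text{mis}}}\log p_t(X_t^{\text{mis}}\mid X_t^{\text{obs}})$ evaluated at a freshly re-noised observed block. Your factorization argument does not apply there, and exactness in fact fails.

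A concrete case shows the failure. Take a standard bivariate Gaussian with correlation $\rho$ and observed value $x$. As $\Delta t\to0$ the re-noised observed coordinate averages out, and the scheme returns conditional mean $\rho x/2+O(\rho^3)$ instead of $\rho x$. So claim the result for the conditional network the paper actually uses, and do not treat the citation as an equivalent fallback.
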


We presents a rewritten version of Theorem 1 from DiffPuter~\cite{zhang2025diffputer}, and refer the reader to the original paper for the detailed proof. The theorem establishes that when learning the joint probability of $X^{\text{obs}}$ and $X^{\text{mis}}$, the missing values can be inferred by conditioning on the observed values of a given sample. We adopt the same line of reasoning in Section~\ref{sec:estep} to support our theoretical development.

\section{Related Works}
\label{app:rel_works}

Traditional imputation approaches typically rely on simple statistical heuristics such as mean, median, or last observation carried forward to fill in missing entries of multivariate time-series data \citep{kantardzic2011data}. These strategies often fail to capture the complex temporal dynamics and cross-feature dependencies in them. To improve upon these methods, more sophisticated methods such as MICE \citep{van2011mice}, which iteratively applies the EM algorithm, and MissForest \citep{stekhoven2012missforest}, which leverages random forests for iterative refinement, have been proposed. Although these techniques provide more better imputation performances than naive statistical rules, their capacity remains limited when handling high-dimensional time-series data with intricate dependencies.

\paragraph{Imputation with Deep Learning}

Early attempts to exploit deep learning for time-series imputation include mRNN \citep{yoon2018estimating}, which leverages recurrent neural networks to capture temporal dependencies and model complex patterns in partially observed sequences. BRITS \citep{cao2018brits} further improves upon RNN-based imputers by introducing a bidirectional structure, allowing information to flow forward and backward across time to enhance estimation accuracy. Then NAOMI \citep{liu2019naomi} combines multi-resolution RNNs with adversarial training strategies to refine imputations at different time scales. SAITS \citep{du2023saits} introduces a self-attention mechanism to better capture long-range temporal dependencies.
Most of them could be reproduced with the released Python toolkit~\citep{du2023pypots}.

Beyond time-series-specific architectures, several general-purpose imputation frameworks have also shaped the development of recent methods. GAIN \citep{yoon2018gain}, although not tailored to time-series data, was the first to introduce an adversarial discriminator to imputation, providing a novel mechanism to distinguish observed values and missing values. Successively, misGAN~\citep{li2019misgan} provided multiple generators and discriminators system for stable training across varied missing patterns. Flow-based approaches such as MCFlow \citep{richardson2020mcflow} further demonstrated that the EM algorithm can be combined with invertible generative models to jointly optimize flow parameters and missing entries. not-MIWAE \citep{ipsen2020not} addressed the MNAR scenario by explicitly optimizing a missing model within the ELBO objective, which can inference missing values by missing model weighted importance sampling. Although originally proposed for general imputation tasks, these frameworks have significantly influenced subsequent advances in time-series imputation by highlighting the value of probabilistic, adversarial, and likelihood-based modeling.

\paragraph{Diffusion-based Approaches for Imputation}

In the time-series domain, TimeGrad \citep{rasul2021autoregressive} applies diffusion to probabilistic forecasting, though its design mainly focuses on forecasting task. For the imputation task, CSDI \citep{tashiro2021csdi} introduces a conditional diffusion framework with masking to handle arbitrary missing. Building on this line of research, methods such as SSSD \citep{alcaraz2022diffusion} and Diffusion-TS \citep{yuan2024diffusion} incorporate additional regularization losses tailored to time-series characteristics, thereby enhancing the interpretability of the imputed sequences. On the other hand, MTSCI \citep{zhou2024mtsci} integrates a contrastive loss to maximize mutual information between observed variable and missing variable which improves generated sample consistency. More recently, DiffPuter \citep{zhang2025diffputer} further improves probabilistic imputation with the EM algorithm, which progressively refines the missing values.

\section{Terminology Details}
\label{app:term_details}

\subsection{In-Sample and Out-of-Sample imputation}
\label{app:in_out}

In our experiments, In-Sample imputation refers to evaluating imputation performance on the same dataset used for training, whereas out-of-sample imputation evaluates the model on held-out test splits containing unseen data. The simulated MNAR pattern distributions are generated consistently for each split to ensure complete evaluation. Figure~\ref{fig:inout_description} summarizes the distinction between the in-sample imputation process and the out-of-sample imputation process.

\begin{figure}[t]
    \centering
    \includegraphics[width=0.5\linewidth]{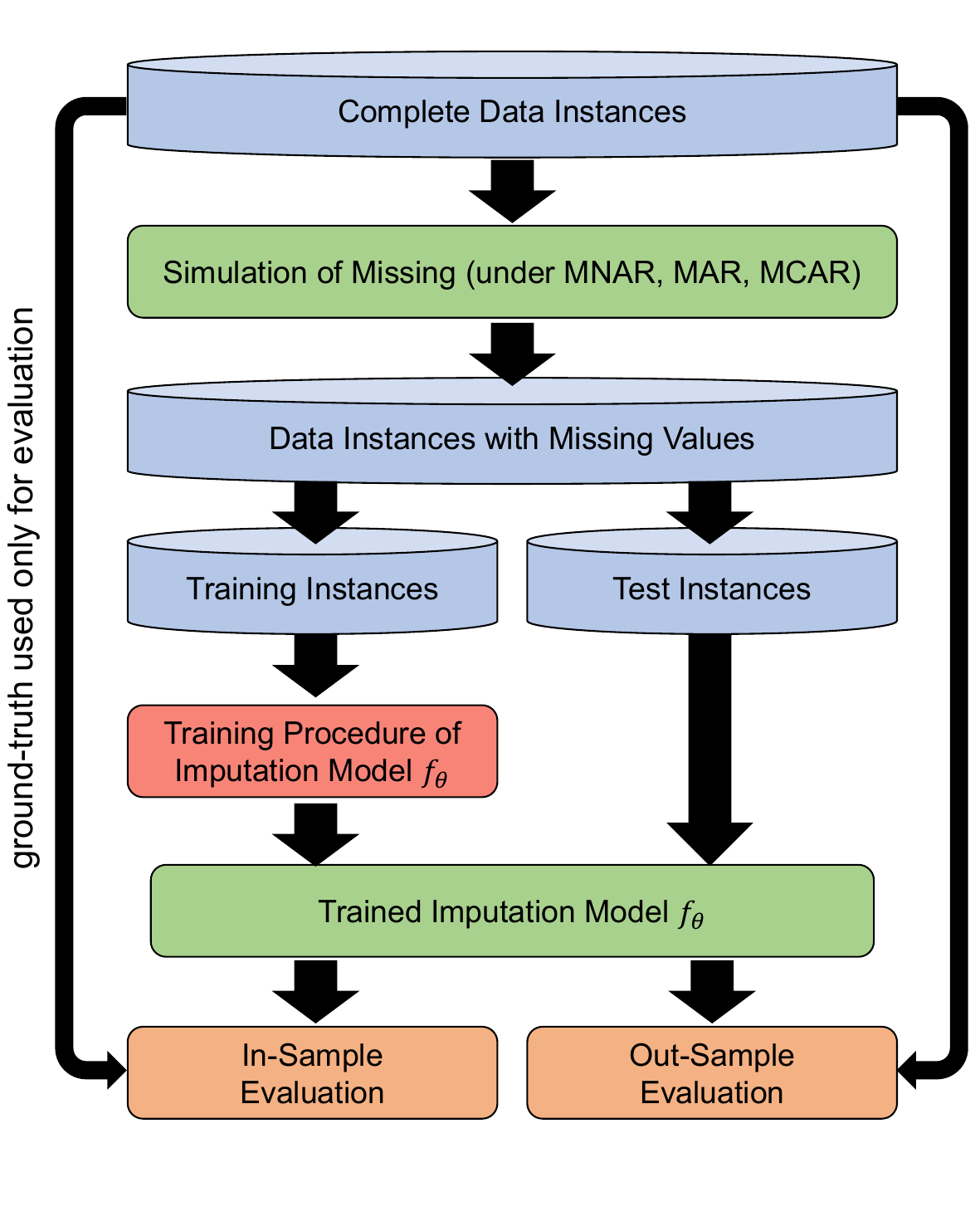}
    \vspace{-0.8cm}
    \caption{Overall information flow between the in-sample imputation and the out-of-sample imputation.}
    \label{fig:inout_description}
\end{figure}

\subsection{Soft EM and Hard EM}
\label{app:soft_hard}

Soft EM refers to an approach in which multiple samples are generated using Monte Carlo sampling, and the model parameters are updated based on the expected imputed value. In this case, the imputed sample corresponds to the posterior expectation
$$
\hat{X}_0=\mathbb{E}_{p_\theta(X_0|X_0^{obs}, M)}[X_0]
$$
In contrast, Hard EM can be viewed as producing a single imputed output obtained by the posterior mode:
$$
\hat{X}_0=\text{arg}\max\limits_{X_0}\ p_\theta(X_0|X_0^{obs}, M)
$$
We adopt Hard EM for two main reasons. (i) It significantly reduces the sampling time because only one imputed sample is required at each iteration. (ii) In our experiments, Hard EM yields slightly better imputation performance than Soft EM.
This observation is consistent with the findings of the prior work~\cite{samdani2012unified}, who report that Hard EM performs better when the initialization is strong, whereas Soft EM is preferable under uninformed initialization. Since PRDIM benefits from Phase 1, where the diffusion model is well initialized by CSDI, the Hard EM variant is naturally more effective in our setting.

\subsection{Detailed Description of Data Processing and Objective}
\label{app:detail_data}

In this section, we aim to clarify the rationale behind our choice of datasets, draw theoretical connections to the EM-based training procedure in PRDIM, and contrast our evaluation protocol with that of prior studies that share a similar experimental framework.
Figure~\ref{fig:data_processing} provides an overview of two distinct classes of objectives used in existing imputation research, highlighting why directly evaluating models on naturally incomplete datasets such as PhysioNet~\citep{goldberger2000physiobank} or AirQuality~\citep{zhang2017cautionary} can be problematic.

Imputation applicable diffusion models including CSDI~\citep{tashiro2021csdi}, SSSD~\citep{alcaraz2022diffusion}, and Diffusion-TS~\citep{yuan2024diffusion}, generally rely on one of two strategies. (i) injecting artificial missing into a complete dataset so that ground-truth values are available during training, or (ii) into already incomplete datasets, thereby increasing the overall missing ratio and using the resulting data as model input. A key commonality between the two imputation paradigms is that the ground-truth values employed for evaluation are implicitly utilized during model training.

Let $M$ denote the original missing mask of the incomplete dataset $X_0^{\text{obs}}$, and let $A$ denote the mask obtained after applying additional artificial missing. The missing distributions induced by these two masks differ intrinsically, which can be formalized as $p(M|X_0)\neq p(A|X_0,M)$. Consequently, the imputed results generated under these differing mask conditions also become different. (\textit{i.e.} $p_\theta(X_0|M,X_0^{\text{obs}})\neq p_\theta(X_0|A,X_0^{\text{obs}}\odot A)$.) Such discrepancies indicate that the imputation task inevitably involves a latent missing variable $X_0^{\text{mis}}$, whose distribution cannot be directly inferred from artificially masked data alone. This observation motivates the necessity of adopting an Expectation–Maximization (EM) training framework, wherein the missing entries are treated as latent variables and iteratively refined during model optimization.

\begin{figure}[t]
\centering
    \centering
    \vspace{0.2cm}
    \includegraphics[width=0.95\linewidth]{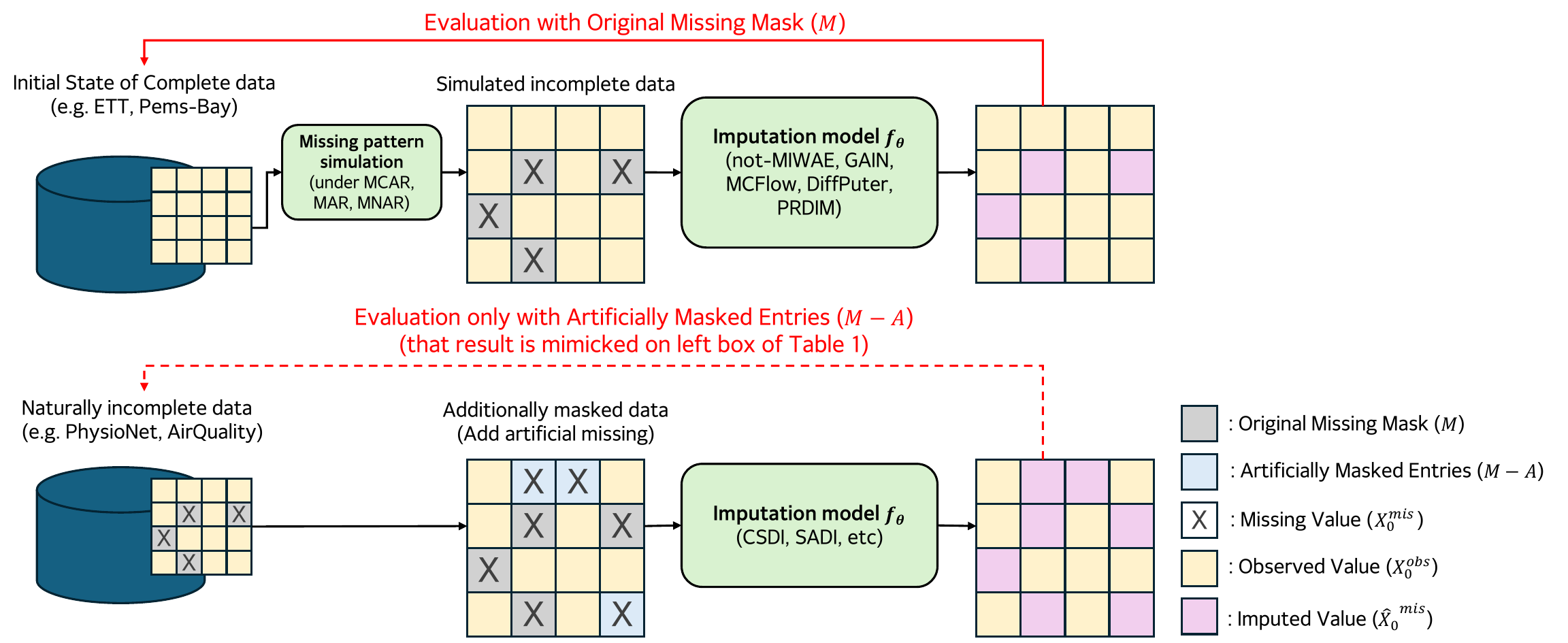}
    \caption{Overview of the data processing pipeline and the distinction between two classes of imputation objectives.}
    \label{fig:data_processing}
\end{figure}

\section{Experimental Details} 
\label{app:exp_details}

\subsection{Evaluation Metrics}
\label{app:eval_metric}

We report three error-based metrics:  
\begin{align}
    \text{RMSE} &= \sqrt{\frac{\sum_{n=1,d=1}^{N,D} (X_{d}^n - \hat{X}_{d}^n)^2\times M_{d}^n}{\sum_{n=1,d=1}^{N,D} M_{d}^n}},
\end{align}
\begin{align}
    \text{MAE}  &= \frac{\sum_{n=1,d=1}^{N,D} |X_{d}^n - \hat{X}_{d}^n|\times M_{d}^n}{\sum_{n=1,d=1}^{N,D} M_{d}^n},
\end{align}
\begin{align}
    \text{MRE}  &= \frac{\sum_{n=1,d=1}^{N,D} |X_{d}^n - \hat{X}_{d}^n|\times M_{d}^n}{\sum_{n=1,d=1}^{N,D} |X_{d}^n|\times M_{d}^n}\times100(\%)
\end{align}
where $X^n_{d}$ denotes the ground-truth value of dimension $d$ of $n$th sample and $\hat{X}^n_{d}$ its imputed counterpart. These complementary measures assess squared error, absolute error, and relative error, providing a comprehensive evaluation of imputation performance.

\subsection{Missing Mechanisms and Datasets}
\label{exp_details:dataset}

\paragraph{Missing Mechanisms}
Table \ref{tab:main_tab} reports results obtained under the following MNAR mechanism. Inspired by the MNAR mechanism of not-MIWAE \citep{ipsen2020not}, we design the MNAR mechanism such that the probability of a missing entry increases exponentially with its value:  
\begin{align}
   p(M_{d} = 1 | X_{d}) = \frac{1}{1 + e^{-\text{logits}}}, \quad 
   \text{logits} = W(X_{d} - b),
\end{align}
where $M_{d} \in \{0,1\}$ denotes the mask variable of entry $X_{d}$, $W$ controls the slope, and $b$ is a bias term. This mechanism ensures that entries with values larger than the mean are more likely to be missing, thus faithfully mimicking MNAR conditions. In main experiments on section~\ref{overall_performance}, we set $W=5$ and $b=0.8$ for all time-series dataset while $W=7$ and $b=0.6$ for FMNIST.

To verify coherent results under MCAR and MNAR mechanisms, we follow the missing simulation procedures of Hyperimpute~\citep{jarrett2022hyperimpute} and MissingOT~\citep{muzellec2020missing}, as reported in Table~\ref{tab:add_missing}. To construct the MNAR setting (\textit{i.e.} top of Table~\ref{tab:add_missing}.), we employ a quantile-based mechanism distinct from the previous logistic approach. A subset of variables is randomly selected, and missing values are generated within the $q$-quantile. Otherwise, in MCAR setting (\textit{i.e.} bottom of Table~\ref{tab:add_missing}.), each value is excluded following the Bernoulli random variable with a fixed parameter. In our implementation, we randomly assign 10\% of the entries as missing to maintain MCAR property.

\paragraph{Time-series and Image Datasets}
Table~\ref{tab:data_config} summarizes the dataset configurations employed in our experiments across (train / test / valid) set. For the three time-series datasets (ETT, STOCK, and PEMS-Bay), the data size are represented as \textit{time length $\times$ feature dimension}, while for FMNIST and CelebA-HQ, the dimensions are denoted as \textit{width $\times$ height}. The reported missing ratios correspond to the proportion of original missing values observed after applying the MNAR mechanism to generate incomplete data for training, thereby reflecting the intrinsic difficulty of the imputation task.

\begin{table}[h]
\centering
\caption{Dataset configuration used in Table~\ref{tab:main_tab}. For time-series datasets (ETT, STOCK, PEMS-Bay), the data size is denoted as \textit{time length $\times$ feature dimension}, whereas for FMNIST, it is expressed as \textit{width $\times$ height}. The missing ratios represent the proportion of original missing values after applying the MNAR mechanism to construct incomplete data. }
\label{tab:data_config}
\resizebox{1.0\textwidth}{!}{
\begin{tabular}{l|ccccc}
\midrule
Dataset & ETTm1 & STOCK & PEMS-Bay & FMNIST & CelebA-HQ\\
\midrule
Data size & $24 \times 7$ & $24 \times 6$ & $12 \times 325$ & $28 \times 28$ & $64\times 64$ \\
\# of Samples & 3861 / 983 / 959 & 2418 / 622 / 622 & 5788 / 1448 / 1448 & 60000 / 5000 / 5000 & 25000 / - / 5000 \\
Missing ratio (\%) & 21.4 / 43.9 / 14.0 & 21.2 / 20.0 / 20.9 & 13.5 / 13.0 / 14.1 & 25.8 / 25.8 / 25.8 & 2.7 / - / 2.7 \\
\midrule
\end{tabular}}
\label{tab:dataset_config}
\vspace{-0.5cm}
\end{table}

\paragraph{Tabular Datasets}
To verify the effectiveness of the pattern recognizer in tabular data imputation, we report the in-sample MAE performance in Table~\ref{tab:tabular_imputation}. To reproduce the tabular data experiments from the DiffPuter framework, we selected a subset of five datasets from the ten UCI datasets originally evaluated. Specifically, the Bean\footnote{\url{https://archive.ics.uci.edu/dataset/602/dry+bean+dataset}}, Magic\footnote{\url{https://archive.ics.uci.edu/dataset/159/magic+gamma+telescope}}, and Gesture\footnote{\url{https://archive.ics.uci.edu/dataset/302/gesture+phase+segmentation}} datasets consist exclusively of continuous features, whereas the Adult\footnote{\url{https://archive.ics.uci.edu/dataset/2/adult}} and Default\footnote{\url{https://archive.ics.uci.edu/dataset/350/default+of+credit+card+clients}} datasets contain both continuous and discrete features. Detailed statistics for these five datasets are summarized in Table~\ref{tab:tabular_data_config}. It is important to note that PRDIM is principally designed as a diffusion imputation framework for continuous feature domain. Consequently, for the Adult and Default datasets, we follow DiffPuter's procedure that applying label encoding and subsequently treating them as continuous features.

\begin{table}[h]
    \centering
    \caption{Statistics of tabular datasets. \# Num denotes the number of numerical features, and \# Cat denotes the number of discrete features.}
    \label{tab:tabular_data_config}
    \resizebox{0.8\linewidth}{!}{
    \begin{tabular}{l|ccccc}
        \toprule
        Dataset & \textbf{\# Total dataset} & \textbf{\# Num} & \textbf{\# Cat} & \textbf{\# In-Sample} & \textbf{\# Out-of-Sample} \\
        \midrule
        Adult & 32,561 & 6 & 8 & 22,792 & 9,769 \\
        Bean & 13,610 & 17 & - & 9,527 & 4,083 \\
        Default & 30,000 & 14 & 10 & 21,000 & 9,000 \\
        Gesture & 9,522 & 32 & - & 6,665 & 2,857 \\
        Magic & 19,020 & 10 & - & 13,314 & 5,706 \\
        \bottomrule
    \end{tabular}}
    \vspace{-0.5cm}
\end{table}

\subsection{Diffusion Imputation Model Configuration}
\label{exp_details:architecture}

In our implementation, the pattern recognizer is designed as a lightweight neural network architecture across dataset modalities to minimize additional complexity.

\paragraph{Time-series Imputation}
As summarized in Table~\ref{tab:model_config}, the overall model size remains comparable to other diffusion-based approaches. The higher inference time of PRDIM, relative to competing methods, arises from the use of autogradient of the input with respect to the outputs of the pattern recognizer during the inference process. This design choice, while incurring additional computational cost, enables the model to provide more informative guidance for imputing missing values. Futhermore, for all diffusion-based imputation models (CSDI, MTSCI, cDiffPuter, and PRDIM) used in time-series imputation, the diffusion backbone architecture follows the official CSDI repository\footnote{\url{https://github.com/ermongroup/csdi}}, with only the depth of the diffusion layers adjusted 4 to 2. In PRDIM, the pattern recognizer is constructed from a single diffusion layer, where the transformer blocks are replaced with simple MLP layers operating along the time and feature axes. This results in a lightweight module with approximately 10\% of the diffusion model size.

\begin{table}[h]
\centering
\caption{Detailed model configuration. "\# of Params" indicates the number of parameter of each diffusion-based model, and $+n$ in PRDIM shows the number of pattern recognizer. The inference time (s) is measured based on the a single inference required to impute the entire out-of-sample data.}
\label{tab:model_config}
\resizebox{1.0\textwidth}{!}{
\begin{tabular}{l|ccc|ccc}
\toprule
 & \multicolumn{3}{c|}{\textbf{ETT}} & \multicolumn{3}{c}{\textbf{STOCK}} \\
\cmidrule(lr){2-4}\cmidrule(lr){5-7}
\textbf{Method} & \# of Params & Training Time (s) & Inference Time (s) & \# of Params & Training Time (s) & Inference Time (s) \\
\midrule
CSDI       & 164,025       & 366  & 22  & 164,017       & 138  & 11 \\
MTSCI      & 162,321       & 585  & 21  & 146,969       & 354  & 12 \\
cDiffPuter & 163,769       & 981  & 28  & 163,761       & 516  & 11 \\
PRDIM      & 163,769+17,376 & 1812 & 46  & 163,761+17,359 & 1052 & 27 \\
\bottomrule
\end{tabular}}
\end{table}

\paragraph{Image and Tabular Imputation} For image datasets (FMNIST and CelebA), the diffusion model uses a U-Net~\cite{ronneberger2015u} architecture composed of two stacked blocks (each with down, mid, and up components), while the pattern recognizer uses only a single U-Net block, corresponding to approximately 38\% of the full model size. For tabular datasets, the diffusion backbone follows the architecture reported in the official DiffPuter repository\footnote{\url{https://github.com/hengruizhang98/DiffPuter}}, and the pattern recognizer is implemented as a simple MLP with three 512-dimension hidden layers. For example, in the Adult dataset, the number of parameters of the diffusion architecture is 4.89 times larger than that of the pattern recognizer. Additionally, we provide the architectural details of the pattern recognizer for all dataset types in Table~\ref{tab:pr_architectures}.

\begin{table}[h]
\centering
\caption{Pattern recognizer architectures for time-series, image, and tabular dataset.}
\label{tab:pr_architectures}
\renewcommand{\arraystretch}{1.15}
\setlength{\tabcolsep}{10pt}
\resizebox{1.0\linewidth}{!}{
\begin{tabular}{lll}
\toprule
\textbf{Time-series data} & \textbf{Image data} & \textbf{Tabular data} \\
\midrule
Input $x \in \mathbb{R}^{B \times 1 \times K \times L}$ &
Input $x \in \mathbb{R}^{B \times 1 \times H \times W}$ &
Input $x \in \mathbb{R}^{d_{\text{in}}}$ \\
Reshape$(B,1,K,L \rightarrow B,1,KL)$ &
Conv2D$(1 \rightarrow C)$ &
Linear$(d_{\text{in}} \rightarrow d_t)$ \\
$1 \times 1$ Conv1D$(1 \rightarrow C)$ &
DownBlock$(C \rightarrow C)$ &
SiLU \\
ReLU &
DownBlock$(C \rightarrow 2C)$ &
Linear$(d_t \rightarrow d_t)$ \\
Reshape$(B,C,KL \rightarrow B,C,K,L)$ &
MidBlock$(2C \rightarrow 2C)$ &
SiLU \\
Repeat for $r=1,\dots,R$: &
ConvTranspose2D$(2C \rightarrow C)$ &
Linear$(d_t \rightarrow d_t)$ \\
\quad Time-MLP along length $L$ &
ConvBlock$(C \rightarrow C)$ &
SiLU \\
\quad Feature-MLP along dimension $K$ &
ConvTranspose2D$(C \rightarrow C)$ &
Linear$(d_t \rightarrow d_t)$ \\
\quad $1 \times 1$ Conv1D$(C \rightarrow 2C)$ &
ConvBlock$(C \rightarrow C)$ &
SiLU \\
\quad Gated activation &
$1 \times 1$ Conv2D$(C \rightarrow 1)$ &
Linear$(d_t \rightarrow d_{\text{in}})$ \\
\quad $1 \times 1$ Conv1D$(C \rightarrow 2C)$ &
Sigmoid &
Sigmoid \\
\quad Residual update &
Squeeze channel dimension &
Output $\hat{m} \in (0,1)^{d_{\text{in}}}$ \\
$1 \times 1$ Conv1D$(C \rightarrow 1)$ &
Output $\hat{m} \in (0,1)^{B \times H \times W}$ &
\\
Reshape$(B,1,KL \rightarrow B,K,L)$ &
&
\\
Sigmoid &
&
\\
Output $\hat{m} \in (0,1)^{B \times K \times L}$ &
&
\\
\bottomrule
\end{tabular}}
\end{table}

\subsection{Hyperparameter Settings of Baselines}
\label{exp_details:baselines}

We followed the established guidelines from existing benchmarks to ensure a fair comparison across all models. For the time-series imputation task, which includes BRITS~\cite{cao2018brits}, SAITS~\cite{du2023saits}, and GP-VAE~\cite{fortuin2020gp}, we adopted the hyperparameter search space and tuning procedures specified in Appendix A of the SAITS paper~\cite{du2023saits}. For TimesNet~\cite{wu2022timesnet}, the hyperparameter search space is defined as follows: the number of layers from (3, 4, 5), model dimension from (256, 512), feed-forward network dimension from (128, 256), and training epochs from (30, 40, 50, 60). TimeMixer++~\cite{wang2024timemixer++} is tuned over an identical search space, with the exception that the number of layers is sampled from (2, 3, 4). Additionally, since the official source code for not-MIWAE~\cite{ipsen2020not} is unavailable, we implement not-MIWAE by integrating a simple MLP missing model into the MIWAE~\cite{mattei2019miwae} plug-in module provided in the HyperImpute repository\footnote{\url{https://github.com/vanderschaarlab/hyperimpute/blob/main/src/hyperimpute/plugins/imputers/plugin_miwae.py}}.

Similarly, for the tabular imputation experiments (Table~\ref{tab:tabular_imputation} and Table~\ref{tab:tabular_out_sample}), the settings for MissForest~\cite{stekhoven2012missforest}, MICE~\cite{van2011mice}, MOT~\cite{muzellec2020missing}, TDM~\cite{zhao2023transformed}, TabCSDI~\cite{zheng2022diffusion}, and HyperImpute~\cite{jarrett2022hyperimpute} were configured based on the implementation details and hyperparameter guidelines provided in Appendices D.5 and D.6 of DiffPuter~\cite{zhang2025diffputer}. By directly adopting these validated configurations, we ensure that each baseline is evaluated under its intended optimal settings.

\section{Additional Experiments} 
\label{app:add_exp}

In this section, we provide additional results on the overall imputation performance across all time-series datasets in Table~\ref{tab:ett_results}, Table~\ref{tab:stock_results}, and Table~\ref{tab:pems_results}. These results complement the main findings reported in the paper and further validate the effectiveness of our approach.

\subsection{Fashion-MNIST}
\label{app:fmnist}

In the image domain, missing values are not limited to the MNAR mechanism demonstrated in the main experiment. As a representative example in the image dataset, we additionally exhibit imputation results under the \textit{block missing} mechanism. Experiments are conducted on the FMNIST dataset, and we further include comparisons with representative GAN-based and Flow-based imputation approaches, namely misGAN~\citep{li2019misgan} and MCFlow~\citep{richardson2020mcflow}. 

As illustrated by the following Figure~\ref{fig:app_fmnist_mar}, among the three methods, our proposed PRDIM most effectively captures the underlying object structure and achieves the most faithful reconstructions. These findings demonstrate that PRDIM can generalize beyond MNAR to handle other types of missing, such as block-MAR, while retaining its ability to generate semantically plausible imputations.

Furthermore, to highlight the general imputation ability of our model under the MNAR mechanism from the main experiment, we also present additional qualitative results in Figure~\ref{fig:app_fmnist_mnar}. For both experiments, we evaluate the quality of generated samples using the Fréchet Inception Distance (FID)~\citep{heusel2017gans}, which is computed with the released Python library~\citep{Seitzer2020FID}.

\subsection{CelebA-HQ}
\label{app:add_exp-celeba}

To verify the scalability of PRDIM on high-dimensional data, we conducted an additional imputation experiment on the RGB image benchmark dataset named CelebA-HQ~\citep{CelebAMask-HQ}. We compared our method with a vanilla diffusion model trained under the CSDI objective. Each image in CelebA-HQ is accompanied by corresponding annotation mask vectors that label facial attributes such as eyes, nose, mouth, and hair.
To design an incomplete dataset under an MNAR pattern, we utilized the annotation masks of eyes, nose, and mouth to construct a missing-value mask. Specifically, for each facial attribute, we introduced missing pixels with an 80\% probability within the annotated regions, forming an MNAR missing mechanism for the experiment.

To efficiently manage the training time of the EM-based PRDIM model trained from a scratch, both images and their corresponding mask vectors were resized to a resolution of 64×64. The original CelebA-HQ dataset consists of 1024×1024 images and 512×512 annotation masks.

Qualitative results are shown in Figure~\ref{fig:celebA_vertical}. The vanilla diffusion model trained with the CSDI objective tends to fill in missing areas with averaged color tones around the missing regions, resulting in naive reconstructions and relatively high FID scores despite a moderate missing ratio. In contrast, PRDIM generates more detailed and realistic facial structures, accurately reconstructing attribute boundaries and color variations, which leads to significantly improved perceptual quality and lower FID values.

\subsubsection{Comparison with Image Inpainting Methods Trained on Complete Data Distribution}
\label{add_exp-celeba:inpainting}

As noted in the footnote in section~\ref{overall_performance}, there is a fundamental difference between diffusion image inpainting and imputation: inpainting methods assume access to a diffusion model trained on a fully observed data distribution, whereas PRDIM addresses the setting where only incomplete data is available. Therefore, the two approaches are not directly comparable. Nevertheless, given the shared goal of restoring masked images, we additionally conduct a comparison by applying zero-shot inpainting methods in Table~\ref{tab:inpainting}. Specifically, we use a pre-trained diffusion model on the FFHQ\footnote{\url{https://github.com/nvlabs/ffhq-dataset}} dataset (The model weight is provided in the EDM repository\footnote{\url{https://github.com/nvlabs/edm}}.) and apply SDEdit~\cite{meng2021sdedit} and RePaint~\cite{lugmayr2022repaint} to the CelebA-HQ test dataset.

Inference settings are follows: DDPM sampling steps = 100, $t_0=0.5$ (Refer to Figure 3 of SDEdit~\cite{meng2021sdedit}), and we set jump length $j=2$ and resampling number $r=2$ (Refer to Section 5.6 of RePaint~\cite{lugmayr2022repaint}). For quantitative comparison, we report peak signal-to-noise-ratio (PSNR), structural similarity index (SSIM), and Learned Perceptual Image Patch Similarity (LPIPS) distance. From these results, we observe that models trained directly on the incomplete target distribution can outperform models trained on a complete but different source distribution, even when the latter is semantically similar (e.g., FFHQ vs CelebA).

\begin{table}[h]
\centering
\caption{Image imputation performance comparison with alternative methods for zero-shot image inpainting. Lower is better for FID and LPIPS, higher is better for PSNR and SSIM.}
\vspace{0.2cm}
\small
\begin{tabular}{l|cccc}
\toprule
Method & FID ($\downarrow$) & PSNR ($\uparrow$) & SSIM ($\uparrow$) & LPIPS ($\downarrow$, $\times 10^{-3}$) \\
\midrule
Vanilla Diffusion & 1.67 & $37.39 \pm 2.32$ & $0.990 \pm 0.004$ & $5.21 \pm 3.53$ \\
DiffPuter & 1.00 & $38.76 \pm 2.38$ & $0.992 \pm 0.004$ & $2.98 \pm 2.30$ \\
SDEdit & 1.49 & $36.51 \pm 2.90$ & $0.987 \pm 0.007$ & $2.38 \pm 2.44$ \\
RePaint & 0.48 & $39.49 \pm 2.80$ & $0.993 \pm 0.004$ & $4.94 \pm 4.63$ \\
\midrule
PRDIM & \textbf{0.47} & \textbf{40.43 $\pm$ 2.62} & \textbf{0.995 $\pm$ 0.003} & \textbf{2.09 $\pm$ 1.78} \\
\bottomrule
\end{tabular}
\label{tab:inpainting}
\end{table}

\begin{figure}[t]
    \centering
    \begin{minipage}[c]{0.20\textwidth}
        \centering \small
        
        \textbf{(a) Ground Truth}
        \par\vspace{2.5em} 
        \textbf{(b) Observed Input}
        \par\vspace{2.5em}
        \textbf{(c) Vanilla Diffusion} \\
        \par\vspace{2.5em}
        \textbf{(d) PRDIM (Ours)} \\
    \end{minipage}
    \hfill
    \begin{minipage}[c]{0.78\textwidth}
        \centering
        \includegraphics[width=\linewidth]{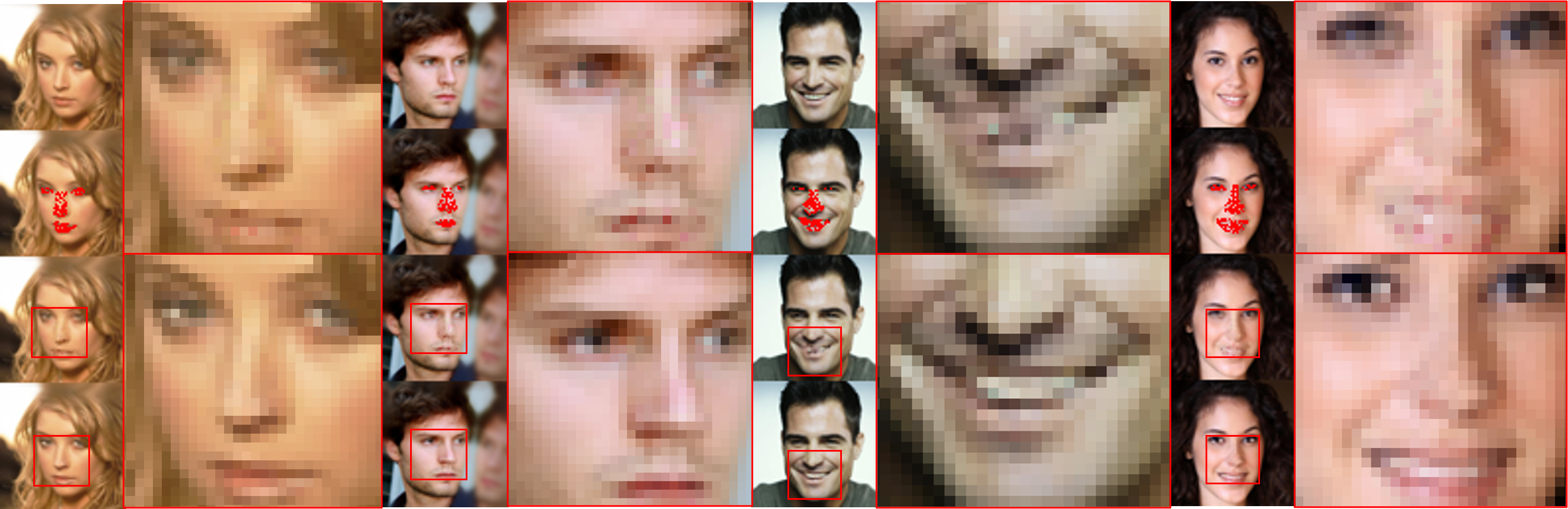}
    \end{minipage}

    \caption{Qualitative comparison on CelebA-HQ. The rows represent (a) Ground Truth, (b) Observed Input, (c) Vanilla Diffusion, and (d) PRDIM, respectively. We demonstrate that PRDIM achieves superior reconstruction quality and quantitative performance than vanilla diffusion model which is trained with CSDI objective.}
    \label{fig:celebA_vertical}
\end{figure}

\subsection{Tabular data}
\label{app:exp_tabular}

To further verify the generalization capability of PRDIM, we reproduced the official implementation of DiffPuter\footnote{\url{https://github.com/hengruizhang98/DiffPuter}} and compared its performance with PRDIM under the MNAR setting.

In this experiment, we followed the practical implementation of DiffPuter with default configuration regardless to dataset, which differs from the main experiments in that incomplete samples were not used as conditional information during imputation.

We selected 5 different tabular datasets available from the UCI Machine Learning Repository\footnote{\url{https://archive.ics.uci.edu/}}. Among them, bean, gesture, and magic consist solely of continuous features, while adult and default contain both continuous and discrete features. The discrete attributes were label-encoded to preserve the original data dimensionality, and the corresponding mask vectors were designed to match this structure.

Table~\ref{tab:tabular_out_sample} present out-of-sample imputation performance of DiffPuter and PRDIM, respectively. Across most datasets, PRDIM achieves more accurate imputations, validating its robustness and adaptability across different data modalities.

\begin{table}[h]
\centering
\caption{Out-of-Sample imputation results across tabular datasets. We report the mean $\pm$ std over five different \textbf{MNAR} missing simulations.}
\resizebox{0.95\textwidth}{!}{
\begin{tabular}{l|cccccc}
\toprule
 & adult & bean & default & gesture & magic & Avg.Rank \\
\midrule
MissForest & 0.635$\pm$0.037 & 0.283$\pm$0.046 & 0.357$\pm$0.051 & 0.373$\pm$0.011 & 0.534$\pm$0.031 & 5.2\\
MOT & 0.504$\pm$0.017 & 0.263$\pm$0.026 & 0.334$\pm$0.036 & 0.427$\pm$0.014 & 0.501$\pm$0.034 & 4.1\\
TDM & 0.514$\pm$0.015 & 0.170$\pm$0.029 & 0.336$\pm$0.062 & 0.399$\pm$0.013 & 0.508$\pm$0.037 & 4.0\\
HyperImpute & 0.559$\pm$0.016 & \textbf{0.138$\pm$0.009} & 0.300$\pm$0.023 & 0.389$\pm$0.019 & \textbf{0.471$\pm$0.036} & 2.6\\
DiffPuter & 0.504$\pm$0.012 & 0.219$\pm$0.053 & 0.315$\pm$0.040 & \textbf{0.353$\pm$0.007} & 0.539$\pm$0.049 & 3.3\\
PRDIM     & \textbf{0.482$\pm$0.022} & 0.199$\pm$0.053 & \textbf{0.279$\pm$0.039} & 0.371$\pm$0.052 & 0.488$\pm$0.047 & \textbf{1.8}\\
\bottomrule
\end{tabular}
}
\label{tab:tabular_out_sample}
\end{table}

\subsection{Cost–Performance Trade-offs under Different EM Configurations}
\label{app:cost-perform}

Since PRDIM is built upon the EM framework, both training time and imputation performance depend on the choice of EM configuration. In this section, we analyze how different configurations affect the trade-off between computational cost and performance on the STOCK dataset. We denote \textbf{1E $N_m$M} as the number of training epochs in the maximization step per expectation step, and $N$ as the total number of EM iterations. In the main experiments, both cDiffPuter and PRDIM are trained with $N=100$ EM iterations under the \textbf{1E 1M} configuration.

\begin{wrapfigure}{r}{0.44\linewidth}
\centering
    \centering
    \vspace{-0.5cm}
    \includegraphics[width=\linewidth]{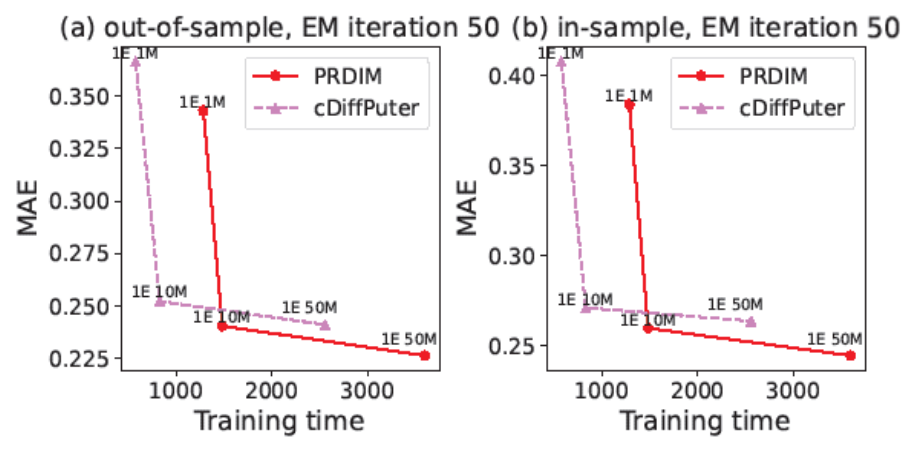}
    \caption{
    Training time and imputation performance (MAE) under different EM configurations on the \textbf{STOCK} dataset.
    }
    \vspace{-0.3cm}
    \label{fig:app_emconfig}
\end{wrapfigure}

As the number of training epochs in the maximization step increases, the overall training time grows proportionally, while the relative cost of the expectation step becomes negligible. Consequently, the training time of PRDIM approaches that of cDiffPuter as the maximization step becomes dominant. Importantly, however, we observe that PRDIM achieves a more favorable cost--performance trade-off. As shown in Figures~\ref{fig:app_emconfig} (a) and (b), PRDIM with the \textbf{1E 10M} configuration and $N=50$ EM iterations not only attains superior imputation performance, but also requires less training time compared to cDiffPuter with the \textbf{1E 50M} configuration and $N=50$ EM iterations.

This result indicates that PRDIM can achieve better performance at lower computational cost by leveraging the pattern recognizer under MNAR settings. It further highlights that appropriate EM configuration allows PRDIM to operate at more efficient points on the cost-performance frontier, demonstrating its practical scalability for real-world imputation tasks.

\subsection{Evaluation on Various Missing Senarios}
\label{app:various}

\paragraph{MNAR Subtypes} We additionally examined several real-world motivated MNAR subtypes and conducted further experiments on the ETT dataset. These subtypes were selected because they (i) appear in real-world domains and (ii) can be implemented without domain-specific knowledge, enabling reproducible evaluation:

\begin{itemize}
    \item \textbf{Self-censoring~\citep{miao2015identification}}: The true observed value directly determines whether the entry becomes missing.
    \item \textbf{Latent-trait MNAR~\citep{cursio2019latent}}: missing depends on an unobserved latent effect or individual-specific factor.
    \item \textbf{Censoring / Truncation MNAR~\citep{shah2017distribution}}: Values outside a certain interval are unobserved or collapsed, simulating realistic censoring processes.
\end{itemize}

To demonstrate the variation in MNAR severity across these subtypes, we provide the train/val/test missing ratios produced during data processing in Table~\ref{tab:mnar_ratio}.

\begin{table*}[h]
    \centering
    \begin{minipage}[b]{0.34\textwidth}
        \centering
        \caption{Missing ratios (\%) across different MNAR subtypes.}
        \label{tab:mnar_ratio}
        \vspace{0.1cm} 
        \resizebox{\linewidth}{!}{%
        \begin{tabular}{l|ccc}
        \toprule
        \textbf{Type} & \textbf{Train} & \textbf{Valid} & \textbf{Test} \\
        \midrule
        Self-censoring & 19.75 & 13.52 & 37.16 \\
        Latent traits& 30.05 & 30.09 & 29.95 \\
        Truncation. & 27.76 & 27.75 & 28.05 \\
        \bottomrule
        \end{tabular}}
    \end{minipage}
    \hfill 
    \begin{minipage}[b]{0.62\textwidth}
        \centering
        \caption{Performance comparison (MAE) on ETT dataset under different MNAR subtypes. We report both In-Sample (In) and Out-of-Sample (Out) imputation errors.}
        \label{tab:mnar_subtype_results}
        \vspace{0.1cm}
        \resizebox{\linewidth}{!}{%
        \begin{tabular}{l|cc|cc|cc}
        \toprule
        \textbf{Method} & \multicolumn{2}{c|}{\textbf{Self-censoring}} & \multicolumn{2}{c|}{\textbf{Latent traits}} & \multicolumn{2}{c}{\textbf{Truncation}} \\
         & In & Out & In & Out & In & Out \\
        \midrule
        CSDI       & 0.3073 & 0.5303 & 0.2338 & 0.2621 & 0.3682 & 0.5780 \\
        MTSCI      & 0.3296 & 0.4848 & \textbf{0.1629} & 0.1968 & 0.4368 & 0.5353 \\
        cDiffPuter & 0.3845 & \textbf{0.3819} & 0.1973 & 0.2074 & 0.5471 & 0.5709 \\
        \textbf{PRDIM} & \textbf{0.2738} & \textbf{0.3819} & 0.1711 & \textbf{0.1930} & \textbf{0.3570} & \textbf{0.4833} \\
        \bottomrule
        \end{tabular}}
    \end{minipage}
\end{table*}

Across all MNAR subtypes, PRDIM consistently outperforms prior diffusion-based imputation methods, as shown in Table~\ref{tab:mnar_subtype_results}. We observe that the performance gap between PRDIM and cDiffPuter is smallest under the latent-trait mechanism. This suggests that the latent-trait subtype may be the most challenging among those tested, which is intuitive as missing driven by latent attributes is more difficult to approximate using only simple CNN (or MLP) structures.

\paragraph{MAR missing situation}
To demonstrate robustness of PRDIM across general missing patterns, we conducted additional experiments using the PyGrinder\footnote{\url{https://github.com/WenjieDu/PyGrinder}} repository, a public toolkit for generating missing in time-series datasets. Following its MAR configuration, we introduced 25\% missing ratio to the ETT, STOCK, and PEMS-Bay datasets, and evaluated both out-of-sample and in-sample MAE performance.

Table~\ref{tab:mar25_results} show that PRDIM continues to perform consistently well under MAR, confirming that the pattern recognizer still provides useful guidance even when the missing mechanism no longer depends on unobserved values. Our empirical results demonstrate that PRDIM maintains performance comparable to DiffPuter under MCAR and MAR settings, while significantly outperforming the baseline in MNAR scenarios where the missing probability explicitly depends on the unobserved values $X_0^{\text{mis}}$

\begin{table}[h]
\centering
\caption{Imputation performance comparison with MAE ($\downarrow$) under \textbf{MAR 25\%} missing mechanism. We report Out-of-Sample and In-Sample errors for ETT, STOCK, and PEMS-Bay datasets respectively.}
\label{tab:mar25_results}
\small
\vspace{0.2cm}
\resizebox{0.8\textwidth}{!}{
\begin{tabular}{l|cc|cc|cc}
\toprule
Method & \multicolumn{2}{c|}{\textbf{ETT}} & \multicolumn{2}{c|}{\textbf{STOCK}} & \multicolumn{2}{c}{\textbf{PEMS-Bay}} \\
\cmidrule(lr){2-3} \cmidrule(lr){4-5} \cmidrule(lr){6-7}
 & Out-of-Sample & In-Sample & Out-of-Sample & In-Sample & Out-of-Sample & In-Sample \\
\midrule
CSDI & 0.1895 & 0.2428 & 0.1472 & 0.0477 & 0.2158 & 0.2034 \\
cDiffPuter & 0.1785 & 0.1853 & \textbf{0.1469} & \textbf{0.0268} & 0.2248 & 0.2012 \\
PRDIM & \textbf{0.1776} & \textbf{0.1699} & 0.1523 & 0.0315 & \textbf{0.2120} & \textbf{0.1971} \\
\bottomrule
\end{tabular}}
\end{table}

\subsection{Different Evaluation Metrics for Imputation}
\label{app:wd_wf}

In addition to point-wise error metrics, we further evaluate the quality of imputed samples using distribution-level generative metrics, namely the exact 2-Wasserstein distance (WD) and the Fourier Wasserstein distance (FWD).

\paragraph{Wasserstein Distance} We computed the exact 2-Wasserstein distance using a existing Python library and report the updated results in Table~\ref{tab:gen_metrics}.
As shown in the table~\ref{tab:gen_metrics} (a), PRDIM consistently achieves the smallest $W_2$ discrepancy among diffusion-based imputation models across datasets, indicating that the generated samples from PRDIM are closer to the target data distribution in the sense of optimal transport.

\paragraph{Fourier Wasserstein Distance}
We additionally consider the Fourier Wasserstein distance (FWD)~\citep{cazelles2020wasserstein}, which measures the Wasserstein discrepancy between the normalized power spectral densities (NPSD) of two time-series distributions.
As discussed in the prior work, the FWD metric provides an interpretable measure of temporal misalignment while retaining the stability and geometric grounding of optimal transport in the spectral domain. Following this formulation, we extend the publicly available univariate FWD implementation to the multivariate setting and compute FWD distances for all diffusion-based imputation methods.
The corresponding results are also summarized in Table~\ref{tab:gen_metrics} (b).
Overall, PRDIM achieves competitive and consistently low FWD across most datasets.
We note that in certain cases, such as CSDI on the STOCK dataset or MTSCI on PEMS-Bay, alternative methods attain slightly lower FWD values.
One possible explanation is that models trained without an EM framework tend to generate less stochastic variability across diffusion trajectories.
While such reduced variability may lower the FWD, the additional sampling diversity induced by PRDIM is beneficial for modeling complex MNAR mechanisms, albeit at the cost of a marginally increased FWD metric in some settings.

\begin{table}[h]
\centering
\caption{Quantitative evaluation using generative quality metrics: (a) Wasserstein Distance (WD) and (b) Fourier Wasserstein Distance (Fourier WD) on three datasets.}
\label{tab:gen_metrics}
\vspace{0.2cm}
\small
\resizebox{1.0\linewidth}{!}{
\begin{tabular}{l|cc|cc|cc}
\multicolumn{7}{c}{\textbf{(a) Wasserstein Distance ($\downarrow$)}} \\
\toprule
Method & \multicolumn{2}{c|}{\textbf{ETT}} & \multicolumn{2}{c|}{\textbf{STOCK}} & \multicolumn{2}{c}{\textbf{PEMS-Bay}} \\
\cmidrule(lr){2-3} \cmidrule(lr){4-5} \cmidrule(lr){6-7}
 & In-Sample & Out-of-Sample & In-Sample & Out-of-Sample & In-Sample & Out-of-Sample \\
\midrule
CSDI       & 4.2535 & 11.7118 & 5.3450 & 4.8446 & 6.9161 & 7.5871 \\
MTSCI      & 4.3723 & 11.4063 & 5.6456 & 5.1334 & 7.4099 & 7.8256 \\
cDiffPuter & 3.6684 & 10.3398 & 4.1093 & 3.8013 & 7.5825 & 7.8158 \\
\textbf{PRDIM} & \textbf{3.2449} & \textbf{9.0755} & \textbf{3.3878} & \textbf{3.1422} & \textbf{6.6534} & \textbf{7.5517} \\
\bottomrule
\end{tabular}}

\vspace{0.5cm} 
\resizebox{1.0\linewidth}{!}{
\begin{tabular}{l|cc|cc|cc}
\multicolumn{7}{c}{\textbf{(b) Fourier Wasserstein Distance ($\downarrow$)}} \\
\toprule
Method & \multicolumn{2}{c|}{\textbf{ETT}} & \multicolumn{2}{c|}{\textbf{STOCK}} & \multicolumn{2}{c}{\textbf{PEMS-Bay}} \\
\cmidrule(lr){2-3} \cmidrule(lr){4-5} \cmidrule(lr){6-7}
 & In-Sample & Out-of-Sample & In-Sample & Out-of-Sample & In-Sample & Out-of-Sample \\
\midrule
CSDI       & 0.0988 & 0.3825 & \textbf{0.0532} & \textbf{0.0480} & 0.0230 & 0.0304 \\
MTSCI      & 0.1072 & 0.3489 & 0.0883 & 0.0794 & \textbf{0.0224} & \textbf{0.0290} \\
cDiffPuter & 0.0856 & 0.3233 & 0.0927 & 0.0841 & 0.0244 & 0.0313 \\
\textbf{PRDIM} & \textbf{0.0783} & \textbf{0.2213} & 0.0760 & 0.0694 & \textbf{0.0224} & 0.0294 \\
\bottomrule
\end{tabular}}
\end{table}

\subsection{Controlled Parameter Analysis}

To better understand the dynamics of PRDIM, we analyze the training behavior of the EM procedure. Figure \ref{fig:em-loss} shows the convergence of the pattern recognizer’s loss, where red curves indicate the ability to distinguish missing values and blue curves correspond to observed values.
The results on both the ETT and STOCK datasets demonstrate that the pattern recognizer effectively captures the missing pattern, thereby providing informative guidance during generation. Furthermore, Figure~\ref{fig:em-mre} illustrates the evolution of MRE across EM iterations on ETT and STOCK datasets, revealing a consistent improvement in imputation accuracy as the number of EM epochs increases.

\begin{figure*}[h] 
    \centering
    \begin{minipage}[t]{0.35\textwidth}
        \centering
        \includegraphics[width=\linewidth]{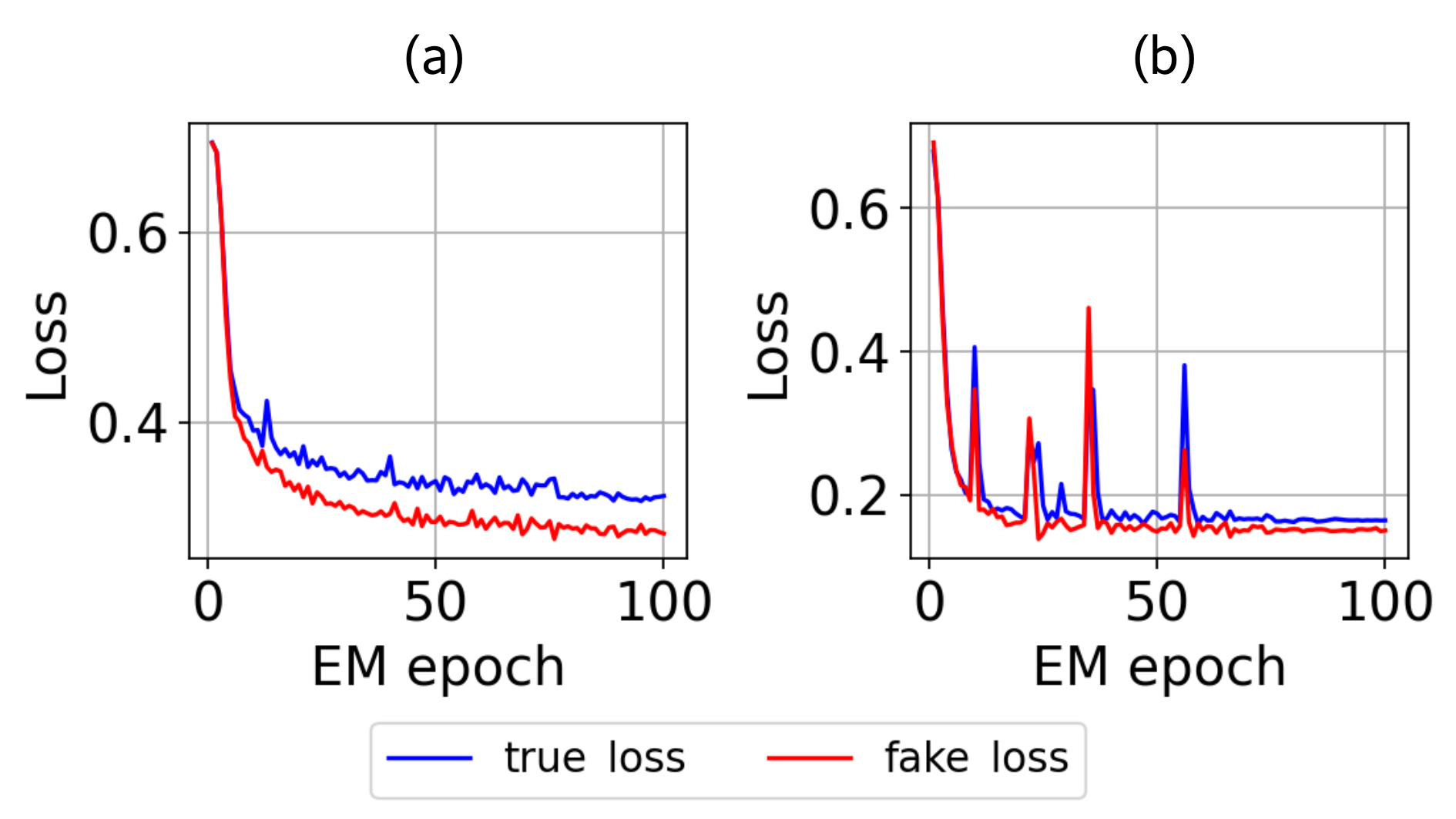}
        \caption{Convergence of the pattern recognizer’s loss during EM training. (a) ETT (b) STOCK.}
        \label{fig:em-loss}
    \end{minipage}
    \hfill 
    \begin{minipage}[t]{0.35\textwidth}
        \centering
        \includegraphics[width=\linewidth]{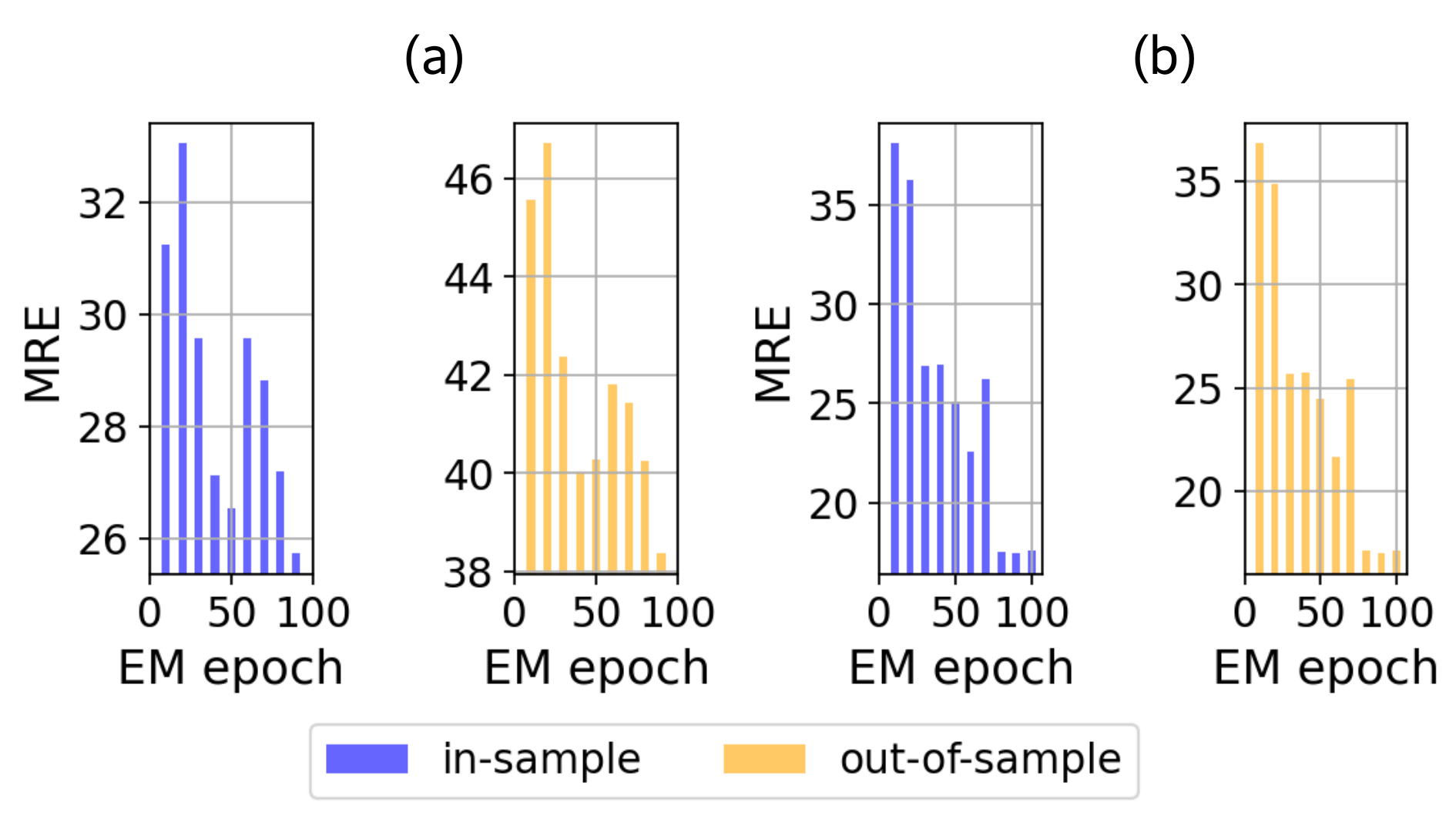}
        \caption{Evolution of MRE across EM epochs. (a) ETT (b) STOCK.}
        \label{fig:em-mre}
    \end{minipage}
    \hfill
    \begin{minipage}[t]{0.26\textwidth}
        \centering
        \includegraphics[width=\linewidth]{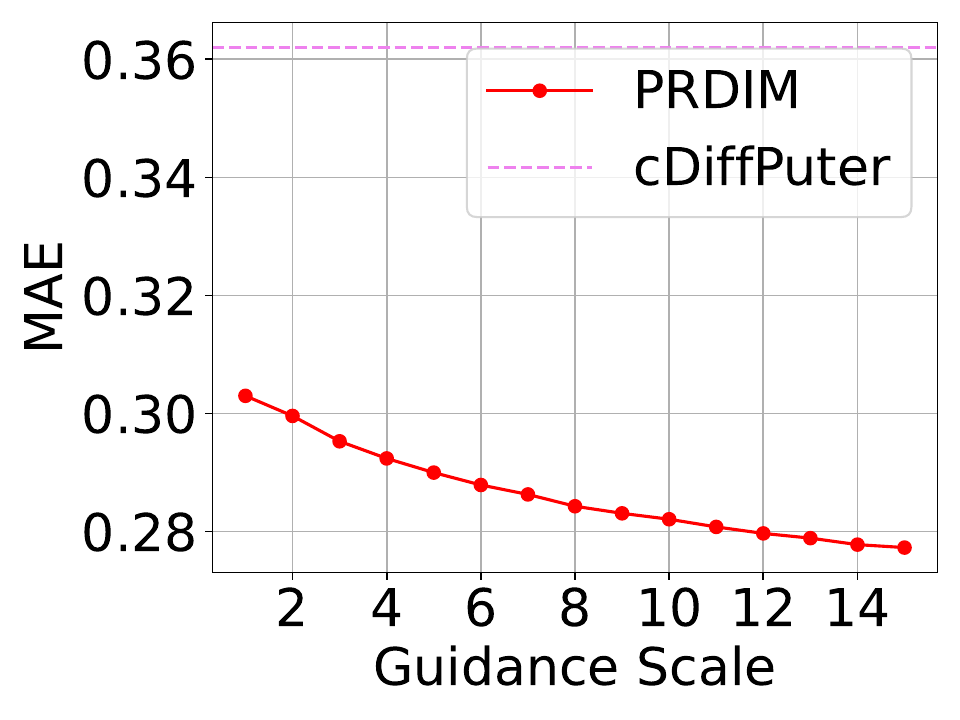}
        \caption{Effect of guidance scale on STOCK.}
        \label{fig:guidance_scale}
    \end{minipage}
\end{figure*}

 Moreover, according to the findings on a previous work~\citep{ho2022classifier}, increasing the weight of guidance generally leads to the better generation.
As shown in Figure~\ref{fig:guidance_scale}, the imputation performance consistently improves as the guidance scale increases, demonstrating the effectiveness of guidance weighting. These findings confirm that EM refinement is a critical component of PRDIM, substantially enhancing its capacity to model the joint distribution of the data and the missing pattern.
Despite the strong performance of our proposed PRDIM framework, it should be noted that introducing an additional guidance term in diffusion models inherently incurs extra computational cost~\citep{kim2022refining, chung2023diffusion}. To clarify this trade-off, we provide supplementary results in Table~\ref{tab:model_config} of Appendix~\ref{app:exp_details}, where training and inference times are compared across different diffusion-based imputation models.

\subsection{Qualitative Results Including Real-world Dataset}
\label{appc25}

To further illustrate the behavior of PRDIM under the out-of-sample imputation setting, we provide qualitative visualizations across ETT, STOCK, and PEMS-Bay on Figure~\ref{fig:qual_ett}, ~\ref{fig:qual_stock}, and ~\ref{fig:qual_pems} respectively. For each dataset, we randomly sample 4 test instances and display (i) the locations of missing values as yellow points and (ii) the corresponding imputation results produced by CSDI, MTSCI, cDiffPuter, MTSI, and PRDIM respectively. These visualization results allow for a direct visual comparison of reconstruction quality, highlighting the degree to which each model captures temporal structure and recovers unseen missing values.

To examine whether PRDIM can operate on data with naturally occurring missing values where ground-truth values for the missing entries are not available, we additionally conducted experiments on the PhysioNet~\citep{goldberger2000physiobank} dataset. Figure~\ref{fig:qual_physio} visualizes the out-of-sample imputation results of CSDI, cDiffPuter, and PRDIM on PhysioNet. We also provide the quantitative results under different missing ratios of out-of-sample imputation in Table~\ref{tab:physionet}.

One notable observation is that both cDiffPuter and PRDIM involve a joint optimization procedure over the latent missing variables $X_0^\text{mis}$ and the observed variables $X_0^{\text{obs}}$ during the EM iterations. As a consequence, when the natural missing rate is extremely high (approaching nearly 80\% in the PhysioNet dataset), the imputed values may become biased toward zero (\textit{i.e.} initial imputed value). This highlights an inherent limitation of EM-based diffusion imputation methods under severe natural missing.

\begin{table}[h]
    \centering
    \vspace{-0.5cm}
    \caption{MAE (mean$\pm$std) on 5 runs under different missing ratios of out-of-sample imputation.}
    \label{tab:physionet}
    \begin{tabular}{l|ccc}
        \toprule
        \textbf{Missing ratio} & \textbf{10\%} & \textbf{50\%} & \textbf{90\%} \\
        \midrule
        CSDI & \textbf{0.2733} $\pm$ 0.0002 & 0.3607 $\pm$ 0.0002 & 0.5191 $\pm$ 0.0003 \\
        DiffPuter & 0.2755 $\pm$ 0.0003 & 0.3471 $\pm$ 0.0001 & 0.4670 $\pm$ 0.0003 \\
        PRDIM & 0.2737 $\pm$ 0.0008 & \textbf{0.3384} $\pm$ 0.0001 & \textbf{0.4646} $\pm$ 0.0003 \\
        \bottomrule
    \end{tabular}
    \vspace{-0.5cm}
\end{table}

\section{Pattern Recognizer Analysis} 
\label{app:pr_analysis}

\subsection{Interpretability of the Pattern Recognizer with Case Study}
\label{appc24}

To evaluate whether the pattern recognizer $D_\phi$ trained under the EM iterations has effectively learned the underlying missing mechanism, we conducted a case study on three time-series datasets: ETT, STOCK, and PEMS-Bay. Specifically, we randomly sampled instances from the ETT dataset and plotted both the true missing ratio for each entry and the corresponding output of the trained pattern recognizer, $D_{\phi}(\hat{X}_0)$, where $\hat{X}_0$ represents the imputed sample generated through approximate guided generation by PRDIM. This visualization allows us to examine whether the learned $D_\phi$ accurately captures and mimics the MNAR missing patterns inherent in high dimensional data.

For each dataset, we sampled time intervals of length 72. The ETT and STOCK datasets contain 7 and 6 features, respectively, and we visualized all features for completeness. In the case of the PEMS-Bay dataset, which has a total of 325 feature dimensions, only the first 10 features were used for visualization due to its high dimensionality. The experimental results for ETT, STOCK, and PEMS-Bay are shown in Figure~\ref{tab:pr_case_study_ett}, Figure~\ref{tab:pr_case_study_stock}, and Figure~\ref{tab:pr_case_study_pems1}, ~\ref{tab:pr_case_study_pems2}, respectively.

Overall, the results demonstrate that while the pattern recognizer tends to slightly overestimate missing entries, it nonetheless captures the overall tendency and structure of the true missing pattern remarkably well, indicating its strong capability to model MNAR mechanisms.This interpretability analysis provides empirical evidence that the pattern recognizer contributes meaningful guidance during the generation phase.

\subsection{Control the Pattern Recognizer's Capacity}
\label{pr_analysis:control_pr}

To systematically analyze the impact of the pattern recognizer's capacity on the imputation, we reduced its parameter size, thereby restricting the solution space and forcing it to converge to less accurate estimates. We then measured the average cross-entropy on observed entries ($M_{i,j}=1$) and on missing entries ($M_{i,j}=0$). A larger average cross-entropy indicates a poorer ability to discriminate missing patterns. For example, when the value approaches $-\ln{0.5}\simeq0.693$, the recognizer effectively collapses to random guessing and fails to capture any meaningful missing pattern.

Using this setup, we evaluated PRDIM on the ETT and STOCK datasets while keeping all other conditions identical and varying only the parameter number of pattern recognizer. Below Table~\ref{tab:pr_ablation} summarizes the results. As expected, higher cross-entropy corresponds to degraded imputation accuracy. However, it is noteworthy that PRDIM still outperforms cDiffPuter even with significantly weakened pattern recognizers. We can demonstrate that the pattern recognizer provides meaningful guidance to the diffusion process when properly learned.

\begin{table}[h]
\centering
\vspace{-0.5cm}
\caption{Impact of Pattern Recognizer parameter size on imputation performance. We report average Cross Entropy (CE) and MAE on ETT and STOCK test datasets.}
\label{tab:pr_ablation}
\vspace{0.1cm}
\resizebox{0.95\linewidth}{!}{
\begin{tabular}{ccccc}
\toprule
\textbf{Number of parameters} & \textbf{CE (Observable Values)} & \textbf{CE (Missing Values)} & \textbf{MAE (Out)} & \textbf{MAE (In)} \\
\midrule
\multicolumn{5}{c}{\textbf{ETT Dataset}} \\
\midrule
17,376 & \textbf{0.323} & \textbf{0.283} & \textbf{0.663} & \textbf{0.303} \\
4,608  & 0.339 & 0.311 & 0.744 & 0.354 \\
1,296  & 0.480 & 0.456 & 0.763 & 0.360 \\
\midrule
\multicolumn{5}{c}{\textbf{STOCK Dataset}} \\
\midrule
17,359 & \textbf{0.165} & \textbf{0.151} & \textbf{0.254} & \textbf{0.275} \\
4,599  & 0.168 & 0.155 & 0.271 & 0.301 \\
1,291  & 0.169 & 0.162 & 0.277 & 0.303 \\
\bottomrule
\end{tabular}}
\end{table}

A fully non-trained recognizer provides no gain to the score function, consistent with Table 1 of Kim et al.~\citep{kim2022refining}, which shows that discriminator (or guidance) collapse recovers the unguided diffusion model. Thereby explaining why the theoretical gap between PRDIM and DiffPuter naturally diminishes under MCAR settings.

\subsection{Impact of Pattern Recognizer across Intermediate Training Process on Imputation}
\label{pr_analysis:intermediate_pr}

To further analyze the sensitivity of the guidance effect relative to the training progress of the pattern recognizer and degradation effect of a randomly initialized pattern recognizer, we evaluate imputation performance as a function of the training progress of the pattern recognizer under six different settings in Figure~\ref{tab:intermediate_pr}. Across all settings, we observe that as the pattern recognizer is trained, imputation performance consistently improves. However, the performance curves stabilize early, and further improvements in the recognizer yield diminishing returns.

\begin{table}[h]
\centering
\vspace{-0.5cm}
\caption{Ablation study on the effect of the pattern recognizer across different missing configurations and intermediate checkpoints. $n\%$ denotes the percentile of entire training procedure. Each column shows $(W,b)$ with corresponding train/test missing ratios (\%). We report MAE($\downarrow$) of Out-of-Sample imputation.}
\vspace{0.2cm}
\resizebox{0.75\linewidth}{!}{
\begin{tabular}{l|ccc|ccc}
\toprule
Method & \multicolumn{3}{c|}{ETT} & \multicolumn{3}{c}{STOCK} \\
\cmidrule(r){2-4} \cmidrule(l){5-7}
& \shortstack{(5,0.8) \\ \scriptsize 21.4 / 43.9}
& \shortstack{(5,0.5) \\ \scriptsize 32.2 / 57.1}
& \shortstack{(5,0.2) \\ \scriptsize 44.3 / 70.1}
& \shortstack{(5,0.8) \\ \scriptsize 21.2 / 20.0}
& \shortstack{(5,0.5) \\ \scriptsize 26.6 / 25.8}
& \shortstack{(5,0.2) \\ \scriptsize 33.5 / 32.9} \\
\midrule
w/o PR & 0.697 & 0.795 & 0.914 & 0.306 & 0.338 & 0.392 \\
random PR & 0.697 & 0.799 & 0.916 & 0.312 & 0.339 & 0.392 \\
20\% & 0.668 & 0.742 & 0.787 & 0.254 & 0.331 & 0.386 \\
40\% & 0.665 & 0.741 & 0.769 & 0.254 & 0.332 & \textbf{0.383} \\
60\% & 0.665 & 0.738 & 0.770 & 0.255 & 0.328 & 0.384 \\
80\% & 0.664 & 0.738 & \textbf{0.766} & \textbf{0.253} & 0.329 & \textbf{0.383} \\
PRDIM & \textbf{0.663} & \textbf{0.737} & \textbf{0.766} & 0.254 & \textbf{0.328} & \textbf{0.383} \\
\bottomrule
\end{tabular}}
\label{tab:intermediate_pr}
\end{table}

\section{Future Works} 
\label{app:limitations}

\subsection{Discrete State Diffusion for Imputation}

In this work, we only focus on continuous-state diffusion imputation models. However, real-world datasets encountered in practical applications often comprise a mixture of continuous and discrete features. To effectively handle such data, it is necessary to consider the application of diffusion models in discrete domains. Recent works, such as D3PM~\cite{austin2021structured}, have introduced diffusion frameworks for discrete state spaces, and several studies have explored their application to imputation tasks~\cite{hoogeboom2021argmax, shi2024tabdiff}.

Nevertheless, introducing a pattern recognizer under MNAR settings within a discrete diffusion framework entails non-trivial formulation challenges as well as practical considerations. We leave the formal derivation and implementation of the MNAR-aware discrete diffusion framework for future work.

\subsection{Orthogonal Refinement: Prior Distribution Optimization}

In addition to extending the model to discrete settings, we can also consider orthogonal directions for improving imputation performance within the diffusion framework. Moving beyond the imputation task, we investigate the prior gap inherent in the diffusion framework through an empirical study, defined as the potential discrepancy in the prior loss $\mathcal{L}_T=\mathbb{E}_{q(X_0)}[D_{KL}(q(X_T|X_0)||p(X_T))]$. In our diffusion design, we would like to demonstrate the practical effect of this gap turns out to be minimal. Before training, all datasets undergo standard normalization, which centers the empirical mean of $X_0$ extremely close to zero. Combined with the forward diffusion coefficients adopted from the CSDI implementation: $\beta_{min}=10^{-4}, \beta_{max}=0.5, \text{diffusion steps}=50$ with a quadratic schedule. (We can obtain $\bar{\alpha}_T\equiv\prod_{t=1}^T(1-\beta_t)=3.354\times10^{-5})$

It enables us to compute the mean $\mu_1$ and variance $\sigma_1^2$ of the terminal forward process $q(X_T|X_0)$ Assuming dimensional independence, we can calculate the closed-form KL divergence between the forward terminal distribution and the standard Gaussian prior.

To further examine the empirical effect of this mismatch, we conducted an additional experiment. Instead of sampling $X_T$ from the standard prior $p(X_T)=\mathcal{N}(0,I)$, we sampled from the data-induced terminal distribution $\mathbb{E}_{q(X_0)}[q(X_T|X_0)]$ and performed inference with the same pre-trained diffusion model. The results are summarized in the Table~\ref{tab:prior_opt}. The results show that the imputation performance remains virtually unchanged across all datasets, indicating that the prior loss $\mathcal{L}_T$ has negligible impact on the diffusion inference process in our setting.

This suggests that, beyond the current design, PRDIM can be further improved by incorporating additional mathematical techniques to reduce imputation error (or reduce training and inference time via a controllled guidance mechanism) in some orthogonal manners.

\begin{table}[h]
\centering
\caption{Out-of-Sample MAE and statistics between default diffusion sampling setting and prior optimized setting.}
\label{tab:prior_opt}
\begin{tabular}{l|ccc}
\toprule
& \textbf{ETT} & \textbf{STOCK} & \textbf{PEMS-Bay} \\
\midrule
Mean $\mu_1$ 
& $-1.532 \times 10^{-3}$ 
& $-5.95 \times 10^{-4}$ 
& $5.32 \times 10^{-4}$ \\

Variance $\sigma_1^2$ 
& $1.001236$ 
& $1.001055$ 
& $1.000088$ \\

$D_{\mathrm{KL}}\!\left(\mathbb{E}_{q(X_0)}[q(X_T|X_0)] \,\|\, \mathcal{N}(0,I)\right)$ 
& $2.61 \times 10^{-4}$ 
& $6.60 \times 10^{-5}$ 
& $5.59 \times 10^{-4}$ \\

MAE $(p(X_T)\ \text{prior})$ 
& $0.663$ 
& $0.254$ 
& $0.170$ \\

MAE $(\mathbb{E}_{q(X_0)}[q(X_T|X_0)]\ \text{prior})$ 
& $0.663$ 
& $0.255$ 
& $0.172$ \\

\bottomrule
\end{tabular}
\end{table}

\newpage

\begin{table}[t]
\centering
\caption{Overall performance on the \textbf{ETT} dataset. We report $mean\pm std$ over 5 runs according to each methodology. Best results are in \textbf{bold}. Second best results are in \underline{underline}.}
\vspace{0.2cm}
\label{tab:ett_results}
\small
\begin{tabular}{lccc|ccc}
\toprule
Method
& \multicolumn{3}{c|}{Original / Out-of-Sample} 
& \multicolumn{3}{c}{Original / In-Sample} \\
\cmidrule(lr){2-4} \cmidrule(lr){5-7}
& RMSE ($\downarrow$) & MAE ($\downarrow$) & MRE ($\downarrow$) & RMSE ($\downarrow$) & MAE ($\downarrow$) & MRE ($\downarrow$) \\
\midrule
Mean        & 2.307{\scriptsize$\pm$0.000} & 2.034{\scriptsize$\pm$0.000} & 120.233{\scriptsize$\pm$0.000} & 1.618{\scriptsize$\pm$0.000} & 1.486{\scriptsize$\pm$0.000} & 127.379{\scriptsize$\pm$0.000} \\
\midrule
\multicolumn{7}{l}{\textit{Discriminative models}} \\
TimesNet 
            & 1.393{\scriptsize$\pm$0.038}& 1.044{\scriptsize$\pm$0.065}& 69.040{\scriptsize$\pm$4.305}& 1.485{\scriptsize$\pm$0.044}& 1.154{\scriptsize$\pm$0.068}& 72.065{\scriptsize$\pm$4.222}\\
TimeMixer++
            & 1.965{\scriptsize$\pm$0.012} & 1.642{\scriptsize$\pm$0.025} & 97.093{\scriptsize$\pm$0.015}  & 1.283{\scriptsize$\pm$0.015} & 1.100{\scriptsize$\pm$0.032} & 94.319{\scriptsize$\pm$0.028} \\
BRITS       & 1.461{\scriptsize$\pm$0.048} & 0.992{\scriptsize$\pm$0.037} & 58.600{\scriptsize$\pm$0.022}  & 0.850{\scriptsize$\pm$0.020} & 0.491{\scriptsize$\pm$0.008} & 42.067{\scriptsize$\pm$0.007} \\
SAITS       & 1.247{\scriptsize$\pm$0.069} & 0.814{\scriptsize$\pm$0.046} & 48.119{\scriptsize$\pm$0.027}  & 0.626{\scriptsize$\pm$0.018} & 0.366{\scriptsize$\pm$0.014} & 31.417{\scriptsize$\pm$0.012} \\
\midrule
\multicolumn{7}{l}{\textit{Generative models}} \\
GP-VAE
            & 1.915{\scriptsize$\pm$0.006}& 1.511{\scriptsize$\pm$0.011}& 89.315{\scriptsize$\pm$0.638}& 1.147{\scriptsize$\pm$0.008}& 0.896{\scriptsize$\pm$0.018}& 76.809{\scriptsize$\pm$1.507}\\
not-MIWAE
            & 1.781{\scriptsize$\pm$0.012} & 1.311{\scriptsize$\pm$0.016} & 77.512{\scriptsize$\pm$0.972}  & 0.945{\scriptsize$\pm$0.010} & 0.637{\scriptsize$\pm$0.011} & 54.643{\scriptsize$\pm$0.918} \\
\midrule
\multicolumn{7}{l}{\textit{Diffusion-based models}} \\
CSDI        & 1.658{\scriptsize$\pm$0.001} & 1.071{\scriptsize$\pm$0.001} & 63.254{\scriptsize$\pm$0.038}  & 0.822{\scriptsize$\pm$0.001} & 0.522{\scriptsize$\pm$0.001} & 44.733{\scriptsize$\pm$0.049} \\
MTSCI       & 1.335{\scriptsize$\pm$0.001} & 0.957{\scriptsize$\pm$0.001} & 56.574{\scriptsize$\pm$0.036}  & 0.730{\scriptsize$\pm$0.000} & 0.500{\scriptsize$\pm$0.000} & 42.827{\scriptsize$\pm$0.018} \\
cDiffPuter  & \underline{1.209}{\scriptsize$\pm$0.001}& \underline{0.782}{\scriptsize$\pm$0.000}& \underline{46.188}{\scriptsize$\pm$0.020}& \underline{0.612}{\scriptsize$\pm$0.001}& \underline{0.362}{\scriptsize$\pm$0.000}& \underline{31.069}{\scriptsize$\pm$0.020}\\
\midrule
\textbf{PRDIM} 
            & \textbf{1.057}{\scriptsize$\pm$0.000} & \textbf{0.663}{\scriptsize$\pm$0.000} & \textbf{39.156}{\scriptsize$\pm$0.009} 
            & \textbf{0.538}{\scriptsize$\pm$0.000} & \textbf{0.303}{\scriptsize$\pm$0.000} & \textbf{25.986}{\scriptsize$\pm$0.015} \\
\bottomrule
\end{tabular}
\end{table}

\begin{table}[t]
\centering
\caption{Overall performance on the \textbf{STOCK} dataset. We report $mean\pm std$ over 5 runs according to each methodology. Best results are in \textbf{bold}. Second best results are in \underline{underline}.}
\vspace{0.2cm}
\label{tab:stock_results}
\small
\begin{tabular}{lccc|ccc}
\toprule
Method
& \multicolumn{3}{c|}{Original / Out-of-Sample} 
& \multicolumn{3}{c}{Original / In-Sample} \\
\cmidrule(lr){2-4} \cmidrule(lr){5-7}
& RMSE ($\downarrow$) & MAE ($\downarrow$) & MRE ($\downarrow$) & RMSE ($\downarrow$) & MAE ($\downarrow$) & MRE ($\downarrow$) \\
\midrule
Mean        & 2.079{\scriptsize$\pm$0.000} & 1.949{\scriptsize$\pm$0.000} & 128.903{\scriptsize$\pm$0.000} & 2.168{\scriptsize$\pm$0.000} & 2.039{\scriptsize$\pm$0.000} & 127.313{\scriptsize$\pm$0.000} \\
\midrule
\multicolumn{7}{l}{\textit{Discriminative models}} \\
TimesNet 
            & 1.415{\scriptsize$\pm$0.054}& 1.111{\scriptsize$\pm$0.073}& 73.528{\scriptsize$\pm$0.049}& 1.509{\scriptsize$\pm$0.057}& 1.221{\scriptsize$\pm$0.077}& 76.237{\scriptsize$\pm$0.048}\\
TimeMixer++ 
            & 1.490{\scriptsize$\pm$0.223} & 1.287{\scriptsize$\pm$0.239} & 85.153{\scriptsize$\pm$0.158}  & 1.569{\scriptsize$\pm$0.239} & 1.369{\scriptsize$\pm$0.260} & 85.456{\scriptsize$\pm$0.162} \\
BRITS       & 0.953{\scriptsize$\pm$0.016} & 0.627{\scriptsize$\pm$0.010} & 41.478{\scriptsize$\pm$0.006}  & 1.020{\scriptsize$\pm$0.016} & 0.701{\scriptsize$\pm$0.010} & 43.757{\scriptsize$\pm$0.006} \\
SAITS       & 0.743{\scriptsize$\pm$0.021} & 0.442{\scriptsize$\pm$0.022} & 29.115{\scriptsize$\pm$0.015}  & 0.801{\scriptsize$\pm$0.023} & 0.498{\scriptsize$\pm$0.025} & 31.071{\scriptsize$\pm$0.016} \\
\midrule
\multicolumn{7}{l}{\textit{Generative models}} \\
GP-VAE
            & 1.239{\scriptsize$\pm$0.118}& 0.902{\scriptsize$\pm$0.109}& 59.684{\scriptsize$\pm$7.220}& 1.333{\scriptsize$\pm$0.123}& 1.010{\scriptsize$\pm$0.118}& 63.046{\scriptsize$\pm$7.371}\\
not-MIWAE
            & 1.028{\scriptsize$\pm$0.043} & 0.681{\scriptsize$\pm$0.045} & 45.039{\scriptsize$\pm$0.296}  & 1.114{\scriptsize$\pm$0.043} & 0.759{\scriptsize$\pm$0.039} & 47.368{\scriptsize$\pm$0.243} \\
\midrule
\multicolumn{7}{l}{\textit{Diffusion-based models}} \\
CSDI        & 0.932{\scriptsize$\pm$0.000} & 0.641{\scriptsize$\pm$0.000} & 42.393{\scriptsize$\pm$0.004}  & 0.995{\scriptsize$\pm$0.000} & 0.710{\scriptsize$\pm$0.000} & 44.330{\scriptsize$\pm$0.001} \\
MTSCI       & 0.988{\scriptsize$\pm$0.002} & 0.736{\scriptsize$\pm$0.001} & 48.629{\scriptsize$\pm$0.009}  & 1.056{\scriptsize$\pm$0.001} & 0.809{\scriptsize$\pm$0.001} & 50.485{\scriptsize$\pm$0.041} \\
cDiffPuter  & \underline{0.734}{\scriptsize$\pm$0.000}& \underline{0.406}{\scriptsize$\pm$0.000}& \underline{26.878}{\scriptsize$\pm$0.100}& \underline{0.778}{\scriptsize$\pm$0.000}& \underline{0.450}{\scriptsize$\pm$0.000}& 28.064{\scriptsize$\pm$0.008} \\
\midrule
\textbf{PRDIM} 
            & \textbf{0.599}{\scriptsize$\pm$0.001} & \textbf{0.254}{\scriptsize$\pm$0.000} & \textbf{16.794}{\scriptsize$\pm$0.027} 
            & \textbf{0.633}{\scriptsize$\pm$0.000} & \textbf{0.275}{\scriptsize$\pm$0.000} & \textbf{17.150}{\scriptsize$\pm$0.008} \\
\bottomrule
\end{tabular}
\end{table}

\begin{table}[t]
\centering
\caption{Overall performance on the \textbf{PEMS-Bay} dataset. We report $mean\pm std$ over 5 runs according to each methodology. Best results are in \textbf{bold}. Second best results are in \underline{underline}.}
\vspace{0.2cm}
\label{tab:pems_results}
\small
\begin{tabular}{lccc|ccc}
\toprule
Method
& \multicolumn{3}{c|}{Original / Out-of-Sample} 
& \multicolumn{3}{c}{Original / In-Sample} \\
\cmidrule(lr){2-4} \cmidrule(lr){5-7}
& RMSE ($\downarrow$) & MAE ($\downarrow$) & MRE ($\downarrow$) & RMSE ($\downarrow$) & MAE ($\downarrow$) & MRE ($\downarrow$) \\
\midrule
Mean        & 0.901{\scriptsize$\pm$0.000} & 0.813{\scriptsize$\pm$0.000} & 119.064{\scriptsize$\pm$0.000} & 0.868{\scriptsize$\pm$0.000} & 0.789{\scriptsize$\pm$0.000} & 119.066{\scriptsize$\pm$0.000} \\
\midrule
\multicolumn{7}{l}{\textit{Discriminative models}} \\
TimesNet 
            & 0.481{\scriptsize$\pm$0.009} & 0.291{\scriptsize$\pm$0.007} & 42.579{\scriptsize$\pm$0.010}  & 0.392{\scriptsize$\pm$0.004} & 0.225{\scriptsize$\pm$0.001} & 33.970{\scriptsize$\pm$0.002} \\
TimeMixer++ 
            & 0.684{\scriptsize$\pm$0.013} & 0.579{\scriptsize$\pm$0.018} & 84.816{\scriptsize$\pm$0.026}  & 0.652{\scriptsize$\pm$0.015} & 0.557{\scriptsize$\pm$0.020} & 84.113{\scriptsize$\pm$0.030} \\
BRITS       & 0.503{\scriptsize$\pm$0.007} & 0.278{\scriptsize$\pm$0.006} & 40.758{\scriptsize$\pm$0.008}  & 0.342{\scriptsize$\pm$0.004} & 0.182{\scriptsize$\pm$0.003} & 27.490{\scriptsize$\pm$0.004} \\
SAITS       & 0.481{\scriptsize$\pm$0.011} & 0.302{\scriptsize$\pm$0.009} & 44.266{\scriptsize$\pm$0.013}  & 0.356{\scriptsize$\pm$0.004} & 0.212{\scriptsize$\pm$0.003} & 31.970{\scriptsize$\pm$0.005} \\
\midrule
\multicolumn{7}{l}{\textit{Generative models}} \\
GP-VAE
            & 0.537{\scriptsize$\pm$0.001}& 0.345{\scriptsize$\pm$0.001}& 50.561{\scriptsize$\pm$0.208}& 0.470{\scriptsize$\pm$0.003}& 0.292{\scriptsize$\pm$0.002}& 44.039{\scriptsize$\pm$0.367}\\
not-MIWAE
            & 0.623{\scriptsize$\pm$0.006}& 0.396{\scriptsize$\pm$0.005}& 57.510{\scriptsize$\pm$0.774}& 0.608{\scriptsize$\pm$0.004}& 0.352{\scriptsize$\pm$0.005}& 53.181{\scriptsize$\pm$0.746}\\
\midrule
\multicolumn{7}{l}{\textit{Diffusion-based models}} \\
CSDI        & \underline{0.338}{\scriptsize$\pm$0.002}& \underline{0.177}{\scriptsize$\pm$0.000}& \underline{25.912}{\scriptsize$\pm$0.017}& \textbf{0.302}{\scriptsize$\pm$0.000} & \underline{0.158}{\scriptsize$\pm$0.000}& \underline{23.910}{\scriptsize$\pm$0.005}\\
MTSCI       & 0.349{\scriptsize$\pm$0.000}& 0.193{\scriptsize$\pm$0.000} & 28.289{\scriptsize$\pm$0.011}  & 0.322{\scriptsize$\pm$0.000} & 0.179{\scriptsize$\pm$0.000} & 27.017{\scriptsize$\pm$0.003} \\
cDiffPuter  & 0.349{\scriptsize$\pm$0.007} & 0.182{\scriptsize$\pm$0.000} & 26.714{\scriptsize$\pm$0.011}  & 0.330{\scriptsize$\pm$0.000} & 0.168{\scriptsize$\pm$0.000} & 25.377{\scriptsize$\pm$0.005} \\
\midrule
\textbf{PRDIM} 
            & \textbf{0.334}{\scriptsize$\pm$0.002} & \textbf{0.170}{\scriptsize$\pm$0.000} & \textbf{24.966}{\scriptsize$\pm$0.015} 
            & \underline{0.306}{\scriptsize$\pm$0.000}& \textbf{0.154}{\scriptsize$\pm$0.000} & \textbf{23.304}{\scriptsize$\pm$0.006} \\
\bottomrule
\end{tabular}
\end{table}

\begin{figure}[t]
    \centering
    \setlength{\tabcolsep}{2pt}
    \renewcommand{\arraystretch}{1.0}
    \begin{tabular}{ccccc}
        & & FID: 20.596& FID: 10.762 & FID: 10.376\\
        \includegraphics[width=0.18\textwidth]{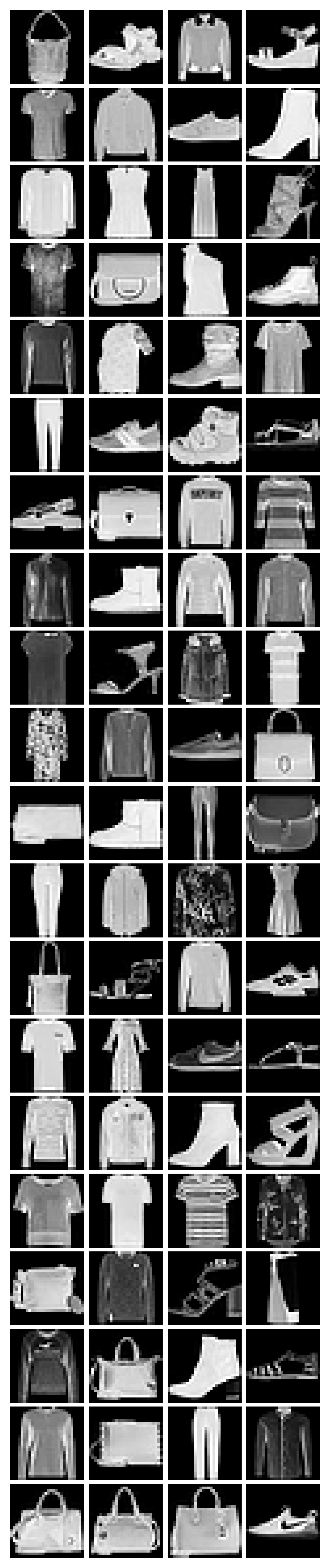} &
        \includegraphics[width=0.18\textwidth]{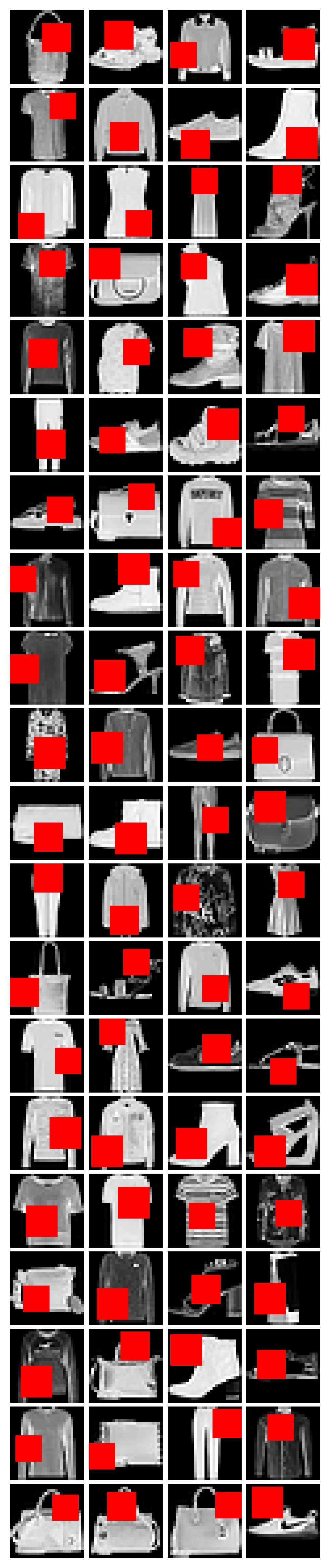} &
        \includegraphics[width=0.18\textwidth]{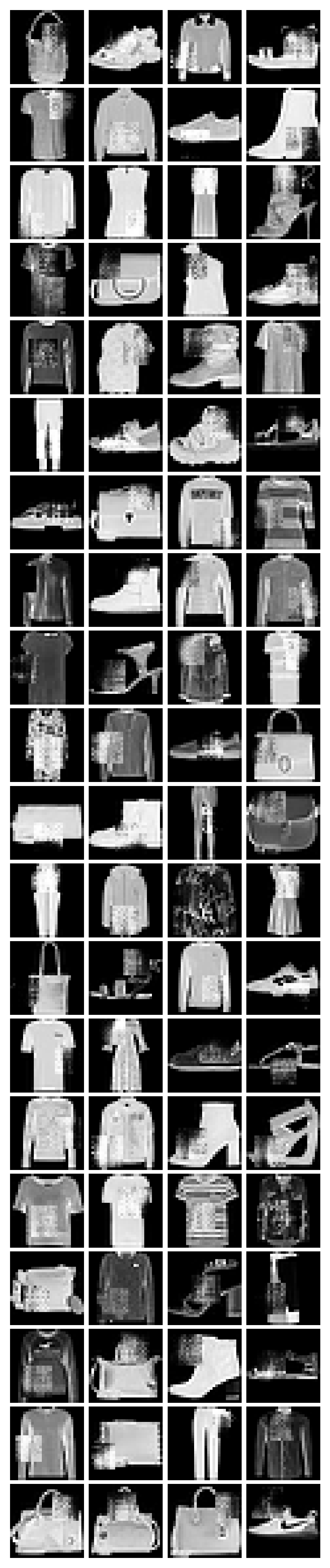} &
        \includegraphics[width=0.18\textwidth]{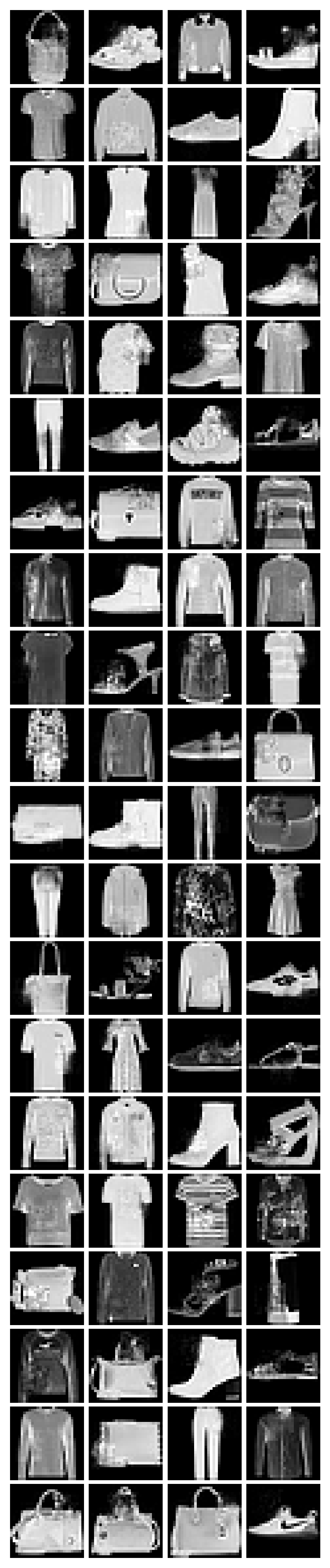} &
        \includegraphics[width=0.18\textwidth]{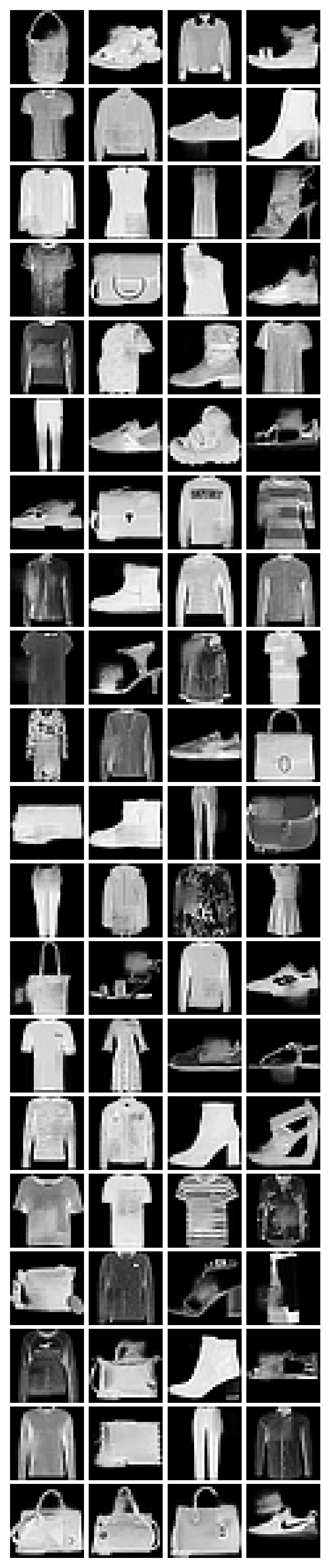} \\
        (a) Ground Truth & (b) Observed Input & (c) misGAN & (d) MCFlow & (e) PRDIM \\
    \end{tabular}
    \caption{Comparison of out-of-sample imputation results under block missing pattern.}
    \label{fig:app_fmnist_mar}
\end{figure}

\begin{figure}[t]
    \centering
    \setlength{\tabcolsep}{2pt}
    \renewcommand{\arraystretch}{1.0}
    \begin{tabular}{ccccc}
        & & FID: 198.328& FID: 32.893& FID: 15.893\\
        \includegraphics[width=0.18\textwidth]{figures/save_gt_subplot.png} &
        \includegraphics[width=0.18\textwidth]{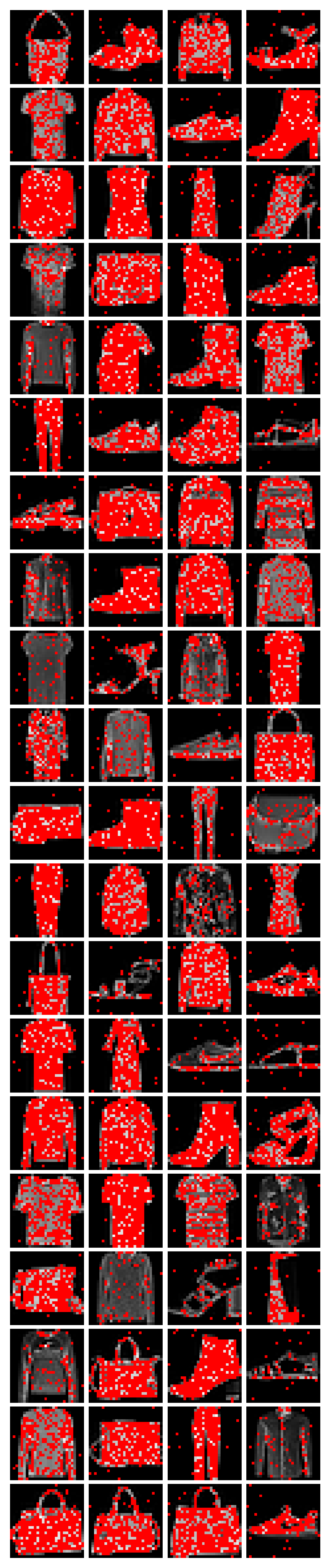} &
        \includegraphics[width=0.18\textwidth]{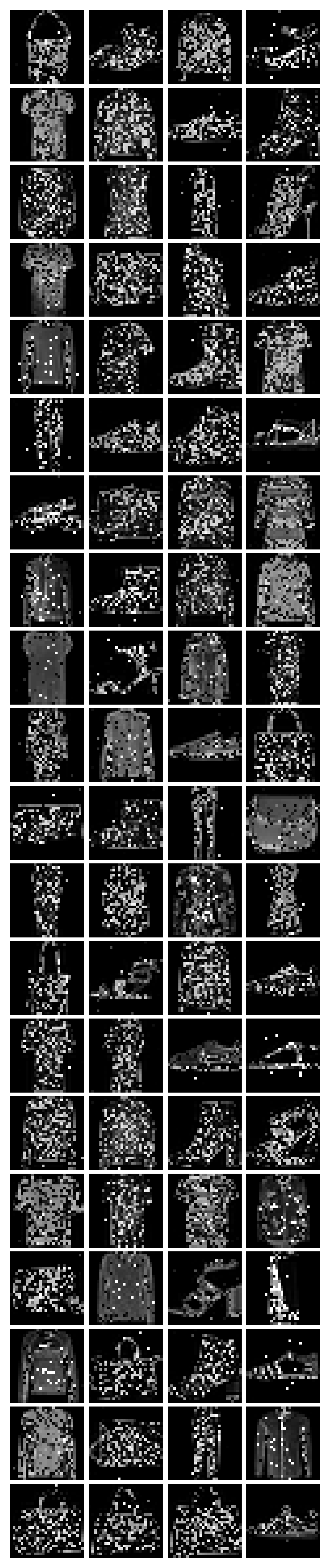} &
        \includegraphics[width=0.18\textwidth]{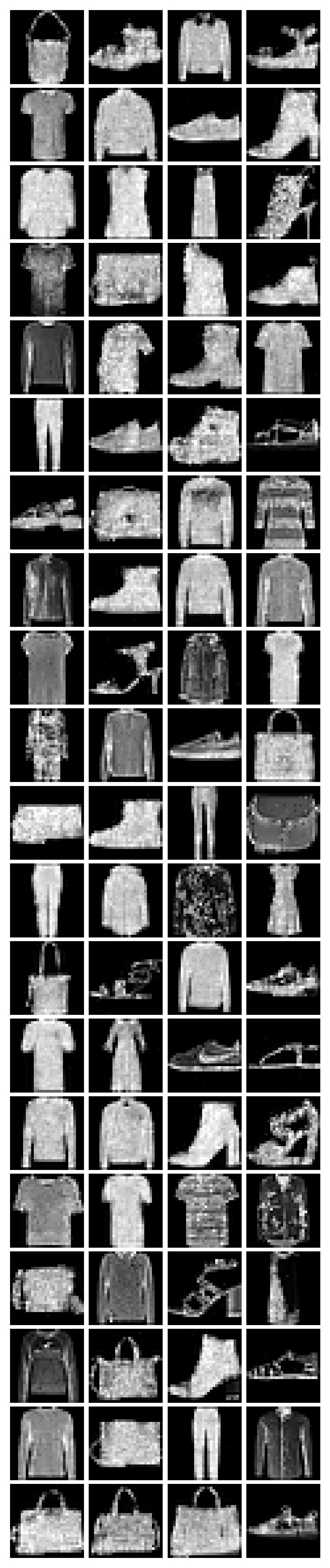} &
        \includegraphics[width=0.18\textwidth]{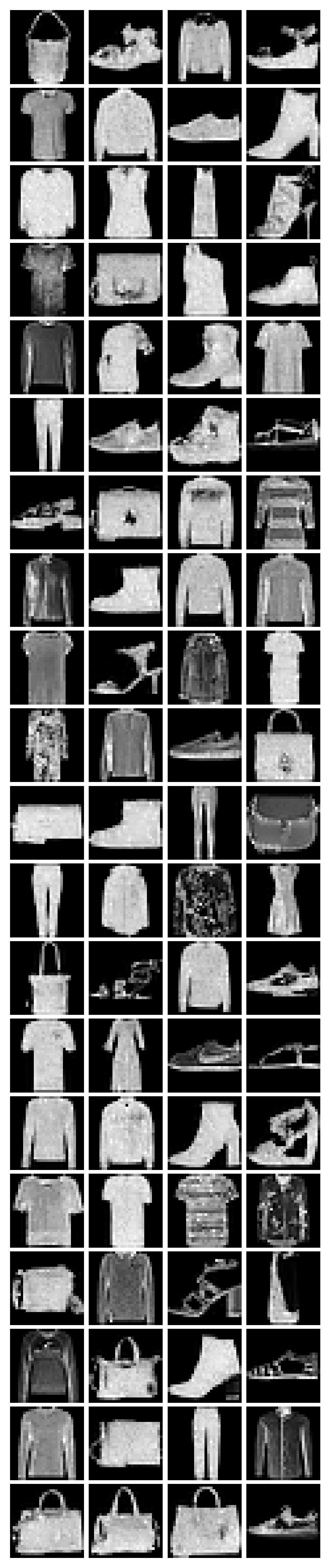} \\
        (a) Ground Truth & (b) Observed Input & (c) misGAN & (d) MCFlow & (e) PRDIM \\
    \end{tabular}
    \caption{Comparison of out-of-sample imputation results under MNAR missing pattern.}
    \label{fig:app_fmnist_mnar}
\end{figure}

\begin{figure*}[t]
\vspace{-0.3cm}
\centering
\begin{tabular}{cc}
    \subfloat{%
        \includegraphics[width=0.37\textwidth]{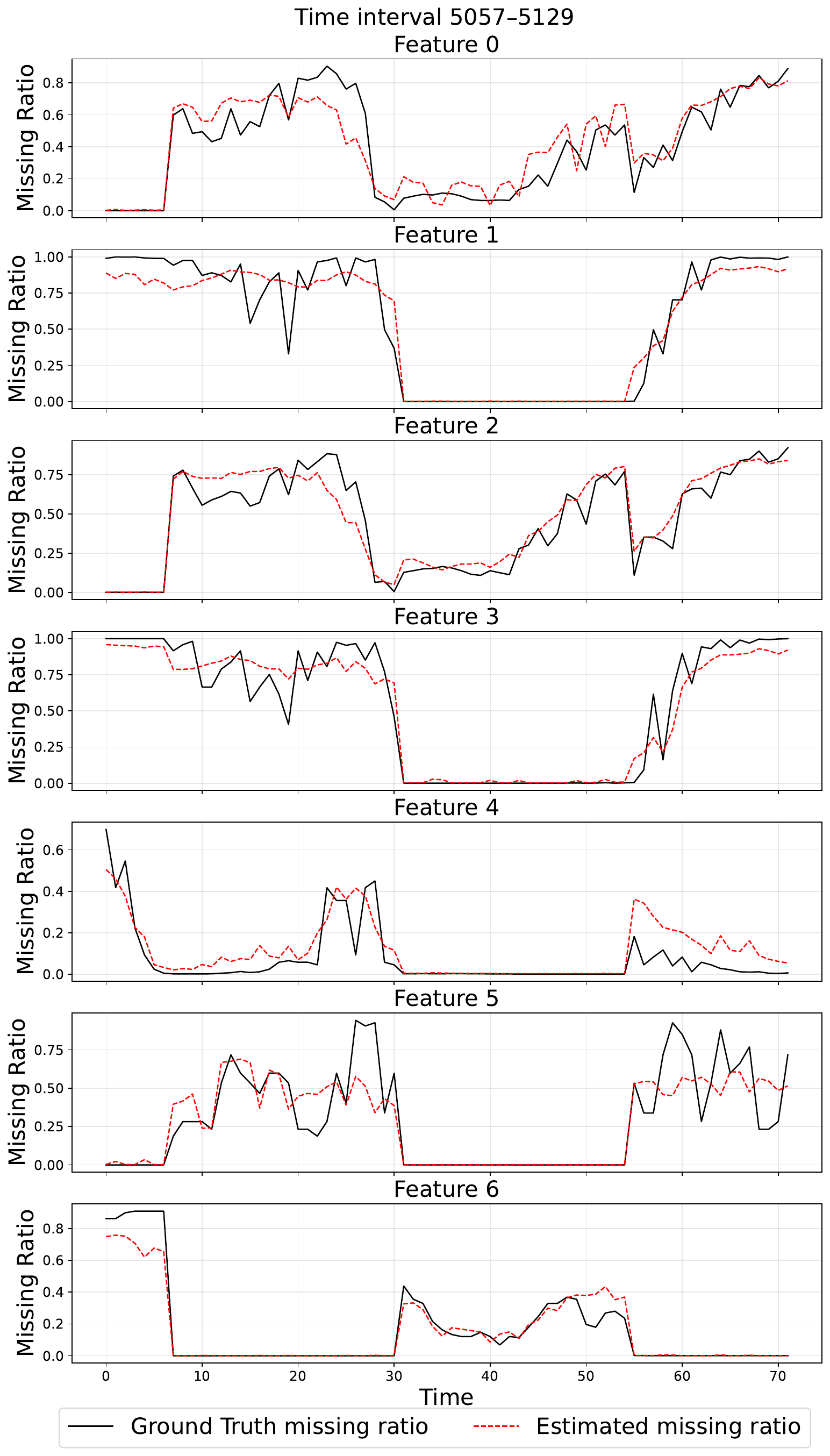}} &
    \subfloat{%
        \includegraphics[width=0.37\textwidth]{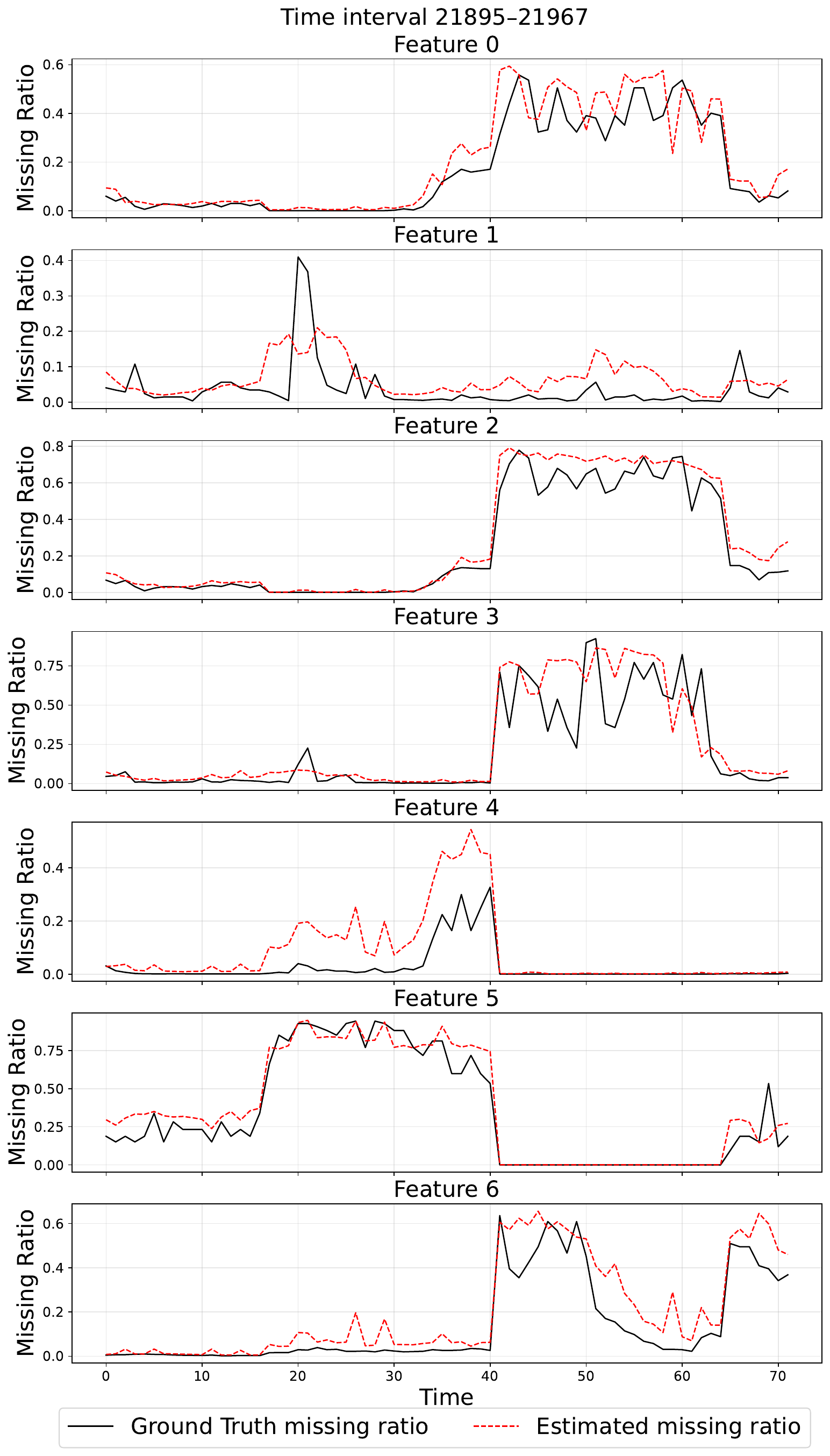}} \\
    \subfloat{%
        \includegraphics[width=0.37\textwidth]{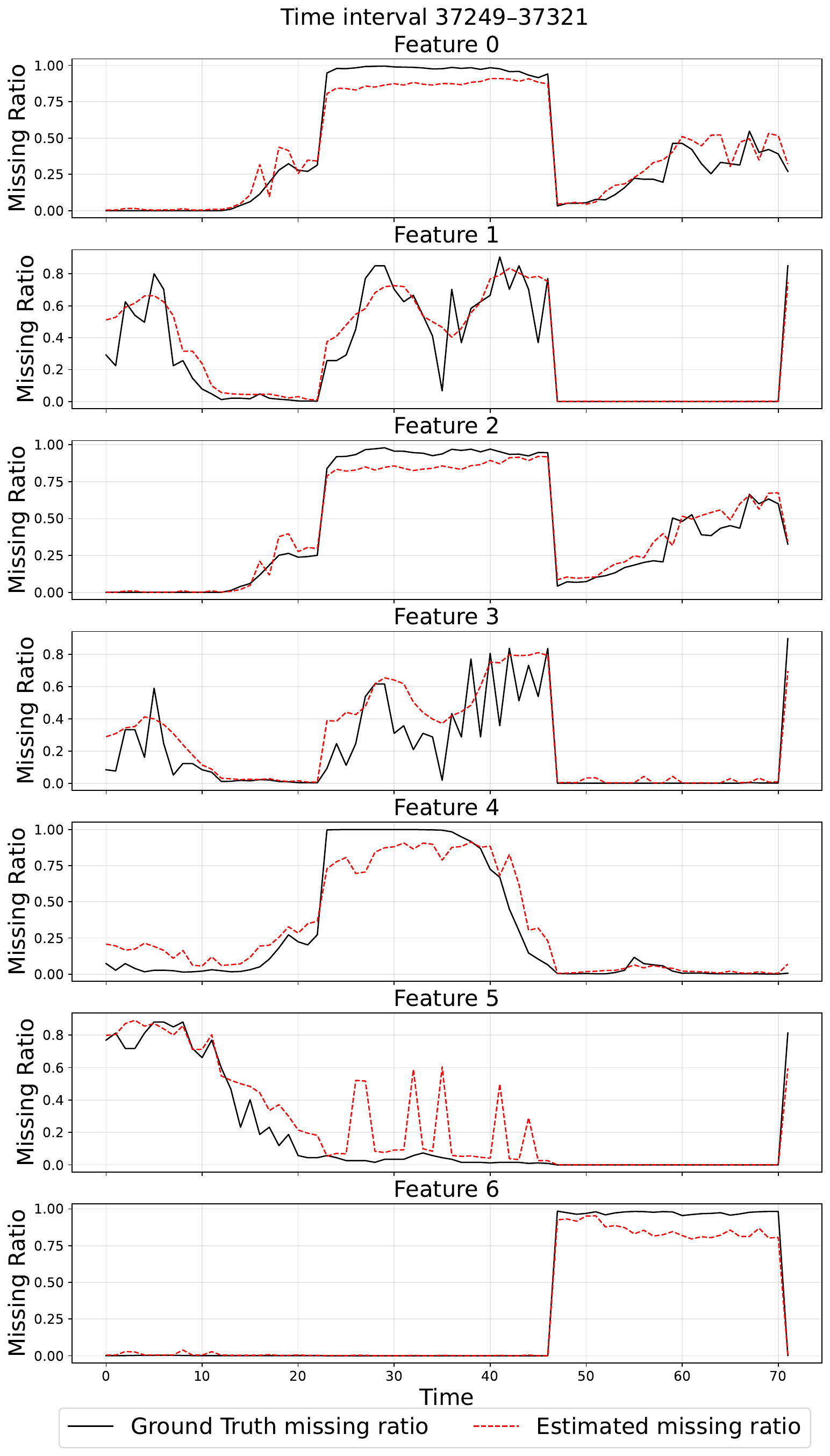}} &
    \subfloat{%
        \includegraphics[width=0.37\textwidth]{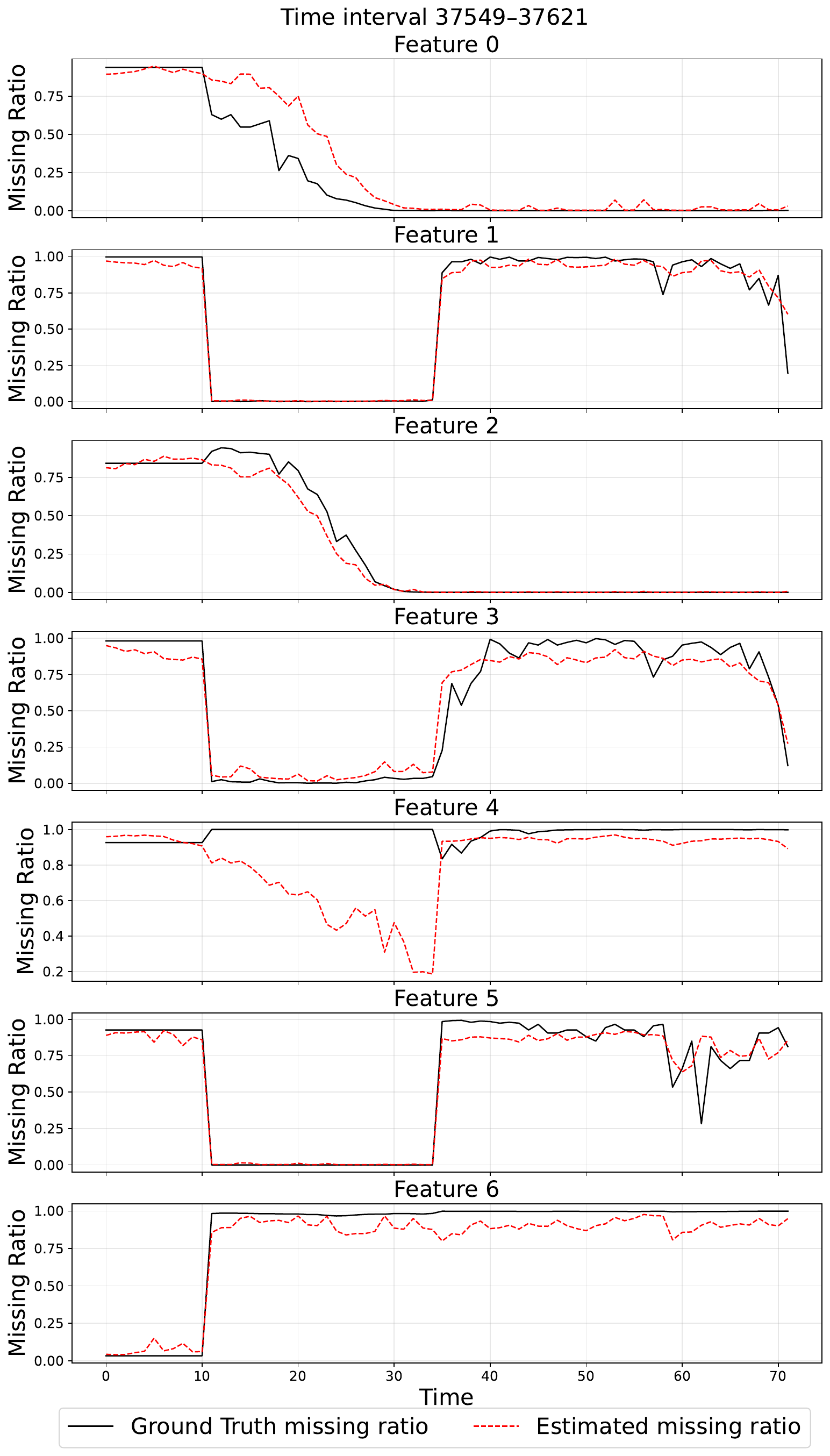}} \\
\end{tabular}
\vspace{-0.2cm}
\caption{Randomly sampled 4 ETT segments with time length 72: Ground-truth missing ratio (black) versus Pattern Recognizer-estimated missing ratio $D_\phi(\hat{X}_0)$ across 7 features.}
\label{tab:pr_case_study_ett}
\end{figure*}

\begin{figure*}[t]
\centering
\begin{tabular}{cc}
    \subfloat{%
        \includegraphics[width=0.4\textwidth]{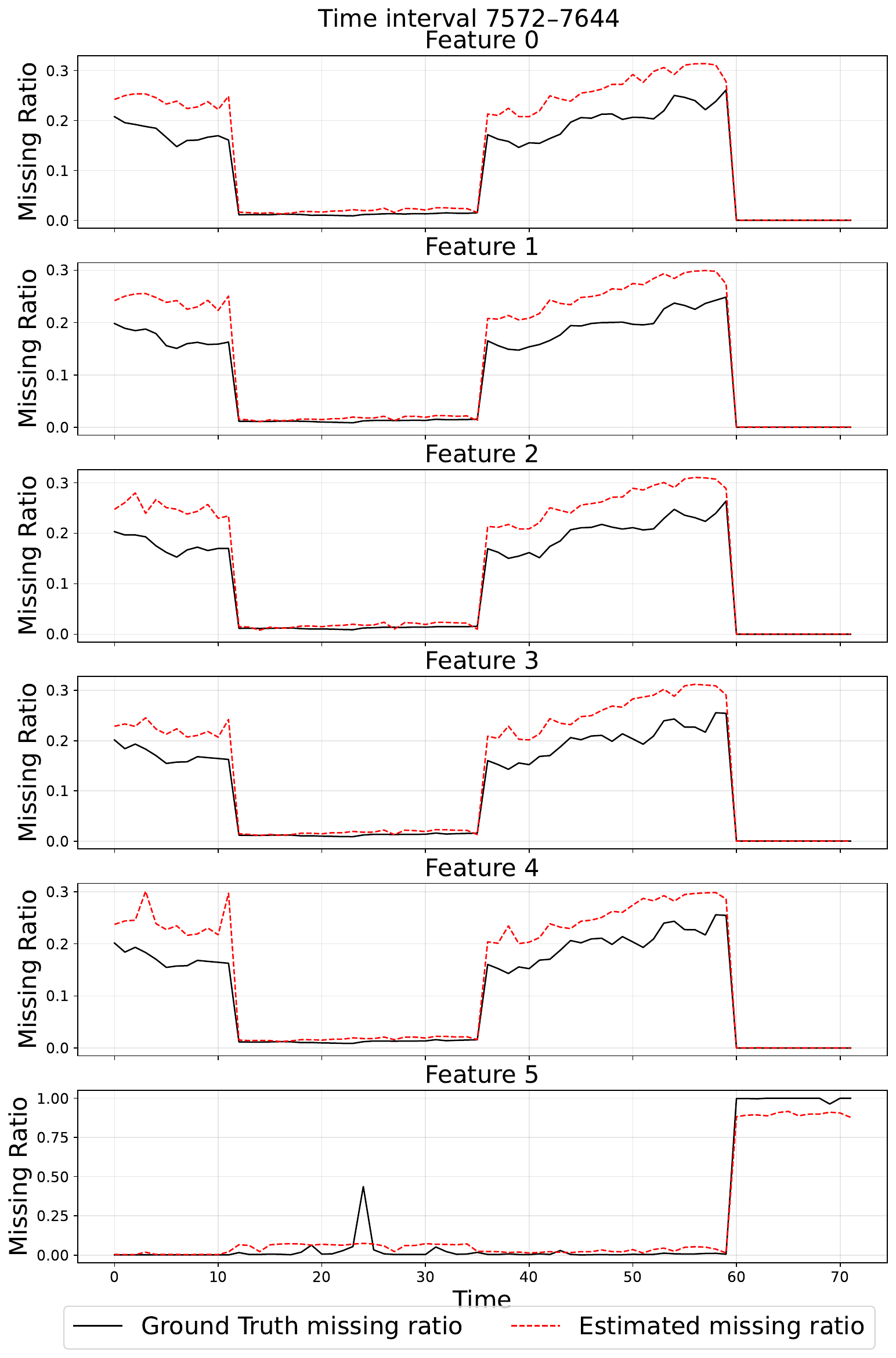}} &
    \subfloat{%
        \includegraphics[width=0.4\textwidth]{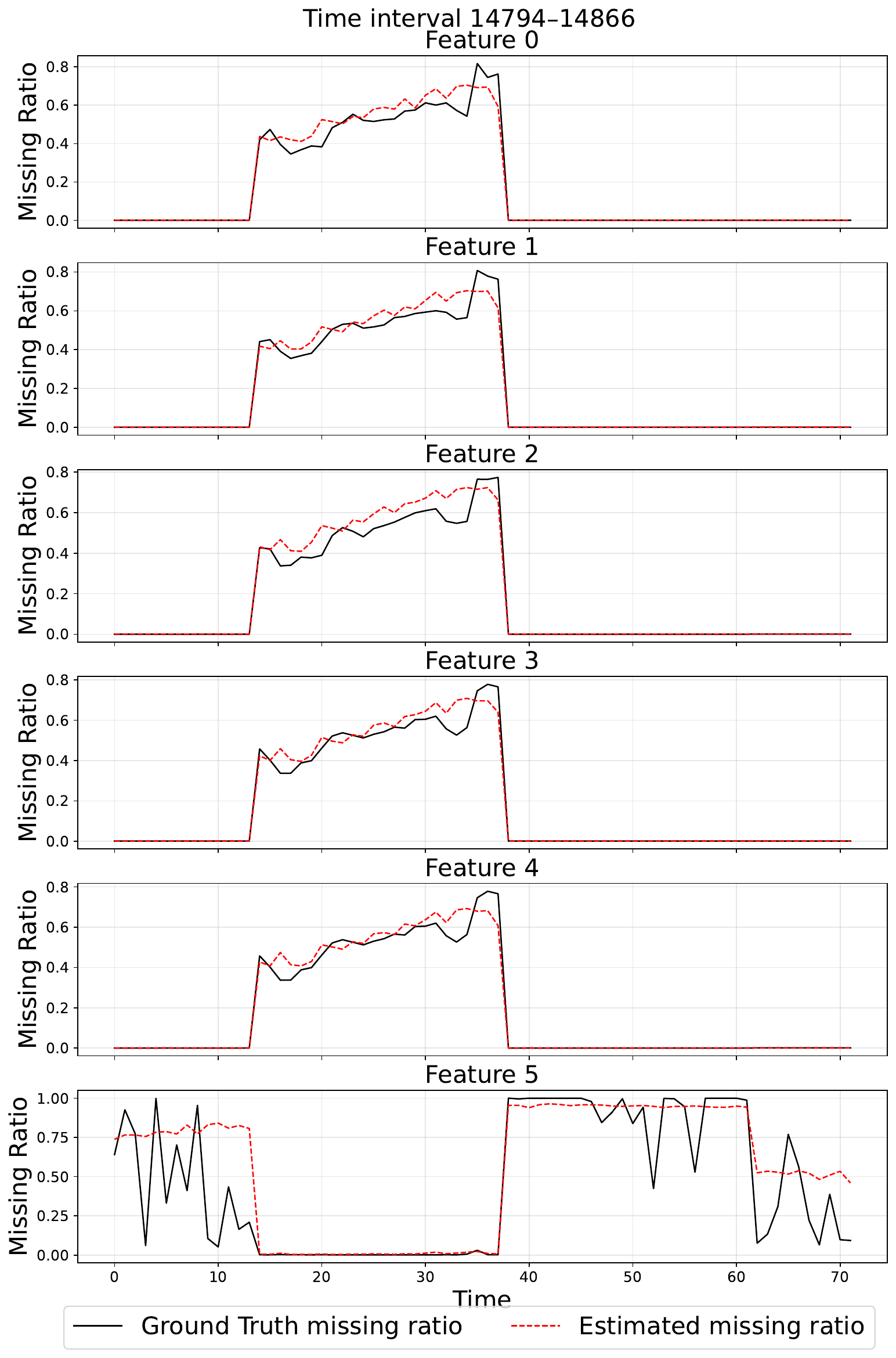}} \\
    \subfloat{%
        \includegraphics[width=0.4\textwidth]{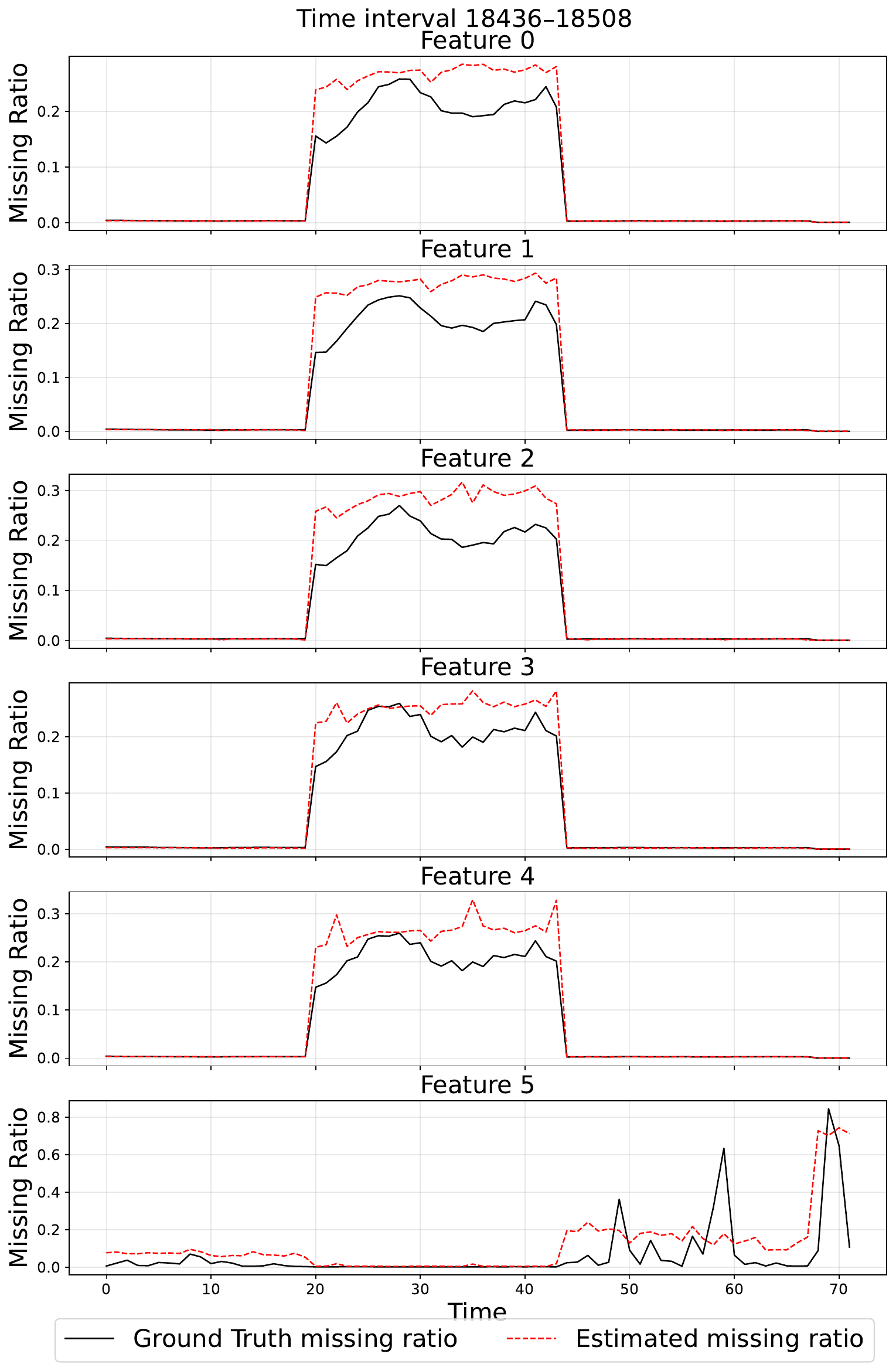}} &
    \subfloat{%
        \includegraphics[width=0.4\textwidth]{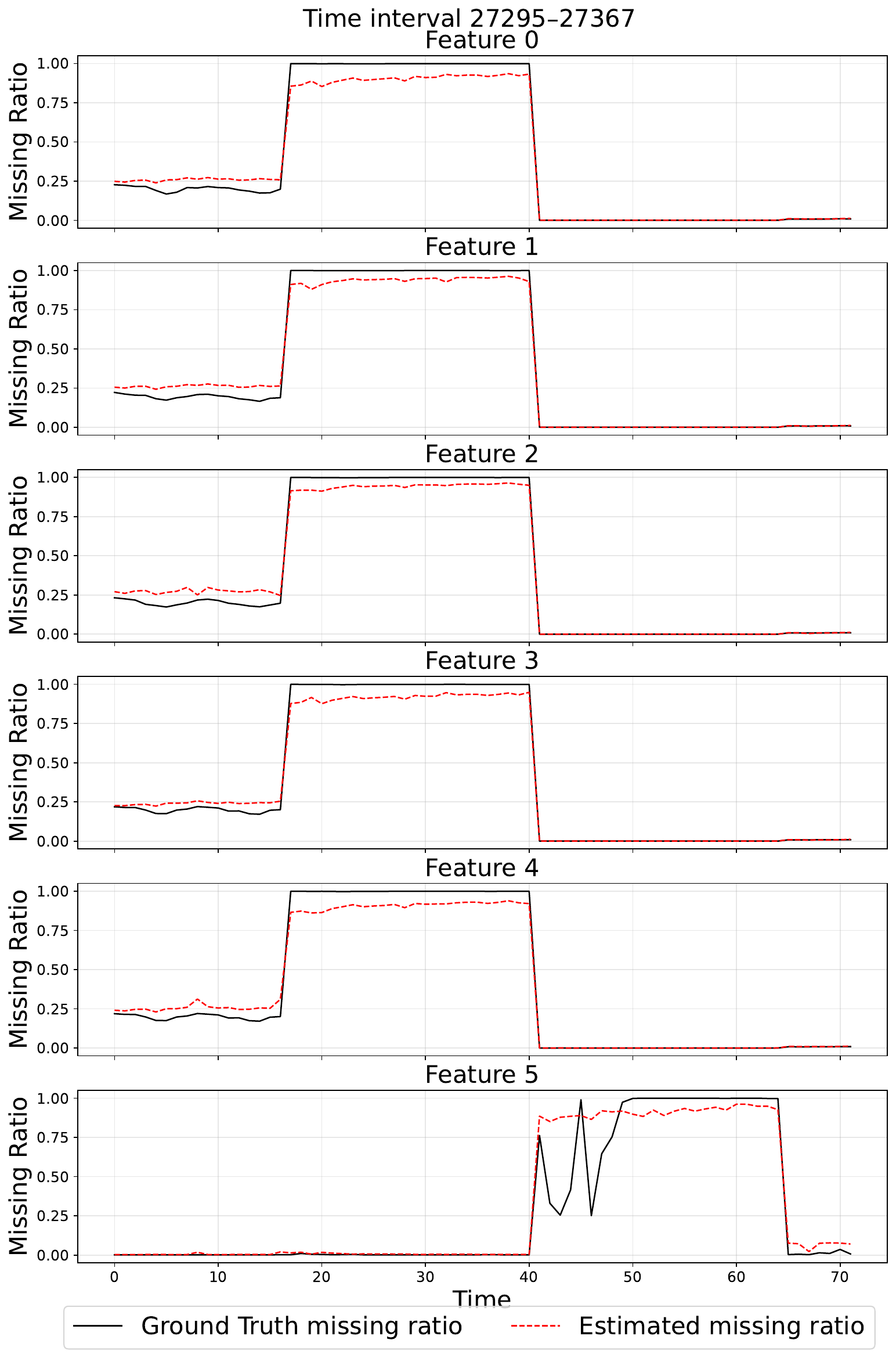}} \\
\end{tabular}
\caption{Randomly sampled 4 STOCK segments with time length 72: Ground-truth missing ratio (black) versus Pattern Recognizer-estimated missing ratio $D_\phi(\hat{X}_0)$ across 6 features.}
\label{tab:pr_case_study_stock}
\end{figure*}

\begin{figure*}[t]
\centering
\begin{tabular}{cc}
    \subfloat{%
        \includegraphics[width=0.4\textwidth]{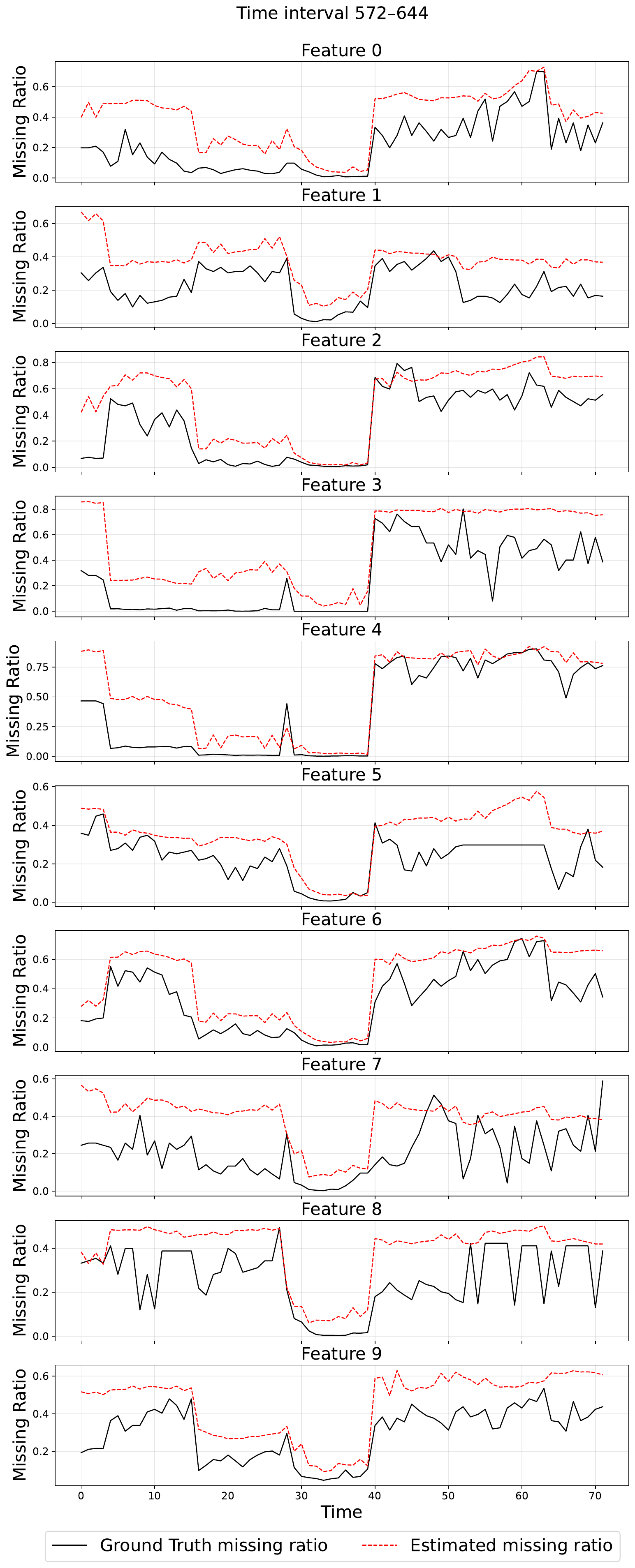}} &
    \subfloat{%
        \includegraphics[width=0.4\textwidth]{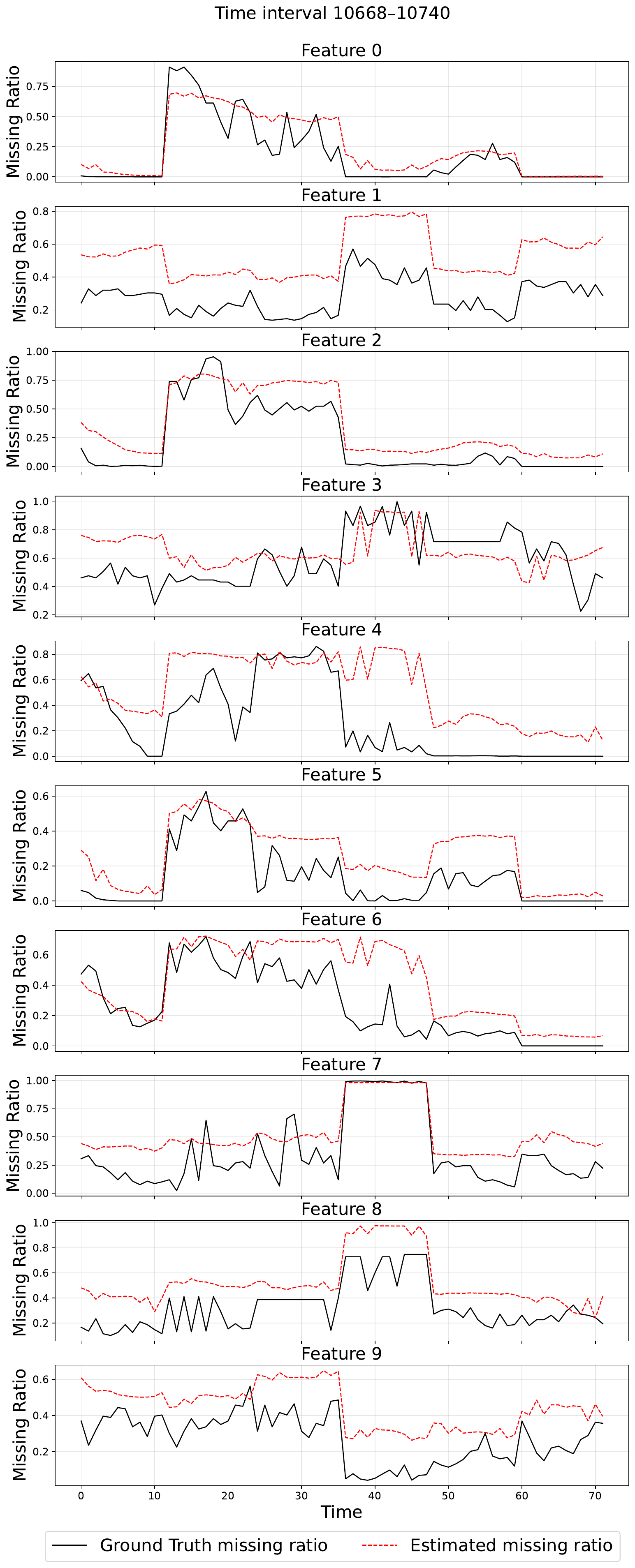}} \\
\end{tabular}
\caption{Randomly sampled 2 PEMS-Bay segments with time length 72: Ground-truth missing ratio (black) versus Pattern Recognizer-estimated missing ratio $D_\phi(\hat{X}_0)$ across 10 of 325 features.}
\label{tab:pr_case_study_pems1}
\end{figure*}

\begin{figure*}[t]
\centering
\begin{tabular}{cc}
    \subfloat{%
        \includegraphics[width=0.4\textwidth]{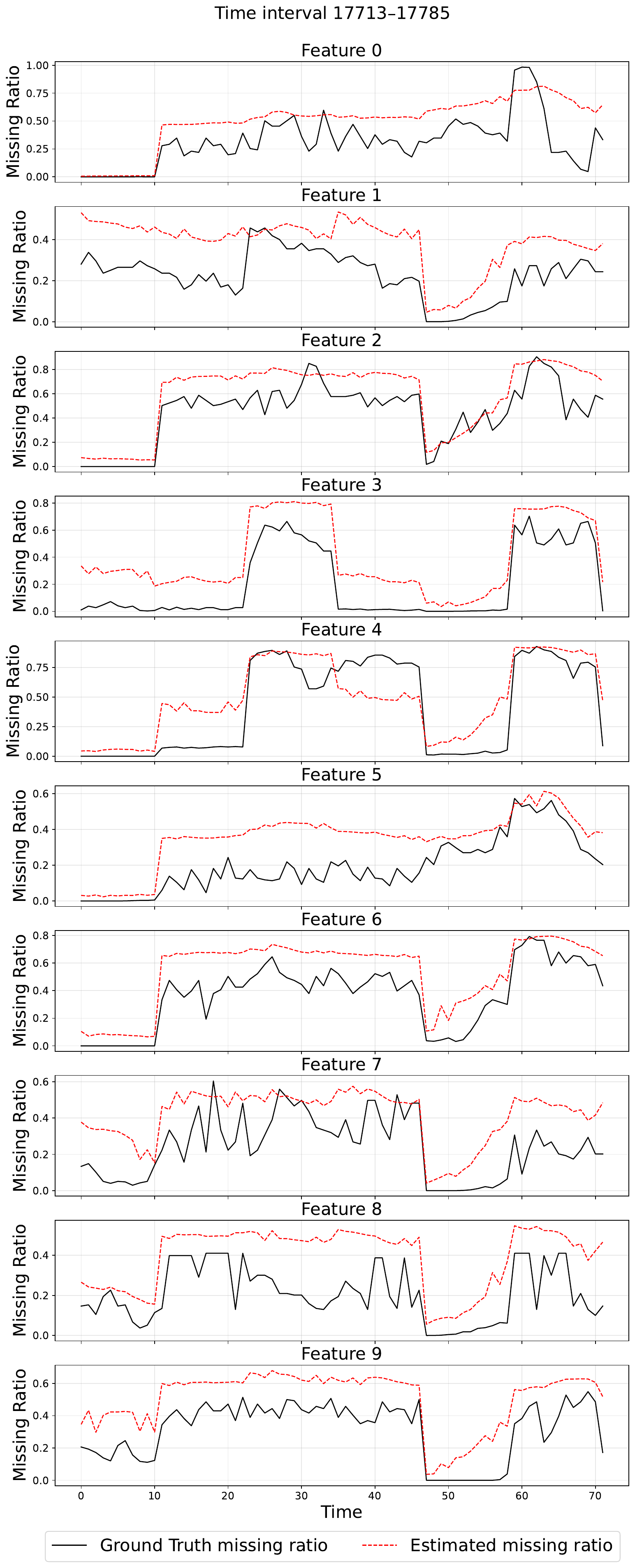}} &
    \subfloat{%
        \includegraphics[width=0.4\textwidth]{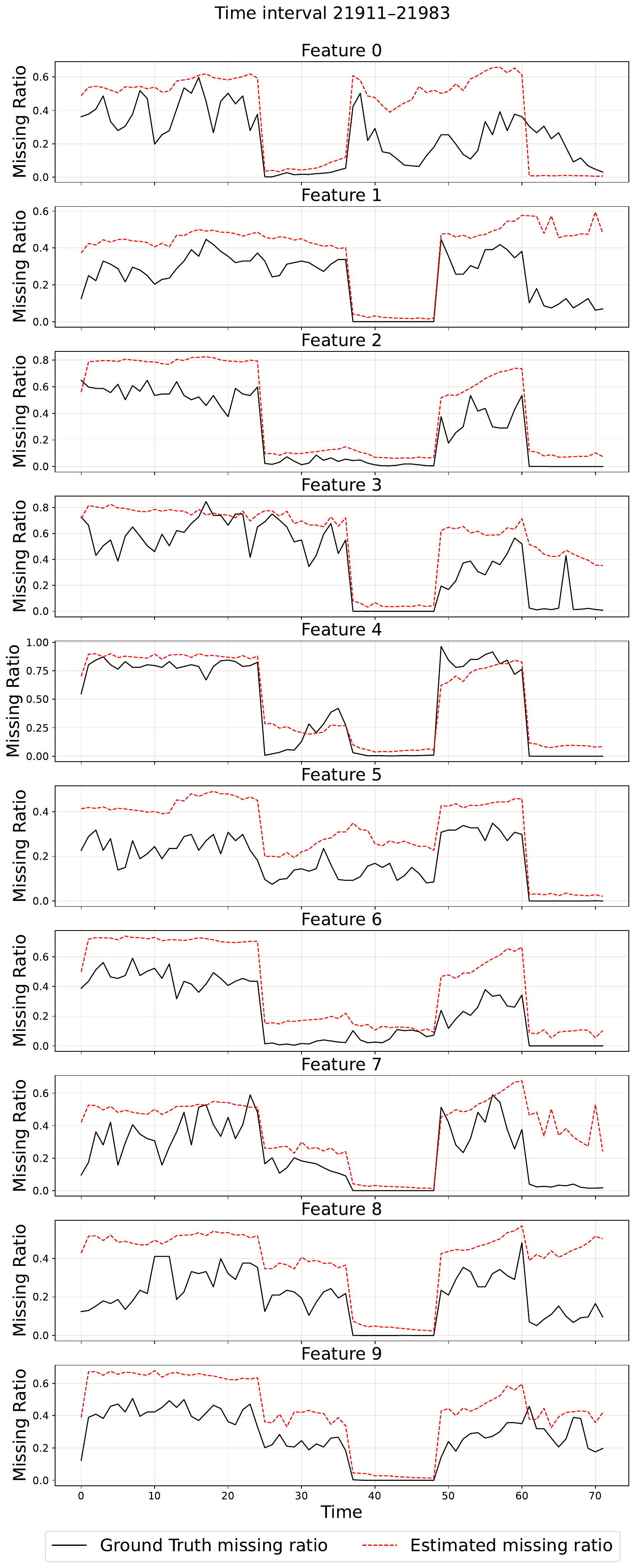}} \\
\end{tabular}
\caption{Additional 2 random samples of 2 PEMS-Bay segments with time length 72: Ground-truth missing ratio (black) versus Pattern Recognizer-estimated missing ratio $D_\phi(\hat{X}_0)$ across 10 of 325 features.}
\label{tab:pr_case_study_pems2}
\end{figure*}

\begin{figure}[t]
    \centering
    \includegraphics[width=1\linewidth]{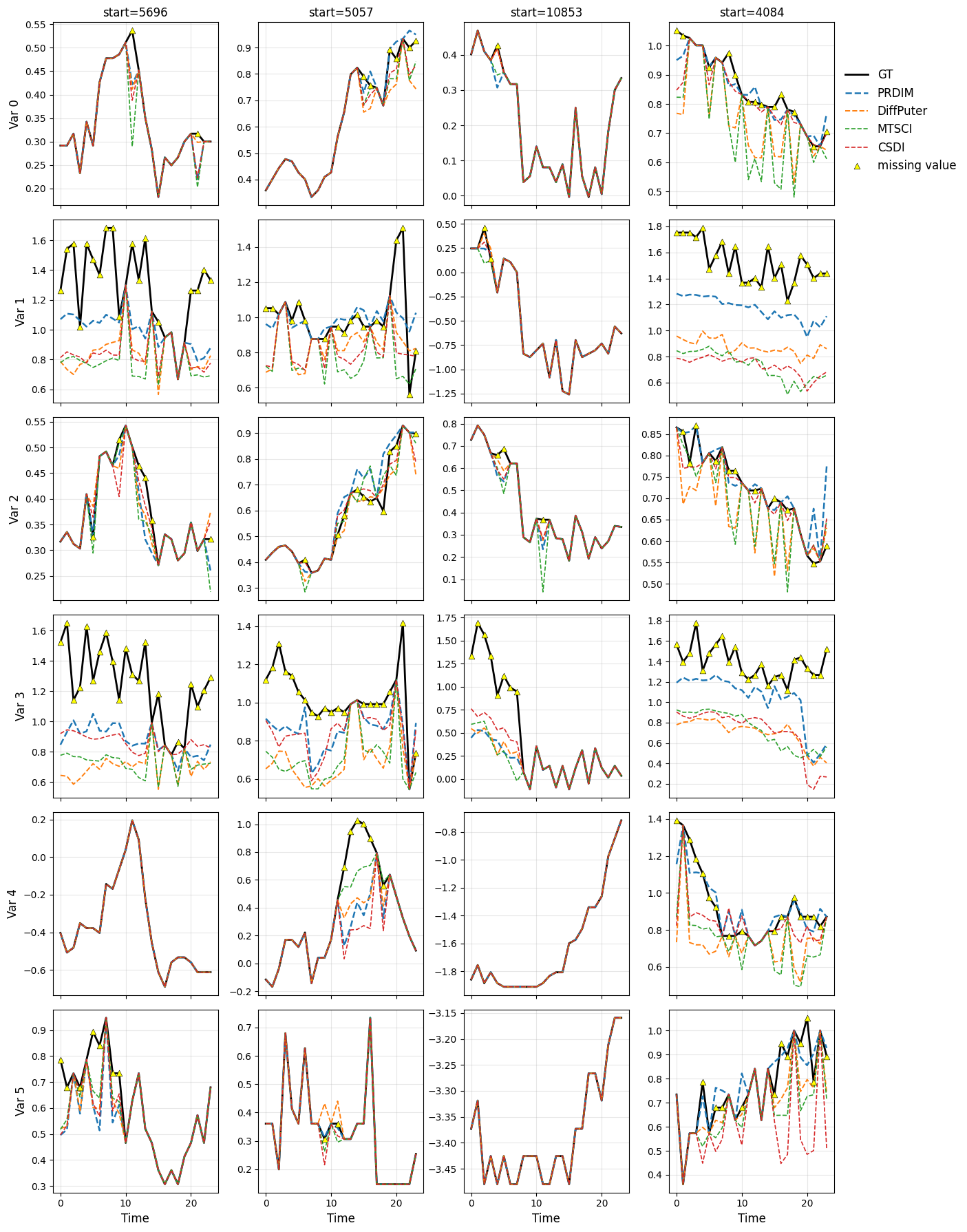}
    \caption{Qualitative results of PRDIM compared to other diffusion imputation models. 4 randomly selected imputed out-of-samples from the ETT dataset are visualized. Each panel labeled start = n corresponds to the time interval (n,n+24).}
    \label{fig:qual_ett}
\end{figure}

\begin{figure}[t]
    \centering
    \includegraphics[width=1\linewidth]{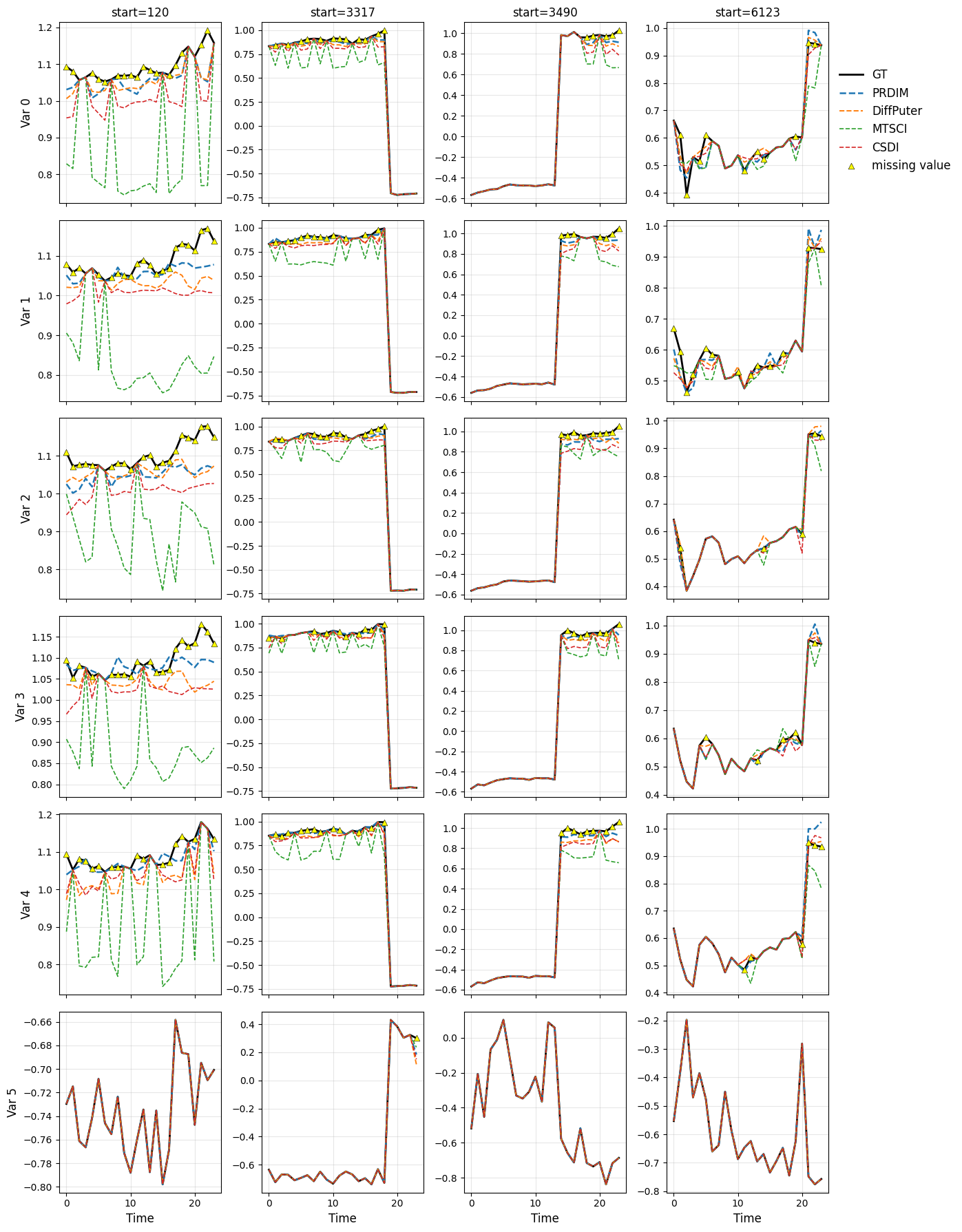}
    \caption{Qualitative results of PRDIM compared to other diffusion imputation models. 4 randomly selected imputed out-of-samples from the STOCK dataset are visualized. Each panel labeled start = n corresponds to the time interval (n,n+24).}
    \label{fig:qual_stock}
\end{figure}

\begin{figure}[t]
    \centering
    \includegraphics[width=1\linewidth]{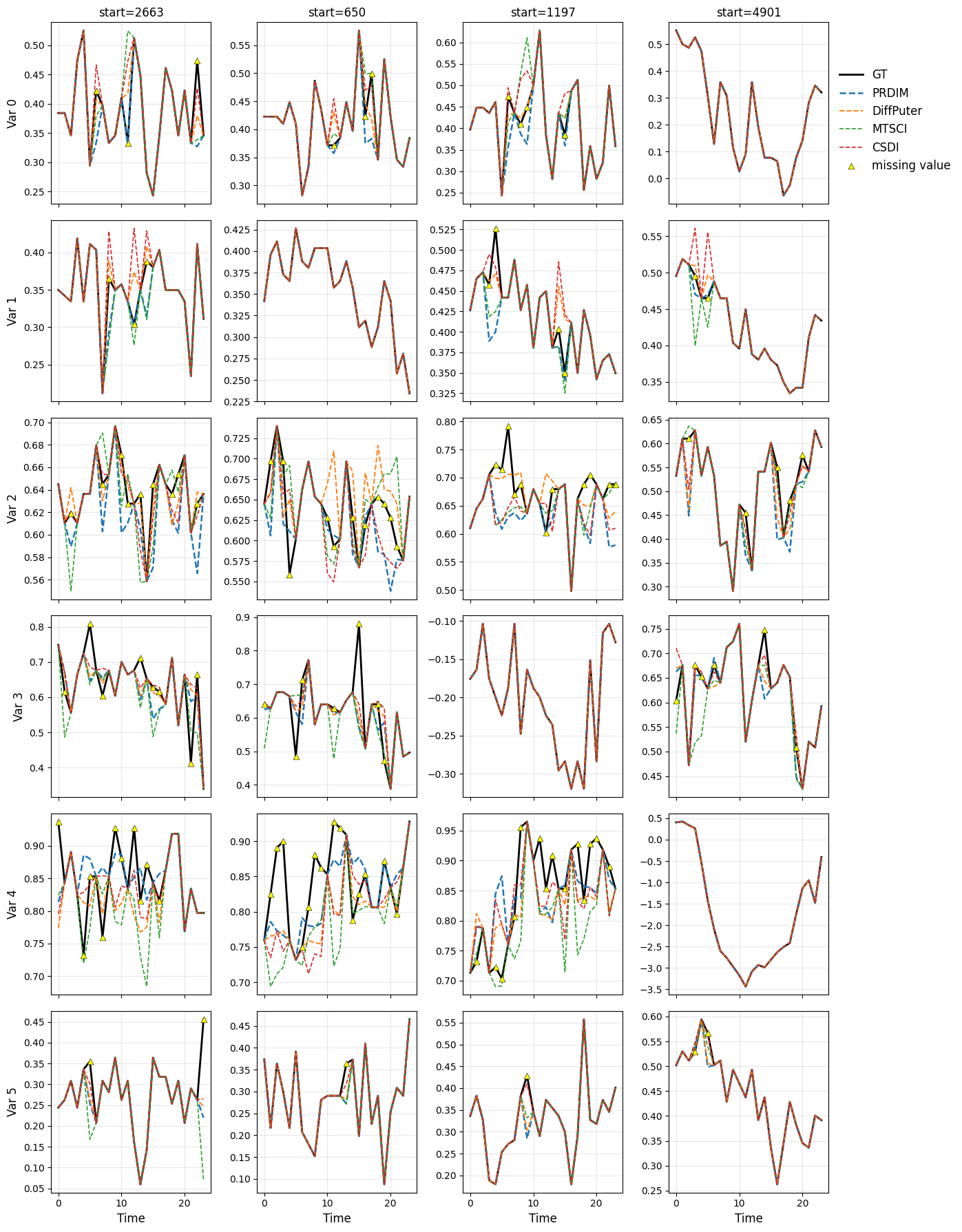}
    \caption{Qualitative results of PRDIM compared to other diffusion imputation models. 4 randomly selected imputed out-of-samples from the PEMS-Bay dataset are visualized. Each panel labeled start = n corresponds to the time interval (n,n+24).}
    \label{fig:qual_pems}
\end{figure}

\begin{figure}[t]
    \centering
    \includegraphics[width=1\linewidth]{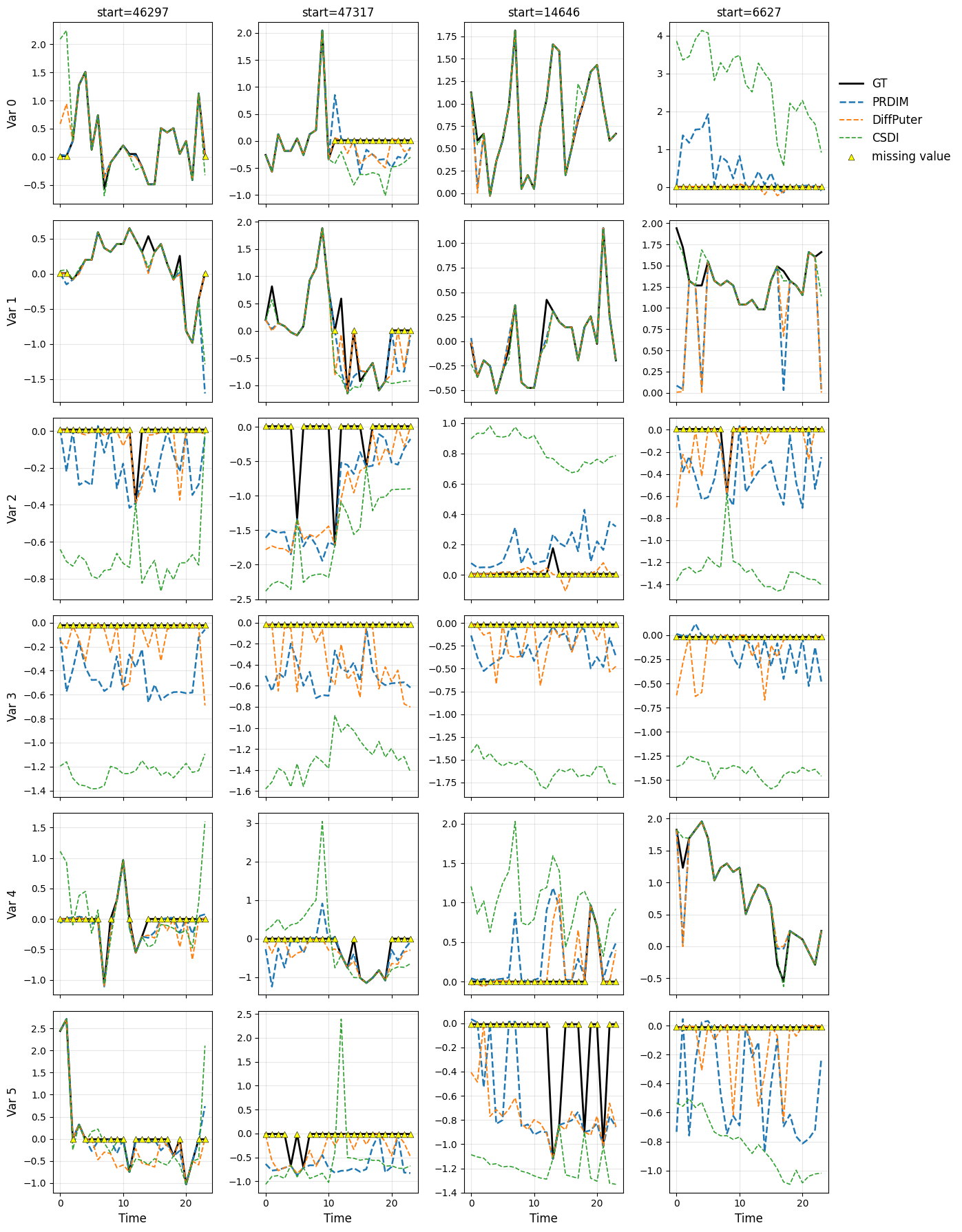}
    \caption{Qualitative results of PRDIM compared to other diffusion imputation models. 4 randomly selected imputed out-of-samples from the PhysioNet dataset are visualized. Each panel labeled start = n corresponds to the time interval (n,n+24).}
    \label{fig:qual_physio}
\end{figure}


\end{document}